\theoremstyle{plain}
\newcommand{\cmark}{\ding{51}}%
\newcommand{\checkbox}{\rlap{$\square$}{\raisebox{2pt}{\large\hspace{1pt}\cmark}}%
\hspace{2.5pt}}
\newenvironment{myitemize}{\begin{list}{$\bullet$}
		{\setlength{\topsep}{1mm}
			\setlength{\itemsep}{0.25mm}
			\setlength{\parsep}{0.25mm}
			\setlength{\itemindent}{0mm}
			\setlength{\partopsep}{0mm}
			\setlength{\labelwidth}{15mm}
			\setlength{\leftmargin}{4mm}}}{\end{list}}
\newcommand{\mc}{\mathcal}
\newcommand{\E}{\mb{E}}
\newcommand{\m}[1]{{\bm{#1}}}
\newcommand{\wh}[1]{{\widehat{#1}}}
\renewcommand{\mc}[1]{\ensuremath{\mathcal{#1}}} 
\newcommand{\g}[1]{\mbox{\boldmath $#1$}}
\newcommand{\mb}[1]{{\mathbb{#1}}}
\newcommand{\minmax}{minimax}
\newcommand{\Minmax}{Minimax}
\DeclarePairedDelimiterX{\norm}[1]{\lVert}{\rVert}{#1}
\definecolor{darkred}{RGB}{150,0,0}
\definecolor{darkgreen}{RGB}{0,150,0}
\definecolor{darkblue}{RGB}{0,0,200}
\let\phi\varphi
\newtheorem{theorem}{Theorem}[section] 
\newtheorem{lemma}{Lemma}[section] 
\newtheorem{corollary}{Corollary}[section] 
\newtheorem{remark}{Remark}[section]
\newtheorem{example}{Example} 
\newtheorem{assumption}{Assumption}
\newcommand{\order}[1]{{\cal{O}}(#1)}
\newcommand{\fedblo}{\textsc{FedNest}\xspace}
\newcommand{\lfedblo}{\textsc{LFedNest}\xspace}
\newcommand{\Blo}{Bilevel}
\newcommand{\fedavg}{\textsc{FedAvg}\xspace}
\newcommand{\scaffold}{\textsc{SCAFFOLD}\xspace}
\newcommand{\fedinn}{\textsc{FedInn}\xspace}
\newcommand{\fedout}{\textsc{FedOut}\xspace}
\newcommand{\lfedinn}{\textsc{LFedInn}\xspace}
\newcommand{\lfedout}{\textsc{LFedOut}\xspace}
\newcommand{\fednest}{\textsc{FedNest}\xspace}
\newcommand{\fedhess}{\textsc{FedIHGP}\xspace}
\newcommand{\vct}[1]{\bm{#1}}
\newcommand{\x}{\vct{x}}
\newcommand{\y}{\vct{y}}
\newcommand{\funcin}{\mathbb{G}}
\newcommand{\funco}{\mathbb{F}}
\icmltitlerunning{\fedblo: Federated Bilevel, Minimax, and Compositional Optimization}
\begin{document}

\twocolumn[
\icmltitle{\fedblo: Federated~\Blo,~\Minmax, and Compositional~Optimization}
\icmlsetsymbol{equal}{*}
\begin{icmlauthorlist}
\icmlauthor{Davoud Ataee Tarzanagh}{umich}
\icmlauthor{Mingchen Li}{ucr}
\icmlauthor{Christos Thrampoulidis}{ubc}
\icmlauthor{Samet Oymak}{ucr}
\end{icmlauthorlist}
\icmlaffiliation{umich}{Email: \texttt{tarzanaq@umich.edu},
University of Michigan}
\icmlaffiliation{ucr}{Emails: \texttt{\{mli176@,oymak@ece.\}ucr.edu},
University of California, Riverside}
\icmlaffiliation{ubc}{Email: \texttt{cthrampo@ece.ubc.ca},
University of British Columbia}
\icmlcorrespondingauthor{Davoud Ataee Tarzanagh}{\texttt{tarzanaq@umich.edu}}
\icmlkeywords{Machine Learning, ICML}
\vskip 0.3in
]
\printAffiliationsAndNotice{}
\newcommand{\magenta}[1]{\textcolor{magenta}{#1}}
\newcommand{\err}[1]{\textcolor{red}{?~#1~?}}
\newcommand{\ct}[1]{\textcolor{magenta}{C: #1}}

\begin{abstract}
Standard federated optimization methods successfully apply to stochastic problems with \textit{single-level} structure. However, many contemporary ML problems -- including adversarial robustness, hyperparameter tuning, actor-critic -- fall under nested bilevel programming that subsumes minimax and compositional optimization. In this work, we propose \fedblo: A federated alternating stochastic gradient method to address general nested problems. We establish provable convergence rates for \fedblo in the presence of heterogeneous data and introduce variations for bilevel, minimax, and compositional optimization. \fedblo introduces multiple innovations including federated hypergradient computation and variance reduction to address inner-level heterogeneity. We complement our theory with experiments on hyperparameter \& hyper-representation learning and minimax optimization that demonstrate the benefits of our method in practice.  Code is available at \url{https://github.com/ucr-optml/FedNest}.
\end{abstract}
\vspace{-10pt}
\section{Introduction}
In the federated learning (FL) paradigm, multiple clients cooperate to learn a model under the orchestration of a central server \citep{mcmahan17fedavg} without directly exchanging local client data with the server or other clients. The locality of data distinguishes FL from traditional distributed optimization and also motivates new methodologies to address heterogeneous data across clients. Additionally, cross-device FL across many edge devices presents additional challenges since only a small fraction of clients participate in each round, and clients cannot maintain \emph{state} across rounds~\citep{kairouz2019advances}. 

Traditional distributed SGD methods are often unsuitable in FL and incur high communication costs. To overcome this issue, popular FL methods, such as \fedavg~\citep{mcmahan17fedavg}, use local client updates, i.e. clients update their models multiple times before communicating with the server (aka,~local SGD). Although \fedavg has seen great success, recent works have exposed convergence issues in certain settings~\citep{karimireddy2020scaffold,hsu2019measuring}. This is due to a variety of factors, including \emph{client drift}, where local models move away from globally optimal models due to objective and/or systems heterogeneity. 
\begin{figure*}[t]
\centering
\begin{tikzpicture}
    \node at (-3.2,-.4)[anchor=west] {\includegraphics[width=\linewidth/2]{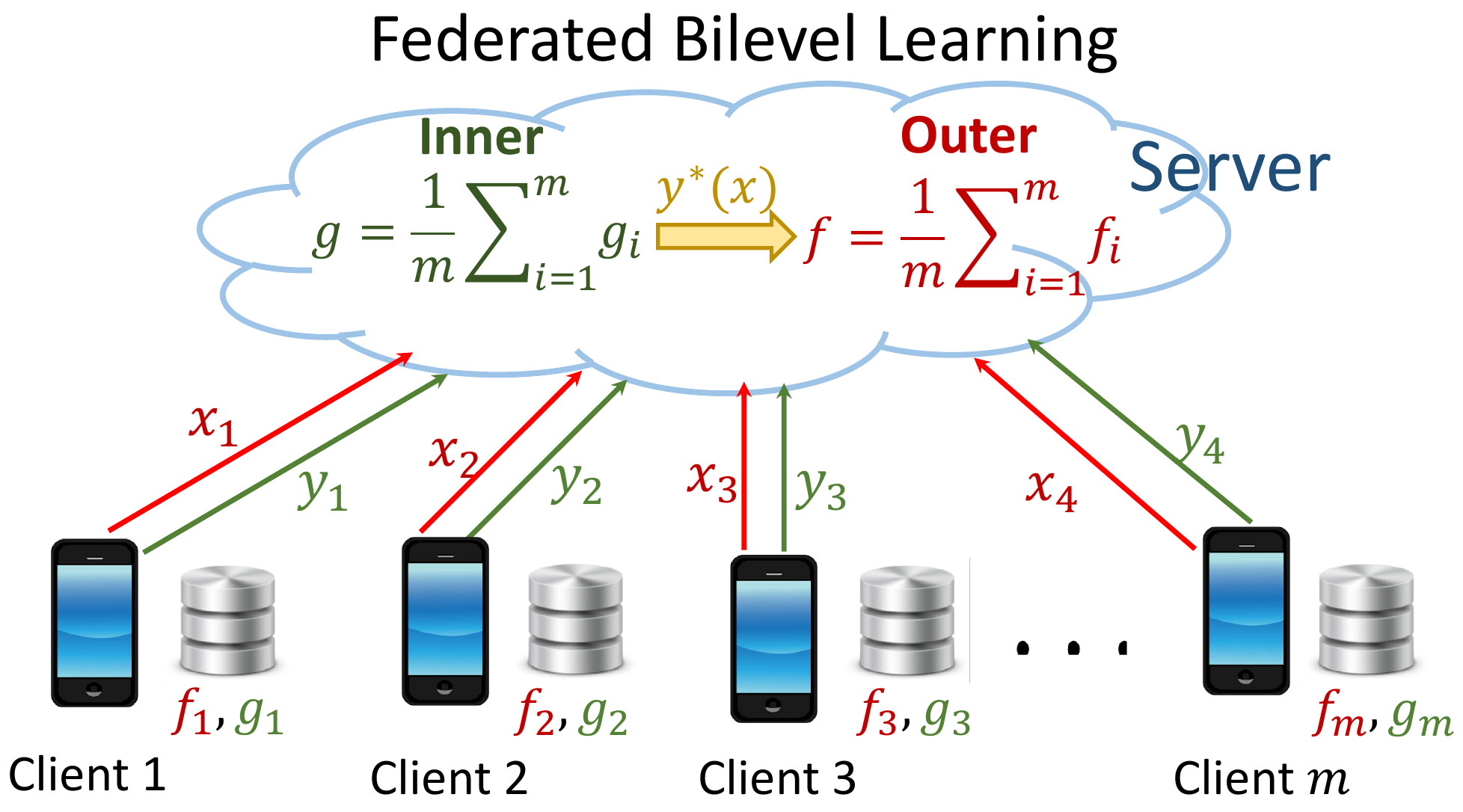}};
\node at (6,1.8)[anchor=west] [scale=1]{{Communication-efficiency:}};
    \node at (6.3,1.4)[anchor=west] {\checkbox \fedhess avoids explicit Hessian};
    \node at (6.3,1.0)[anchor=west] {\checkbox \lfedblo for local hypergradients};
    
    \node at (6,0.5)[anchor=west] [scale=1]{{Client heterogeneity:}};
    \node at (6.3,0.15)[anchor=west] {\checkbox \fedinn avoids client drift};
    
    \node at (6,-0.3)[anchor=west] [scale=1]{{Finite sample bilevel theory:}};
    \node at (6.3,-0.7)[anchor=west] {\checkbox Stochastic inner \& outer analysis};
    
    \node at (6,-1.1)[anchor=west] [scale=1]{{Specific nested optimization problems:}};
    \node at (6.3,-1.5)[anchor=west] {\checkbox {Bilevel}};
    \node at (6.3,-1.9)[anchor=west] {\checkbox {Minimax}};
    \node at (6.3,-2.3)[anchor=west] {\checkbox {Compositional}};
\end{tikzpicture}
\vspace{-10pt}\caption{Depiction of federated bilevel nested optimization and high-level summary of \fedblo (Algorithm~\ref{alg:fednest}). At outer loop, \fedhess uses multiple rounds of matrix-vector products to facilitate hypergradient computation while only communicating vectors. At inner loop, \fedinn uses \textsc{FedSVRG} to avoid client drift and find the unique global minima. Both are crucial for establishing provable convergence of \fedblo.}
\label{fig_main}
\vspace{-5pt}
\end{figure*}

Existing FL methods, such as \fedavg, are widely applied to stochastic problems with \emph{single-level} structure. Instead, many machine learning tasks -- such as adversarial learning \citep{madry2017towards}, meta learning~\citep{bertinetto2018meta}, hyperparameter optimization~\citep{franceschi2018bilevel}, reinforcement/imitation learning~\citep{wu2020finite,arora2020provable}, and neural architecture search~\citep{liu2018darts} -- admit \emph{nested} formulations that go beyond the standard single-level structure. Towards addressing such nested problems, bilevel optimization has received significant attention in the recent literature
~\citep{ghadimi2018approximation,hong2020two,ji2021bilevel}; albeit in non-FL settings. On the other hand, federated versions have been elusive perhaps due to the additional challenges surrounding heterogeneity, communication, and inverse Hessian approximation. 

\noindent\textbf{Contributions:} This paper addresses these challenges and develops~\textbf{\fedblo}: A federated machinery for nested problems with provable convergence and lightweight communication. \fedblo is composed of \textbf{\fedinn}: a federated stochastic variance reduction algorithm (\textsc{FedSVRG}) to solve the inner problem while avoiding client drift, and \textbf{\fedout}: a communication-efficient federated hypergradient algorithm for solving the outer problem. Importantly, we allow both inner \& outer objectives to be finite sums over heterogeneous client functions. \fedblo runs a variant of \textsc{FedSVRG} on inner \& outer variables in an alternating fashion as outlined in Algorithm~\ref{alg:fednest}. We make multiple algorithmic and theoretical contributions summarized below.%
\vspace{-.2cm}
\begin{myitemize}
\item \textbf{The variance reduction} of \fedinn enables robustness in the sense that local models converge to the globally optimal inner model {despite client drift/heterogeneity} unlike \fedavg. While \fedinn is similar to \textsc{FedSVRG}~\cite{konevcny2016federated} and \textsc{FedLin}~\cite{mitra2021linear}, we make two key contributions: (i) We leverage the global convergence of \fedinn to ensure accurate hypergradient computation which is crucial for our bilevel proof.  (ii) We establish new convergence guarantees for single-level stochastic non-convex \textsc{FedSVRG}, which are then integrated within our \fedout.
 \item \textbf{Communication efficient bilevel optimization}: Within \fedout, we develop an efficient federated method for hypergradient estimation that bypass Hessian computation. Our approach approximates the global Inverse Hessian-Gradient-Product (IHGP) via computation of matrix-vector products over few communication rounds.
 \item \textbf{\lfedblo:} To further improve communication efficiency, we additionally propose a Light-\fedblo algorithm, which computes hypergradients locally and only needs a single communication round for the outer update. Experiments reveal that \lfedblo becomes very competitive as client functions become more homogeneous.
 \item \textbf{Unified federated nested theory:} We specialize our bilevel results to \emph{minimax} and \emph{compositional} optimization with emphasis on the former. For these, \fedblo significantly simplifies and leads to faster convergence. Importantly, our results are on par with the state-of-the-art non-federated guarantees for nested optimization literature without additional assumptions (Table~\ref{table:blo2:results}).
 \item We provide extensive \textbf{numerical experiments} on bilevel and minimax optimization problems. These demonstrate the benefits of \fedblo, efficiency of \lfedblo, and shed light on tradeoffs surrounding communication, computation, and heterogeneity.  
 \end{myitemize}

\begin{table}[t]\vspace{-7pt}
\begin{minipage}[b]{1\linewidth}\centering
\scalebox{0.75}{\hspace*{-5pt}
\begin{tabular}{|l|c|c|c | c| }
\multicolumn{5}{c}{\large{Stochastic Bilevel Optimization}} \\
 \cline{3-5}
\multicolumn{2}{c}{}&\multicolumn{3}{|c|}{Non-Federated}\\
 \cline{2-5}
\multicolumn{1}{c|}{}&\textbf{\fednest} & \textbf{ALSET}  & \textbf{BSA} &  \textbf{TTSA}  \\
\hline
\textbf{batch size} &  \multicolumn{4}{c|}{${\cal O}(1)$}  \\
\hline
 \makecell{\makecell{\textbf{samples in $\xi$}\\ \textbf{samples in $\zeta$}}}  & \makecell{ ${\cal O}(\kappa_g^5\epsilon^{-2})$ \\ ${\cal O}(\kappa_g^9 \epsilon^{-2})$ }& \makecell{ ${\cal O}( \kappa_g^5\epsilon^{-2})$ \\ ${\cal O}( \kappa_g^9\epsilon^{-2})$ } & \makecell{ ${\cal O}( \kappa_g^6\epsilon^{-2})$  \\ ${\cal O}(\kappa_g^9\epsilon^{-3})$ }  & \makecell{ ${\cal O}( \kappa_g^p\epsilon^{-2.5})$  \\ ${\cal O}( \kappa_g^p\epsilon^{-2.5})$ }
 \\ 
 \hline
 \multicolumn{5}{c}{} \\
 \multicolumn{5}{c}{\large{Stochastic Minimax Optimization}} \\
 \cline{3-5}
\multicolumn{2}{c|}{}&\multicolumn{3}{c|}{Non-Federated}\\
 \cline{2-5}
\multicolumn{1}{c|}{}&\textbf{\fednest}	& \textbf{ALSET}  & \textbf{SGDA} &  \textbf{SMD}      \\ 
\hline
\textbf{batch size} & \mc{O}(1) & ${\cal O}(1)$   & ${\cal O}(\epsilon^{-1})$  & N.A.  \\ 
\hline
\textbf{samples }  & \multicolumn{4}{c|}{${\cal O}(\kappa^3_f\epsilon^{-2})$}
\\ 
 \hline
 \multicolumn{5}{c}{} \\
\multicolumn{5}{c}{\large{Stochastic Compositional Optimization}} \\
 \cline{3-5}
\multicolumn{2}{c}{}&\multicolumn{3}{|c|}{Non-Federated}\\
 \cline{2-5}
\multicolumn{1}{c|}{}&\textbf{\fednest}	& \textbf{ALSET}  & \textbf{SCGD} & \textbf{NASA}      
\\ \hline
\textbf{batch size} & \multicolumn{4}{c|}{${\cal O}(1)$} 
\\ \hline
\textbf{samples}  &$ {\cal O}(\epsilon^{-2})$& ${\cal O}(\epsilon^{-2})$  & ${\cal O}(\epsilon^{-4})$  & ${\cal O}(\epsilon^{-2})$ \\ 
\hline
 \end{tabular}
}
\vspace{-0.1cm}
\caption{\small{Sample complexity of \fedblo and comparable non-FL methods to find an $\epsilon$-stationary point of $f$: $\kappa_g:=\ell_{g,1}/\mu_g$ and $\kappa_f:=\ell_{f,1}/\mu_f$.  $\kappa^p_g$ denotes a polynomial function of $\kappa_g$.  {ALSET} \citep{chen2021closing}, {BSA} \citep{ghadimi2018approximation}, {TTSA} \citep{hong2020two}, SGDA \citep{lin2020gradient}, SMD \citep{rafique2021weakly}, SCGD~\citep{wang2017stochastic}, and NASA~\citep{ghadimi2020single}. 
}}
\label{table:blo2:results}
\vspace{-0.35cm}
\end{minipage}
\end{table}

%
\section{Federated Nested Problems \& \fedblo}\label{sec:fedavg:full}
We will first provide the background on bilevel nested problems and then introduce our general federated method.

\noindent\textbf{Notation.}
$\mb{N}$ and $\mb{R}$ denotes the set of natural and real numbers, respectively. 
For a differentiable function $h(\m{x},\m{y}):\mb{R}^{d_1}\times\mb{R}^{d_2}\rightarrow\mb{R}$ in which $\m{y}=\m{y}(\m{x}):\mb{R}^{d_1}\rightarrow\mb{R}^{d_2}$, we denote $\nabla{h}\in\mb{R}^{d_1}$ the gradient of $h$ as a function of $\m{x}$ and  $\nabla_{\m{x}}{h}$, $\nabla_{\m{y}} h$ the partial derivatives of $h$ with respect to $\m{x}$ and $\m{y}$, respectively. We let $\nabla_{\m{xy}}^2 h$ and $\nabla_{\m{y}}^2 h$ denote the Jacobian and Hessian of $h$, respectively. We consider FL optimization over $m$ clients and we denote $\mathcal{S}=\{1,\ldots,m\}$. For vectors $\m{v} \in \mb{R}^d$ and matrix $\m{M} \in \mb{R}^{d \times d}$, we denote $\|\m{v}\|$ and $\|\m{M}\|$ the respective Euclidean and spectral norms.
\subsection{Preliminaries on Federated Nested Optimization}
In \textbf{federated bilevel learning}, we consider the  following nested optimization problem as depicted in Figure \ref{fig_main}:
\begin{subequations}\label{fedblo:prob}
\begin{align}
\begin{array}{ll}
\underset{\m{x} \in \mb{R}^{{d}_1}}{\min} &
\begin{array}{c}
f(\m{x})=\frac{1}{m} \sum_{i=1}^{m} f_{i}\left(\m{x},\m{y}^*(\m{x})\right) 
\end{array}\\
\text{subj.~to} & \begin{array}[t]{l} \m{y}^*(\m{\m{x}})
\in \underset{ \m{y}\in \mb{R}^{{d}_2}}{\textnormal{argmin}}~~\frac{1}{m}\sum_{i=1}^{m} g_i\left(\m{x},\m{y}\right). 
\end{array}
\end{array}
\end{align}
Recall that $m$ is the number of clients. Here, to model objective heterogeneity, each client $i$ is allowed to have its own individual outer \& inner functions $(f_i,g_i)$. Moreover, we consider a general stochastic oracle model, access to local functions $(f_i,g_i)$ is via stochastic sampling as follows:
\begin{align}
\nonumber 
 f_i(\m{x},\m{y}^*(\m{x})) &:= \mb{E}_{\xi \sim \mc{C}_i}\left[f_i(\m{x}, \m{y}^*(\m{x}); \xi)\right],\\g_i(\m{x},\m{y}) &:= \mb{E}_{\zeta \sim \mc{D}_i}\left[g_i(\m{x}, \m{y}; \zeta)\right],
\end{align}
\end{subequations}
where $(\xi, \zeta)\sim(\mc{C}_i, \mc{D}_i)$ are  outer/inner sampling distributions for the $i^{\text{th}}$ client. We emphasize that for $i \neq j$, the tuples $(f_i,g_i,\mc{C}_i, \mc{D}_i)$ and $(f_j,g_j,\mc{C}_j, \mc{D}_j)$ can  be different.

\begin{example}[Hyperparameter tuning] Each client has local validation and training datasets associated with objectives $(f_i,g_i)_{i=1}^m$ corresponding to validation and training losses, respectively. The goal is finding hyper-parameters $\x$ that lead to learning model parameters $\y$ that minimize the (global) validation loss. 
\end{example}
%

\begin{algorithm}[t]
\caption{\pmb{\fedblo}}
\label{alg:fedblo}
\begin{algorithmic}[1]
\State \textbf{Inputs:} $K,T \in \mb{N}$;~~$(\m{x}^0, \m{y}^0)\in \mb{R}^{d_1+d_{2}}$;~~\textbf{\fedinn},\\
\hspace{33pt}\textbf{\fedout} with stepsizes $\{(\alpha^k, \beta^k)\}_{k=0}^{K-1}$
\For{$k = 0, \cdots, K-1$}
\State $\m{y}^{k,0}=\m{y}^k$
\For{$t = 0, \cdots, T-1$}
\State $\m{y}^{k,t+1}=\pmb{\fedinn}\left(\m{x}^k,\m{y}^{k,t},\beta^k \right)$~~~
\EndFor
\State $\m{y}^{k+1}=\m{y}^{k,T}$
\State $\m{x}^{k+1}=\pmb{\fedout}\left(\m{x}^k,\m{y}^{k+1}, \alpha^k \right)$
\EndFor
\end{algorithmic}\label{alg:fednest}
\end{algorithm}
%
The stochastic bilevel problem \eqref{fedblo:prob} subsumes two popular problem classes with the nested structure: \emph{Stochastic MiniMax} \& \emph{Stochastic Compositional}. Therefore, results on the general nested problem \eqref{fedblo:prob}  also imply the results in these special cases. Below, we briefly describe them.

\textbf{Minimax optimization.} If $g_i(\x,\y;\zeta):=-f_i(\x,\y;\xi)$ for all $i \in \mc{S}$, the stochastic bilevel problem \eqref{fedblo:prob} reduces to the stochastic minimax problem
\begin{align}\label{fedminmax:prob}
\min_{\m{x}\in\mb{R}^{d_1}}~f(\m{x}):=\frac{1}{m} \max_{\m{y}\in\mb{R}^{d_2}}\sum_{i=1}^{m} \E[f_{i}\left(\m{x},\m{y}; \xi\right)].
\end{align}
Motivated by applications in fair beamforming, training generative-adversarial networks (GANs) and robust machine learning, significant efforts have been made for solving \eqref{fedminmax:prob}~including \citep{daskalakis2018limit,gidel2018variational,mokhtari2020unified,thekumparampil2019efficient}.

\begin{example}[GANs] We train a generative model $g_{\x}(\cdot)$ and an adversarial model $a_{\y}(\cdot)$ using client datasets ${\mc{C}}_i$. The local functions may for example take the form $f_i(\x,\y)=\E_{s\sim {\mc{C}}_i}\{\log a_{\y}(s)\} +\E_{z\sim {\cal{D}}_{\text{noise}}}\{\log[1-a_{\y}(g_{\x}(z))]\}$.
\end{example}

\textbf{Compositional optimization.} Suppose $f_i(\m{x},\m{y};\xi):=f_i(\m{y};\xi)$ and $g_i$ is quadratic in $\m{y}$ given as $g_i(\m{x},\m{y};\zeta):=\|\m{y}-\m{r}_i(\m{x};\zeta)\|^2$. Then, the bilevel problem \eqref{fedblo:prob} reduces to 
\begin{align}\label{fedcompos:prob2}
\begin{array}{ll}
\underset{\m{x} \in \mb{R}^{{d}_1}}{\min} &
\begin{array}{c}
f(\m{x})=\frac{1}{m} \sum_{i=1}^{m} f_{i}\left(\m{y}^*(\m{x})\right) 
\end{array}\\
\text{subj.~to} & \begin{array}[t]{l} \m{y}^*(\m{\m{x}})=\underset{ \m{y}\in \mb{R}^{{d}_2}}{\textnormal{argmin}}~~\frac{1}{m}\sum_{i=1}^{m} g_i\left(\m{x},\m{y}\right) 
\end{array}
\end{array}
\end{align}
with $f_i(\m{y}^*(\m{x})) := \mb{E}_{\xi \sim \mc{C}_i}[f_i( \m{y}^*(\m{x}); \xi)]$ and $g_i(\m{x},\m{y}):= \mb{E}_{\zeta \sim \mc{D}_i}[g_i(\m{x},\m{y};\zeta)]$.
Optimization problems in the form of \eqref{fedcompos:prob2} occur for example in model agnostic meta-learning and policy evaluation in reinforcement learning \citep{finn2017model,ji2020theoretical,dai2017learning,wang2017stochastic}. 

\noindent \textbf{Assumptions.} Let $\m{z}=(\m{x},\m{y}) \in \mb{R}^{d_1+d_2}$. Throughout, we make the following assumptions on inner/outer objectives. 
\vspace{0.1cm}
%
\begin{assumption}[Well-behaved objectives]\label{assu:f} 
For all $i \in [m]$:
\begin{enumerate}[label={\textnormal{\textbf{(A\arabic*})}}]
\vspace{-.3cm}
\item 
$f_i(\m{z}), \nabla f_i(\m{z}), \nabla g_i(\m{z}), \nabla^2 g_i (\m{z})$ are $\ell_{f,0}$,$\ell_{f,1}$,$\ell_{g,1}$, $\ell_{g,2}$-Lipschitz continuous, respectively; and 
\vspace{-.2cm}
\item 
$g_i(\m{x},\m{y})$ is $\mu_{g}$-strongly convex in $\m{y}$ for all $\m{x}\in \mb{R}^{d_1}$.
\end{enumerate}
\end{assumption}
Throughout, we use $\kappa_g=\ell_{g,1}/\mu_g$ to denote the condition number of the inner function $g$.
\begin{assumption}[Stochastic samples]\label{assu:bound:var}
For all $i \in [m]$:
\begin{enumerate}[label={\textnormal{\textbf{(B\arabic*})}}]
\vspace{-.3cm}
\item $\nabla f_i(\m{z};\xi)$, $\nabla g_i(\m{z};\zeta)$, $\nabla^2 g_i(\m{z}; \zeta)$ are unbiased estimators of $\nabla f_i(\m{z})$, $\nabla g_i(\m{z})$, $\nabla^2g_i(\m{z})$, respectively; and
\item Their variances are bounded, i.e., $\mb{E}_{\xi}[\|\nabla f_i(\m{z};\xi)-\nabla f_i(\m{z})\|^2] \leq \sigma_f^2$,   $\mb{E}_{\zeta}[\|\nabla g_i(\m{z};\zeta) -\nabla g_i(\m{z})\|^2] \leq \sigma_{g,1}^2$, and $\mb{E}_{\zeta}[\|\nabla^2 g_i(\m{z};\zeta)-\nabla^2 g_i(\m{z})\|^2] \leq \sigma_{g,2}^2$ for some $\sigma_f^2, \sigma_{g,1}^2$, and $\sigma_{g,2}^2$.
\end{enumerate}
\end{assumption}
These assumptions are common in the bilevel optimization literature~\citep{ghadimi2018approximation,chen2021closing,ji2021bilevel}. Assumption~\ref{assu:f} requires that the inner and outer functions are well-behaved. Specifically, strong-convexity of the inner objective is a recurring assumption in bilevel optimization theory implying a unique solution to the inner minimization in \eqref{fedblo:prob}.
\subsection{Proposed Algorithm: \fedblo}

In this section, we develop \fedblo, which is formally presented in Algorithm \ref{alg:fedblo}. The algorithm operates in two nested loops. The outer loop operates in rounds $k\in\{1,\ldots,K\}$. Within each round, an inner loop operating for $T$ iterations is executed. Given estimates $\m{x}^k$ and  $\m{y}^k$, each iteration $t\in\{1,\ldots,T\}$ of the inner loop produces a new \emph{global} model $\m{y}^{k,t+1}$ of the inner optimization variable $\m{y}^*(\m{x}^k)$ as the output of an optimizer \fedinn. The final estimate $\m{y}^{k+1}=\m{y}^{k,T}$ of the inner variable is then used by an optimizer \fedout to update the outer \emph{global} model $\m{x}^{k+1}$.

The subroutines \fedinn and \fedout are gradient-based optimizers.
Each subroutine involves a certain number of \emph{local} training steps indexed by $\nu\in\{0,\ldots,\tau_i-1\}$ that are performed at the $i^{\text{th}}$ client. The local steps of \fedinn iterate over \emph{local} models $\m{y}_{i,\nu }$ of the inner variable. Accordingly, \fedout iterates over \emph{local} models $\m{x}_{i,\nu }$ of the global variable. A critical component of \fedout is a communication-efficient federated hypergradient estimation routine, which we call {\fedhess}.
The implementation of \fedinn, \fedout and \fedhess is critical to circumvent the algorithmic challenges of federated bilevel optimization. In the remaining of this section, we detail the challenges and motivate our proposed implementations. Later, in Section \ref{sec:theory}, we provide a formal convergence analysis of \fednest. 
\subsection{Key Challenge: Federated Hypergradient Estimation}\label{sec:fedhess}

\fedout is a gradient-based optimizer for the outer minimization in \eqref{fedblo:prob}; thus each iteration involves computing $\nabla f(\x)=(1/m)\sum_{i=1}^{m}\nabla f_i(\x,\y^*(\x))$. Unlike single-level FL, the fact that the outer objective $f$ depends explicitly on the inner minimizer $\y^*(\x)$ introduces a new challenge. 
A good starting point to understand the challenge is the following evaluation of $\nabla f(\x)$ in terms of partial derivatives. The result is well-known from properties of implicit functions.
\begin{lemma}\label{lem:IFT}
Under Assumption~\ref{assu:f}, for all $i\in [m]$:
\begin{subequations}\label{main_grad}
\begin{align*}
\nabla f_i(\m{x}, \m{y}^*(\m{x}))&= \nabla^{\texttt{D}}f_i\left(\m{x},\m{y}^*(\m{x})\right)+\nabla^{\texttt{I}}f_i\left(\m{x},\m{y}^*(\m{x})\right),
\end{align*}
where the \emph{direct} and \emph{indirect} gradient components are:
\begin{align}\label{main_grad:decom}
&  \nabla^{\texttt{D}} f_i(\m{x},\m{y}^*(\m{x})):= \nabla_{\m{x}}f_i\left(\m{x},\m{y}^*(\m{x})\right),\\
\nonumber
&\nabla^{\texttt{I}} f_i(\m{x},\m{y}^*(\m{x})):=- \nabla^2_{\m{x}\m{y}} g(\m{x},\m{y}^*(\m{x}))\\
&\quad \qquad\cdot \left[\nabla^2_{\m{y}}g(\m{x},\m{y}^*(\m{x}))\right]^{-1} \nabla_\m{y} f_i \left(\m{x},\m{y}^*(\m{x})\right).\label{main_grad:decom2}
\end{align}
\end{subequations}
\end{lemma}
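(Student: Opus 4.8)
The plan is to derive the hypergradient formula by combining the multivariate chain rule with the Implicit Function Theorem (IFT), exploiting the first-order stationarity of the inner minimizer. The guiding observation is that $f_i(\m{x},\m{y}^*(\m{x}))$ is a composition in which $\m{x}$ enters both \emph{directly}, as the first argument, and \emph{indirectly}, through the solution map $\m{y}^*(\m{x})$. The total gradient should therefore split into a direct term that freezes $\m{y}^*$ and differentiates only the explicit $\m{x}$-dependence, and an indirect term that captures how perturbing $\m{x}$ displaces the inner solution. These two contributions are precisely $\nabla^{\texttt{D}} f_i$ and $\nabla^{\texttt{I}} f_i$.

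First I would apply the chain rule to obtain
\begin{align*}
\nabla f_i(\m{x},\m{y}^*(\m{x})) = \nabla_{\m{x}} f_i(\m{x},\m{y}^*(\m{x})) + \nabla \m{y}^*(\m{x})\,\nabla_{\m{y}} f_i(\m{x},\m{y}^*(\m{x})),
\end{align*}
where $\nabla \m{y}^*(\m{x}) \in \mb{R}^{d_1\times d_2}$ is the Jacobian of the solution map (in the paper's convention, matching $\nabla^2_{\m{xy}} g \in \mb{R}^{d_1\times d_2}$). The first summand is exactly $\nabla^{\texttt{D}} f_i$, so it remains to identify $\nabla \m{y}^*(\m{x})\,\nabla_{\m{y}} f_i$ with $\nabla^{\texttt{I}} f_i$. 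To compute the Jacobian I would use that $\m{y}^*(\m{x})$ is the unique minimizer of the strongly convex aggregate $g=\frac{1}{m}\sum_i g_i$, so the stationarity identity $\nabla_{\m{y}} g(\m{x},\m{y}^*(\m{x}))=0$ holds for every $\m{x}$. Differentiating this identity in $\m{x}$, again via the chain rule, gives
\begin{align*}
\nabla^2_{\m{xy}} g(\m{x},\m{y}^*(\m{x})) + \nabla \m{y}^*(\m{x})\,\nabla^2_{\m{y}} g(\m{x},\m{y}^*(\m{x})) = 0.
\end{align*}
Since Assumption~\ref{assu:f} makes $g$ $\mu_g$-strongly convex in $\m{y}$, the Hessian $\nabla^2_{\m{y}} g$ is positive definite and hence invertible, so I can solve $\nabla \m{y}^*(\m{x}) = -\nabla^2_{\m{xy}} g\,[\nabla^2_{\m{y}} g]^{-1}$ and substitute back to recover the claimed decomposition.

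The one step that genuinely relies on the hypotheses, rather than being formal bookkeeping, is justifying that the solution map is differentiable so that the chain rule and the differentiation of the stationarity condition are legitimate. Here the IFT does double duty: the Lipschitz/smoothness conditions in (A1) ensure $\nabla_{\m{y}} g$ is $C^1$, and strong convexity (A2) ensures $\nabla^2_{\m{y}} g$ is nonsingular at $(\m{x},\m{y}^*(\m{x}))$; together these certify that $\m{y}^*$ is continuously differentiable with the Jacobian above. I would therefore invoke the IFT explicitly before differentiating the stationarity identity. The only other point warranting care is the asymmetry of the final formula: the inner minimization is over the \emph{aggregate} $g$, so the Hessian and Jacobian entering $\nabla^{\texttt{I}} f_i$ are those of $g$, whereas $\nabla_{\m{y}} f_i$ and $\nabla_{\m{x}} f_i$ are the \emph{individual} client quantities. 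This is consistent precisely because $\m{y}^*(\m{x})$ is determined solely by the aggregate inner problem and is shared across all clients, so the solution map's Jacobian is common while the outer partials remain client-specific. I would keep the transpose/dimension conventions aligned throughout so that every product lands in $\mb{R}^{d_1}$.
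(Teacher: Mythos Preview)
Your proposal is correct and follows essentially the same argument as the paper: differentiate the inner stationarity condition $\nabla_{\m{y}} g(\m{x},\m{y}^*(\m{x}))=0$ to obtain $\nabla \m{y}^*(\m{x}) = -\nabla^2_{\m{xy}} g\,[\nabla^2_{\m{y}} g]^{-1}$, then substitute into the chain-rule expansion of $\nabla f_i$. Your version is somewhat more careful than the paper's, since you explicitly invoke the IFT to justify differentiability of $\m{y}^*$ and flag the global-$g$/local-$f_i$ asymmetry, whereas the paper treats both points as routine.
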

We now use the above formula  
to describe the two core challenges of bilevel FL optimization. 

First, evaluation of any of the terms in \eqref{main_grad} requires access to the minimizer $\m{y}^*(\m{x})$ of the inner problem. On the other hand, one may at best hope for a good \emph{approximation} to $\m{y}^*(\m{x})$ produced by the inner optimization subroutine. Of course, this challenge is inherent in any bilevel optimization setting, but is exacerbated in the FL setting because of \emph{client drift}. Specifically, when clients optimize their individual (possibly different) local inner objectives, the global estimate of the inner variable produced by SGD-type methods may drift far from (a good approximation to) $\m{y}^*(\m{x})$. We explain in Section \ref{sec:fedinn} how \fedinn solves that issue.

The second challenge comes from the stochastic nature of the problem. 
Observe that the indirect component in \eqref{main_grad:decom2} is nonlinear in the Hessian $\nabla_{\m{y}}^2 g(\m{x}, \m{y}^*(\m{x}))$,   complicating an unbiased stochastic approximation of $\nabla f_i(\m{x}, \m{y}^*(\m{x}))$. As we expose here,  solutions to this complication developed in the non-federated bilevel optimization literature, are \emph{not} directly applicable in the FL setting. Indeed, existing stochastic bilevel algorithms, e.g.  \cite{ghadimi2018approximation}, define $\bar{\nabla}f(\m{x},\m{y}):= \bar{\nabla}^{\texttt{D}}f(\m{x},\m{y})+\bar{\nabla}^{\texttt{I}}f(\m{x},\m{y})$ as a surrogate of $\nabla f(\m{x},  \m{y}^*(\m{x}))$ by replacing $\m{y}^*(\m{x})$ in definition \eqref{main_grad} with an approximation $\m{y}$ and using the following stochastic approximations:
\begin{subequations}\label{eq.inverse-estimator}
\begin{align}\label{eq.inverse-estimator.a}
& \bar{\nabla}^{\texttt{D}}f(\m{x},\m{y})  \approx  \nabla_{\m{x}} f(\m{x}, \m{y};\dot{\xi}),
\\
\nonumber
& \bar{\nabla}^{\texttt{I}}f(\m{x},\m{y})  \approx  -\nabla^2_{\m{xy}}g(\m{x},\m{y};\zeta_{N'+1}) \\
& \Big[\frac{N}{\ell_{g,1}}\prod\limits_{n=1}^{N'}\big(\m{I}-\frac{1}{\ell_{g,1}} \nabla^2_{\m{y}} g(\m{x},\m{y};\zeta_n)\big)\Big] \nabla_\m{y} f(\m{x},\m{y};\dot{\xi}).
\label{eq.inverse-estimator.b}
\end{align}
\end{subequations}
Here, $N'$ is drawn from $\{0,  \ldots, N-1\}$ uniformly at random (UAR) and  $\{\dot{\xi}, \zeta_{1}, \ldots,\zeta_{N'+1}\}$ are i.i.d.~samples. 
\citet{ghadimi2018approximation,hong2020two} have shown that using \eqref{eq.inverse-estimator}, the inverse Hessian estimation bias exponentially decreases with the number of samples $N$. 

One might hope to directly leverage the above approach in a local computation fashion by replacing the global outer function $f$ with the individual function $f_i$. However, note from \eqref{main_grad:decom2} and \eqref{eq.inverse-estimator.b} that the proposed stochastic approximation of the indirect gradient involves in a nonlinear way the \emph{global} Hessian,  which is \emph{not} available at the client \footnote{We note that the approximation in \eqref{eq.inverse-estimator} is not the only construction, and bilevel optimization can accommodate other forms of gradient surrogates~\citep{ji2021bilevel}. Yet, all these approximations require access (in a nonlinear fashion) to the \emph{global} Hessian; thus, they suffer from the same challenge in FL setting.}. Communication efficiency is one of the core objectives of FL making the idea of communicating Hessians between clients and server prohibitive. \emph{Is it then possible, in a FL setting, to obtain an accurate stochastic estimate of the indirect gradient while retaining communication efficiency?} In Section \ref{sec:fedout}, we show how \fedout and its subroutine \fedhess, a matrix-vector products-based (thus, communication efficient) federated hypergradient estimator, answer this question affirmatively.
\subsection{Outer Optimizer: \fedout}\label{sec:fedout}
\begin{algorithm}[t]
\caption{$\m{x}^{+} ~=~\pmb{\fedout}~(\m{x}, \m{y}^+, \alpha)$ for stochastic \colorbox{cyan!30}{bilevel} and  \colorbox{green!30}{minimax} problems
%
} 
\begin{algorithmic}[1]
\State $\funco_i(\cdot)\gets \nabla_\m{x} f_i(\cdot,\m{y}^+;\cdot)$ 
 \State $\m{x}_{i,0}=\m{x}$ and $\alpha_i \in (0,\alpha]$
\State \colorbox{cyan!30}{Choose $N \in \mb{N}$ and set $ \m{p}_{N^{'}}=~\pmb{\fedhess}~(\m{x}, \m{y}^+,N)$}
 \For{$i \in \mc{S}$ \textbf{in parallel}} 
 \State \colorbox{cyan!30}{$\m{h}_i= \funco_i(\m{x};\xi_{i})-\nabla^2_{\m{xy}}g_i(\m{x},\m{y}^+;{\zeta}_{i})
 \m{p}_{N^{'}}$}
  \State \colorbox{green!30}{$\m{h}_i=\funco_i(\m{x};  \xi_{i})$}
 \EndFor
 \State $\m{h}=|\mathcal{S}|^{-1}\sum_{i\in\mathcal{S}}\m{h}_i$ 
 \For {$i \in \mc{S}$ \textbf{in parallel}} 
\For {$\nu= 0,\ldots,\tau_i-1$} 
 \State
$\m{h}_{i,\nu }=\funco_i(\m{x}_{i,\nu }; \xi_{i,\nu })-\funco_i(\m{x}; \xi_{i,\nu })+\m{h}$
 \State  $ \m{x}_{i,\nu +1}= \m{x}_{i,\nu }- \alpha_i\m{h}_{i,\nu }$
  \EndFor
\EndFor
\State $\m{x}^{+}=|\mc{S}|^{-1}\sum_{i\in \mc{S}} \m{x}_{i,\tau_i}$ 
\end{algorithmic}
\label{alg:fedout}
\end{algorithm}

\begin{algorithm}[t]
  \caption{$\m{p}_{N'}=~\pmb{\fedhess}~(\m{x}, \m{y}^+,N)$: Federated approximation of inverse-Hessian-gradient product}
\begin{algorithmic}[1]
\State Select $ N'\in \{0, \dots, N-1\}$ UAR.
\State Select $\mc{S}_0 \in \mc{S}$ UAR.
  \For{$i \in \mc{S}_0$ \textbf{in parallel}} 
 \State $\m{p}_{i,0} =\nabla_\m{y} f_i(\m{x}, \m{y}^+; \xi_{i,0})$
 \EndFor
 \State $ \m{p}_0 = \frac{N}{\ell_{g,1}}  |\mc{S}_0|^{-1}\sum_{i\in\mathcal{S}_0} \m{p}_{i,0}$  
\If{$N'=0$}
\State  Return $\m{p}_{N'}$
\EndIf
\State Select $\mc{S}_1, \ldots, \mc{S}_{ N'} \in \mc{S}$ UAR.
\For{$n=1,\ldots, N'$}
		\For {$i \in \mc{S}_n$ \textbf{in parallel}} 
		 \State $\m{p}_{i,n} = \left(\m{I}-\frac{1}{\ell_{g,1}}\nabla^2_\m{y} g_i(\m{x}, \m{y}^+;{\zeta}_{i,n})\right)\m{p}_{n-1}$
	   \EndFor
	   \State $\m{p}_{n}=|\mc{S}_n|^{-1}\sum_{i\in\mc{S}_n} \m{p}_{i,n}$ 
		\EndFor
\end{algorithmic}\label{fedihgp_algo}
\end{algorithm}


This section presents the outer optimizer \fedout, formally described in Algorithm~\ref{alg:fedout}. As a subroutine of \fedblo (see Line 9, Algorithm~\ref{alg:fednest}), at each round $k=0,\ldots,K-1$, \fedout takes the most recent global outer model $\x^k$ together the updated (by \fedinn) global inner model $\y^{k+1}$ and produces an update $\x^{k+1}$. To lighten notation, for a round $k$, denote the function's input as $(\m{x}, \m{y}^+)$ (instead of $(\x^k,\y^{k+1})$) and the output as $\m{x}^+$ (instead of $\x^{k+1}$).
For each client $i\in \mathcal{S}$, \fedout uses   stochastic approximations of $\bar{\nabla}^{\texttt{I}} f_i(\m{x},\m{y}^+)$ and $\bar{\nabla}^{\texttt{D}}f_i(\m{x},\m{y}^+)$, which we call  $\m{h}_i^{\texttt{I}}(\m{x}, \m{y}^+)$ and $\m{h}_i^{\texttt{D}}(\m{x}, \m{y}^+)$, respectively.  The specific choice of these approximations (see Line 5) is critical and is discussed in detail later in this section. Before that, we explain how each client uses these proxies to form  local updates of the outer variable.
In each round, starting from a common global model $\m{x}_{i,0}=\m{x}$, each client $i$ performs $\tau_i$ local steps (in parallel):
\begin{align}\label{eqn:update1:fedout}
\m{x}_{i,\nu +1} = \m{x}_{i,\nu }-\alpha_i\m{h}_{i,\nu },
\end{align}
and then the server aggregates local models via $\m{x}^+=|\mc{S}|^{-1} \sum_{i\in \mc{S}} \m{x}_{i,\tau_i}$. Here, $\alpha_i \in (0,\alpha]$ is the local stepsize, 
\begin{equation}\label{eqn:hyper:estim}
\begin{aligned}
\m{h}_{i,\nu }:=&\m{h}^{\texttt{I}}(\m{x},\m{y}^{+})+  \m{h}^{\texttt{D}}(\m{x},\m{y}^{+})\\
&-\m{h}_i^{\texttt{D}}(\m{x},\m{y}^{+})+\m{h}_i^{\texttt{D}}(\m{x}_{i,\nu },\m{y}^{+})\,,
\end{aligned}
\end{equation}
$\m{h}^{\texttt{I}}(\m{x},\m{y}) := |\mc{S}|^{-1}\sum_{i\in\mathcal{S}} \m{h}_i^{\texttt{I}} (\m{x},\m{y})$, and $\m{h}^{\texttt{D}}(\m{x},\m{y}) := |\mc{S}|^{-1}\sum_{i\in\mathcal{S}} \m{h}_i^{\texttt{D}} (\m{x},\m{y})$. 

The key features of updates~\eqref{eqn:update1:fedout}--\eqref{eqn:hyper:estim} are exploiting past gradients (variance reduction) to account for objective heterogeneity. Indeed,
the ideal update in \fedout would perform the update $\m{x}_{i,\nu +1} =\m{x}_{i,\nu }-\alpha_i\big(\m{h}^{\texttt{I}}(\m{x}_{i,\nu },\m{y}^{+})+ \m{h}^{\texttt{D}}(\m{x}_{i,\nu },\m{y}^{+})\big)$ using the global gradient estimates. But this requires each client $i$ to have access to both direct and indirect gradients of all other clients--which it does not, since clients do not communicate between rounds. To overcome this issue, each client $i$ uses global gradient estimates, i.e., $ \m{h}^{\texttt{I}}(\m{x},\m{y}^{+})+\m{h}^{\texttt{D}}(\m{x},\m{y}^{+})$ from the beginning of each round as a guiding direction in its local update rule. However, since both $\m{h}^{\texttt{D}}$ and $\m{h}^{\texttt{I}}$ are computed at a previous  $(\m{x},\m{y}^{+})$, client $i$ makes a correction by subtracting off the stale direct gradient estimate $\m{h}_i^{\texttt{D}}(\m{x},\m{y}^{+})$ and adding its own \emph{local} estimate $\m{h}_i^{\texttt{D}}(\m{x}_{i,\nu },\m{y}^{+})$. Our local update rule in Step~11 of Algorithm~\ref{alg:fedout} is precisely of this form, i.e.,  $\m{h}_{i,\nu }$ approximates
$\m{h}^{\texttt{I}}(\m{x}_{i,\nu },\m{y}^{+})+ \m{h}^{\texttt{D}}(\m{x}_{i,\nu },\m{y}^{+})$
via \eqref{eqn:hyper:estim}. Note here that the described local correction of \fedout only applies to the direct gradient component (the indirect component would require global Hessian information). An alterantive approach leading to \lfedblo is discussed in Section \ref{light blo}.


\noindent\textbf{\fedout applied to special nested problems.} Algorithm~\ref{alg:fedout} naturally allows the use of other optimizers for minimax \& compositional optimization. For example, in the minimax problem~\eqref{fedminmax:prob}, the bilevel gradient components are $\nabla^{\texttt{D}} f_i(\m{x},\m{y}^*(\m{x}))=\nabla_{\m{x}}f_i\left(\m{x},\m{y}^*(\m{x})\right)$ and $\nabla^{\texttt{I}} f_i(\m{x},\m{y}^*(\m{x}))=0$ for all $i \in\mc{S}$. Hence, the hypergradient estimate \eqref{eqn:hyper:estim} reduces to 
\begin{equation}\label{eqn:hyper:estim:minmax}
 \m{h}_{i,\nu } = \m{h}^{\texttt{D}}(\m{x},\m{y}^{+})-\m{h}_i^{\texttt{D}}(\m{x},\m{y}^{+})+\m{h}_i^{\texttt{D}}(\m{x}_{i,\nu },\m{y}^{+}). 
\end{equation}
For the compositional problem~\eqref{fedcompos:prob2}, Hessian becomes the identity matrix, the direct gradient is the zero vector, and  $\nabla_{\m{xy}}g(\m{x},\m{y})=- (1/m)\sum_{i=1}^m\nabla \m{r}_i(\m{x})^{\top}$. Hence, $\m{h}_i= \ell_{g,1}\nabla  \m{r}_i(\m{x})^{\top} \m{p}_{0}$ for all $i \in\mc{S}$. 


More details on these special cases are provided in Appendices~\ref{sec:app:minmax}~and~\ref{sec:app:compos}.   

\noindent\textbf{Indirect gradient estimation \& \fedhess.} Here, we aim to address one of the key challenges in nested FL: inverse Hessian gradient product.  Note from \eqref{eq.inverse-estimator.b} that the proposed stochastic approximation of the indirect gradient involves in a nonlinear way the \emph{global} Hessian, which is \emph{not} available at the client. To get around this, we use a client sampling strategy and recursive reformulation of \eqref{eq.inverse-estimator.b} so that $\bar{\nabla}^{\texttt{I}}f_i(\m{x},\m{y})$ can be estimated in an efficient federated manner. In particular, given $N \in \mb{N}$, we select $N' \in \{0 \ldots, N-1\}$ and $\mc{S}_0, \ldots, \mc{S}_{ N'} \in \mc{S}$ UAR. For all $i \in \mc{S}$, we then define
\begin{subequations}
\begin{equation}\label{eq:grad:est} 
\begin{aligned}
&\m{h}^{\texttt{I}}_i(\m{x},\m{y}) =-\nabla^2_{\m{xy}}g_i(\m{x},\m{y};\zeta_{i}) \m{p}_{N'},
\end{aligned}    
\end{equation}
where $\m{p}_{N'}=|\mc{S}_0|^{-1}\wh{\m{H}}_\m{y} \sum_{i\in\mc{S}_0} \nabla_\m{y} f_i(\m{x}, \m{y};\xi_{i,0})$ and $\wh{\m{H}}_\m{y}$ is the approximate inverse Hessian:
\begin{align}
\frac{ N}{\ell_{g,1}} \prod_{n=1}^{N'} \Big (\m{I} - \frac{1}{ \ell_{g,1}  |\mc{S}_n|} \sum_{i=1}^{|\mc{S}_n|} \nabla_{\m{y}}^2 g_i(\m{x},\m{y}; \zeta_{i,n}) \Big).
\label{eq:hessinv:est}
\end{align}
\end{subequations}
The subroutine \fedhess provides a recursive strategy to compute $\m{p}_{N'}$  and \fedout multiplies $\m{p}_{N'}$ 
with the Jacobian to drive an indirect gradient estimate. Importantly, {these approximations require only matrix-vector products and vector communications.}

\begin{lemma}\label{lem:neum:bias}
Under Assumptions~\ref{assu:f} and \ref{assu:bound:var}, the approximate inverse Hessian $\wh{\m{H}}_\m{y}$ defined in \eqref{eq:hessinv:est} satisfies the following for any $\x$ and $\y$:
\begin{align}
\nonumber 
\left\|\left[\nabla^2_{\m{y}} g (\m{x}, \m{y})\right]^{-1} - \mb{E}_{\mc{W}}[\wh{\m{H}}_{\m{y}}]\right\| &\le \frac{1}{\mu_g} \left(\frac{\kappa_g-1}{\kappa_g} \right)^N,\\
\mb{E}_{\mc{W}}\left[\left\|\left[\nabla^{2}_{\m{y}} g (\m{x}, \m{y})\right]^{-1}-\wh{\m{H}}_{\m{y}}\right\|\right] &\le \frac{2}{\mu_g}. \label{Hessian_Invs_ineq}
\end{align}
Here, $\mc{W}:=\left\{\mc{S}_n, \xi_{i},\zeta_{i}, \xi_{i,0}, \zeta_{i,n} \mid i \in \mc{S}_n, ~~~0\leq n\leq  N' \right\}$. Further, for all $~i \in \mc{S}$, $\m{h}_i^{\texttt{I}}(\m{x},\m{y})$ defined in \eqref{eq:grad:est} satisfies
\begin{align}
\left\| \mb{E}_{\mc{W}} \left[\m{h}_i^{\texttt{I}}(\m{x},\m{y})\right]- \bar{\nabla}^{\texttt{I}} f_i(\m{x},\m{y}) \right\|&\leq b, 
\end{align}
where $b:= \kappa_g\ell_{f,1}\big( (\kappa_g-1)/\kappa_g \big)^{N}$.
\end{lemma}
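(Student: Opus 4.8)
The plan is to recognize $\wh{\m{H}}_\m{y}$ as a \emph{randomized} (stochastic, client-sampled) truncation of the Neumann series for $[\nabla^2_\m{y}g]^{-1}$, and to control all three claims by separating the randomness in $\mc{W}$ into three independent pieces: the exponent $N'$, the i.i.d.\ Hessian samples and client sets across levels $n=1,\ldots,N'$, and the level-$0$ gradient block together with the fresh Jacobian sample. Write $\m{G}\defn\nabla^2_\m{y}g(\m{x},\m{y})=\frac1m\sum_i\nabla^2_\m{y}g_i(\m{x},\m{y})$ and $\bar{\m{G}}_n\defn|\mc{S}_n|^{-1}\sum_{i}\nabla^2_\m{y}g_i(\m{x},\m{y};\zeta_{i,n})$. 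By B1 and the UAR client sampling, each $\bar{\m{G}}_n$ is unbiased for $\m{G}$, and the factors for distinct $n$ are built from fresh, mutually independent samples.

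First I would compute $\mb{E}_\mc{W}[\wh{\m{H}}_\m{y}]$. Conditioning on $N'$ and using that the expectation of a product of \emph{independent} matrices factorizes, I get $\mb{E}[\prod_{n=1}^{N'}(\m{I}-\ell_{g,1}^{-1}\bar{\m{G}}_n)\mid N']=(\m{I}-\ell_{g,1}^{-1}\m{G})^{N'}$. Averaging over $N'\sim\Unif\{0,\ldots,N-1\}$ and multiplying by $N/\ell_{g,1}$ cancels the $N/N$ factor and yields the truncated Neumann sum $\mb{E}_\mc{W}[\wh{\m{H}}_\m{y}]=\ell_{g,1}^{-1}\sum_{j=0}^{N-1}(\m{I}-\ell_{g,1}^{-1}\m{G})^j$. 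Since $\mu_g\m{I}\preceq\m{G}\preceq\ell_{g,1}\m{I}$ gives $\|\m{I}-\ell_{g,1}^{-1}\m{G}\|\le(\kappa_g-1)/\kappa_g<1$, the full series converges to $\m{G}^{-1}$, so the bias is exactly the tail $\ell_{g,1}^{-1}\sum_{j\ge N}(\m{I}-\ell_{g,1}^{-1}\m{G})^j$; bounding its norm by a geometric series and using $\kappa_g/\ell_{g,1}=1/\mu_g$ gives the first inequality.

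For the second inequality I would use the triangle inequality $\mb{E}_\mc{W}[\|\m{G}^{-1}-\wh{\m{H}}_\m{y}\|]\le\|\m{G}^{-1}\|+\mb{E}_\mc{W}[\|\wh{\m{H}}_\m{y}\|]$, with $\|\m{G}^{-1}\|\le1/\mu_g$. The step needing the most care is bounding the second term: I invoke strong convexity and smoothness of the \emph{sampled} inner Hessians, $\mu_g\m{I}\preceq\nabla^2_\m{y}g_i(\cdot;\zeta)\preceq\ell_{g,1}\m{I}$ (the per-realization reading of A1--A2), so that each factor is surely a contraction, $\|\m{I}-\ell_{g,1}^{-1}\bar{\m{G}}_n\|\le(\kappa_g-1)/\kappa_g$, hence $\|\prod_{n=1}^{N'}(\cdot)\|\le((\kappa_g-1)/\kappa_g)^{N'}$. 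Taking expectation over $N'$ and summing the geometric series bounds $\mb{E}_\mc{W}[\|\wh{\m{H}}_\m{y}\|]$ by $\ell_{g,1}^{-1}\kappa_g=1/\mu_g$, closing the bound at $2/\mu_g$. I flag this as the crux: without the per-sample lower bound $\bar{\m{G}}_n\succeq\mu_g\m{I}$ one only gets the useless $\|\wh{\m{H}}_\m{y}\|\le N/\ell_{g,1}$.

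Finally, for the indirect-gradient bias I would exploit independence of the fresh Jacobian sample $\zeta_i$ in $\m{h}_i^\texttt{I}=-\nabla^2_{\m{xy}}g_i(\cdot;\zeta_i)\m{p}_{N'}$ from all randomness in $\m{p}_{N'}$, and of the level-$0$ block $(\mc{S}_0,\xi_{i,0})$ from the Hessian levels $n\ge1$. Factoring the expectation gives $\mb{E}_\mc{W}[\m{h}_i^\texttt{I}]=-\nabla^2_{\m{xy}}g_i\,\mb{E}_\mc{W}[\wh{\m{H}}_\m{y}]\,\nabla_\m{y}f$, where the \emph{global} gradient $\nabla_\m{y}f=\frac1m\sum_i\nabla_\m{y}f_i$ appears because $\mc{S}_0$ is UAR and the samples $\xi_{i,0}$ are unbiased. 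With $\bar{\nabla}^\texttt{I}f_i=-\nabla^2_{\m{xy}}g_i[\nabla^2_\m{y}g]^{-1}\nabla_\m{y}f$, the difference equals $\nabla^2_{\m{xy}}g_i(\mb{E}_\mc{W}[\wh{\m{H}}_\m{y}]-[\nabla^2_\m{y}g]^{-1})\nabla_\m{y}f$; bounding $\|\nabla^2_{\m{xy}}g_i\|\le\ell_{g,1}$ and $\|\nabla_\m{y}f\|\le\ell_{f,1}$ via A1 and plugging in the first inequality yields $b=\kappa_g\ell_{f,1}((\kappa_g-1)/\kappa_g)^N$. The only genuinely delicate points throughout are the matrix-product-expectation identity (which needs mutual independence across levels, not mere unbiasedness) and the per-realization spectral bounds on the sampled Hessians.
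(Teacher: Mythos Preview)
Your proposal is correct and follows essentially the same route as the paper: the paper computes $\mb{E}_{\mc{W}}[\wh{\m{H}}_\m{y}]$ by the same independence-then-uniform-averaging argument to obtain the truncated Neumann sum, invokes the same per-realization spectral bounds $\mu_g\m{I}\preceq\nabla^2_\m{y}g_i\preceq\ell_{g,1}\m{I}$ to control $\mb{E}_{\mc{W}}[\|\wh{\m{H}}_\m{y}\|]\le 1/\mu_g$, and then defers the remaining steps to \citet{ghadimi2018approximation}; you have simply written those out. One small slip: under A1 it is $f_i$ that is $\ell_{f,0}$-Lipschitz, so the correct bound is $\|\nabla_\m{y} f\|\le\ell_{f,0}$, not $\ell_{f,1}$---but since the lemma statement itself uses $\ell_{f,1}$ in $b$, this discrepancy is inherited from the paper rather than introduced by you.
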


\subsection{Inner Optimizer: \fedinn}\label{sec:fedinn}

In FL, each client performs multiple local training steps in isolation on its own data (using for example SGD) before communicating with the server. Due to such local steps, \fedavg  suffers from a \textit{client-drift} effect under objective heterogeneity; that is, the local iterates of each client drift-off towards the minimum of their own local function. In turn, this can  lead to convergence to a point different from the  global optimum $\y^*(\x)$ of the inner problem; e.g., see \citep{mitra2021linear}. This behavior is particularly undesirable in a nested optimization setting since it directly affects the outer optimization; see, e.g. \citep[Section~7]{liu2021investigating}.  

In light of this observation, we build on the recently proposed \textsc{FedLin}~\citep{mitra2021linear} which improves \textsc{FedSVRG}~\citep{konevcny2016federated} to solve the inner problem; see Algorithm~\ref{alg:fedinn}. For each $i\in \mc{S}$, let  $\m{q}_i(\m{x}, \m{y})$ denote an unbiased estimate of the gradient $\nabla_{\m{y}} g_i(\m{x},\m{y})$. In each round, starting from a common global model $\m{y}$, each client $i$ performs $\tau_i$ local SVRG-type training steps in parallel: 
$\m{y}_{i,\nu +1} = \m{y}_{i,\nu }-\beta_i\m{q}_{i,\nu },$  
where $\m{q}_{i,\nu }:= \m{q}_i(\m{x},\m{y}_{i,\nu })-\m{q}_i(\m{x},\m{y})+\m{q}(\m{x},\m{y})$, $\beta_i \in (0,\beta]$ is the local inner stepsize, and $\m{q}(\m{x},\m{y}) := |\mc{S}|^{-1}\sum_{i\in\mathcal{S}} \m{q}_i(\m{x},\m{y})$. We note that for the optimization problems \eqref{fedblo:prob}, \eqref{fedminmax:prob}, and \eqref{fedcompos:prob2}, $\m{q}_i(\m{x},\m{y}_{i,\nu })$ is equal to $\nabla_\m{y} g_i(\m{x}, \m{y}_{i,\nu };\zeta_{i,\nu })$, $-\nabla_\m{y} f_i(\m{x}, \m{y}_{i,\nu };\xi_{i,\nu })$, and $\m{y}_{i,\nu }-\m{r}_i(\m{x};\zeta_{i,\nu })$, respectively; see  Appendices~\ref{sec:app:bilevel}--\ref{sec:app:compos}.
\begin{algorithm}[t]
\caption{$ \m{y}^+~=~\pmb{\fedinn}~(\m{x},\m{y},\beta$) 
}
\label{alg:fedinn}
\begin{algorithmic}[1]
\State $\funcin_i(\cdot)\gets$ \colorbox{cyan!30}{$\nabla_\m{y} g_i(\m{x},\cdot)$~\text{(bilevel)}},\colorbox{green!30}{$-\nabla_\m{y} f_i(\m{x},\cdot)$~\text{(minimax)}}
 \State $\m{y}_{i,0}=\m{y}$ and $\beta_i \in (0,\beta]$
  \For{$i \in \mc{S}$ \textbf{in parallel}} 
 \State $\m{q}_i=\funcin_i(\m{y};  \zeta_{i})$
 \EndFor
 \State $\m{q}=|\mathcal{S}|^{-1}\sum_{i\in\mathcal{S}}\m{q}_i$ 
 \For {$i \in \mc{S}$ \textbf{in parallel}} 
\For {$\nu=0,\ldots,\tau_i-1$} 
\State $\m{q}_{i,\nu }=\funcin_i(\m{y}_{i,\nu };\zeta_{i,\nu })-\funcin_i(\m{y};\zeta_{i,\nu })+\m{q}$
\State $\m{y}_{i,\nu +1}= \m{y}_{i,\nu }-\beta_i\m{q}_{i,\nu }$
\EndFor
\EndFor
\State $\m{y}^{+}=|\mathcal{S}|^{-1}\sum_{i\in\mathcal{S}}\m{y}_{i,\tau_i}$ 
\end{algorithmic}
\end{algorithm}

\subsection{Light-\fedblo: Communication Efficiency via Local Hypergradients}\label{light blo}
 Each \fednest epoch $k$ requires $2T+N+3$ communication rounds as follows: $2T$ rounds for SVRG of \fedinn, 
$N$ iterations for inverse Hessian approximation within \fedhess and $3$ additional aggregations. Note that, these are vector communications and we fully avoid Hessian communication. 
In Appendix~\ref{sec:simp:algs}, we also propose simplified variants of \fedout and \fedhess, {which are tailored to homogeneous or  high-dimensional FL settings}. These algorithms can then either use \textit{local} Jacobian / inverse Hessian or their approximation, and can use either SVRG or SGD. 

\noindent\textbf{Light-\fedblo:} Specifically, we propose \lfedblo where each client runs IHGP locally.  This reduces the number of rounds to $T+1$, saving $T+N+2$ rounds (see experiments in Section \ref{sec:numerics} for performance comparison and Appendix~\ref{sec:simp:algs} for further discussion.)
\section{Convergence Analysis for \fedblo}\label{sec:theory}
In this section, we present convergence results for \fedblo. All proofs are relegated to Appendices~\ref{sec:app:bilevel}--\ref{sec:app:compos}.
\begin{theorem}\label{thm:fednest}
Suppose Assumptions~\ref{assu:f} and \ref{assu:bound:var} hold.  Further, assume  $\alpha_i^k=\alpha_k/\tau_i$ and $\beta_i^k=\beta_k/\tau_i$ for all $ i \in \mathcal{S}$, where 
\begin{align}\label{eqn:param:choice}
\beta_k &= \frac{\bar{\beta} \alpha_k}{T},~~~\alpha_k=\min\left\{\bar{\alpha}_1, \bar{\alpha}_2, \bar{\alpha}_3, \frac{\bar{\alpha}}{\sqrt{K}}\right\}
\end{align}
for some positive constants $\bar\alpha_1,\bar\alpha_2,\bar\alpha_3,\bar{\alpha}$, and $\bar{\beta}$ independent of $K$. Then, for any $T\geq 1$, the iterates $\{(\m{x}^k,\m{y}^k)\}_{k\geq0}$ generated by \fednest satisfy
\begin{equation*}
\begin{aligned}
 \frac{1}{K} \sum_{k=1}^K\mb{E}\left[\left\|\nabla f(\m{x}^k)\right\|^2\right] =~&{\cal O}\Big(\frac{\bar{\alpha}\max(\sigma_{g,1}^2, \sigma_{g,2}^2,\sigma_f^2)}{\sqrt{K}} \\
& + \frac{1}{\min(\bar{\alpha}_1, \bar{\alpha}_2, \bar{\alpha}_3)K} + b^2 \Big),
\end{aligned}
\end{equation*}
where  $b=\kappa_g\ell_{f,1}\big( (\kappa_g-1)/\kappa_g \big)^{N}$ and $N$ is the input parameter to \fedhess.
\end{theorem}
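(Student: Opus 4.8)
The plan is to track a Lyapunov function that couples the outer objective value with the inner tracking error $\Delta^k := \mb{E}[\|\m{y}^k - \m{y}^*(\m{x}^k)\|^2]$ and show it decreases per outer round up to controllable error. First I would record the regularity consequences of Assumption~\ref{assu:f} and the implicit-function formula in Lemma~\ref{lem:IFT}: the hyper-objective $f$ is $L_f$-smooth and the solution map $\m{x}\mapsto\m{y}^*(\m{x})$ is $L_y$-Lipschitz, where $L_f,L_y$ are polynomials in $\kappa_g,\ell_{f,0},\ell_{f,1},\ell_{g,1},\ell_{g,2}$. These are standard and follow from strong convexity of $g$ (A2) together with the Lipschitz bounds (A1). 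I would also invoke Lemma~\ref{lem:neum:bias} to split the \fedout direction into the true hypergradient plus a deterministic bias of magnitude $b$ plus zero-mean noise.

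Next I would derive two coupled recursions. For the outer variable, $L_f$-smoothness gives a descent inequality
\begin{equation*}
\mb{E}[f(\m{x}^{k+1})] \le \mb{E}[f(\m{x}^k)] - \tfrac{\alpha_k}{2}\mb{E}\|\nabla f(\m{x}^k)\|^2 + \alpha_k\,\mb{E}\|\m{e}^k\|^2 + \tfrac{L_f\alpha_k^2}{2}\mb{E}\|\bar{\m{h}}^k\|^2,
\end{equation*}
where $\bar{\m{h}}^k$ is the aggregated local direction and $\m{e}^k$ collects (i) the gap between $\bar\nabla f(\m{x}^k,\m{y}^{k+1})$ and $\nabla f(\m{x}^k)$, bounded via $L_y$-Lipschitzness by $\order{\Delta^k}$ plus the bias $b$, and (ii) the client-drift terms $\sum_\nu\|\m{x}_{i,\nu}-\m{x}^k\|^2$ from the local steps, which the variance-reduced correction in \eqref{eqn:hyper:estim} keeps $\order{\alpha_k^2}$. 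The second moment $\mb{E}\|\bar{\m{h}}^k\|^2$ is controlled by $\order{\|\nabla f(\m{x}^k)\|^2 + \Delta^k + b^2 + \max(\sigma_f^2,\sigma_{g,1}^2,\sigma_{g,2}^2)}$ using the moment bounds of Lemma~\ref{lem:neum:bias} and Assumption~\ref{assu:bound:var}. For the inner variable, the linear convergence of \fedinn on the $\mu_g$-strongly convex global inner objective (variance-reduced, hence client-drift-free) over $T$ steps, followed by $L_y$-Lipschitzness to absorb $\|\m{y}^*(\m{x}^{k+1})-\m{y}^*(\m{x}^k)\|^2 = \order{\alpha_k^2\,\mb{E}\|\bar{\m{h}}^k\|^2}$, yields
\begin{equation*}
\Delta^{k+1} \le (1-c_0)\,\Delta^k + \order{\beta_k\max(\sigma_{g,1}^2,\sigma_{g,2}^2)} + \order{\alpha_k^2\,\mb{E}\|\bar{\m{h}}^k\|^2}
\end{equation*}
for some $c_0\in(0,1)$ depending on $T$ and $\beta_k$.

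Finally I would combine the two recursions into $V^k := \mb{E}[f(\m{x}^k)]-\min f + c\,\Delta^k$, choosing $c$ large enough that the contraction $-c\,c_0\Delta^k$ dominates the $+\alpha_k\order{\Delta^k}$ penalty in the outer descent, and choosing $\alpha_k$ small enough (which is what fixes $\bar\alpha_1,\bar\alpha_2,\bar\alpha_3$) that the $\alpha_k^2\mb{E}\|\bar{\m{h}}^k\|^2$ and cross-terms are absorbed. This gives $V^{k+1} \le V^k - \tfrac{\alpha_k}{4}\mb{E}\|\nabla f(\m{x}^k)\|^2 + \order{\alpha_k b^2} + \order{\alpha_k^2\max(\sigma_f^2,\sigma_{g,1}^2,\sigma_{g,2}^2)}$. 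Telescoping over $k=1,\dots,K$, dividing by $\sum_k\alpha_k$, and substituting $\alpha_k=\min\{\bar\alpha_1,\bar\alpha_2,\bar\alpha_3,\bar\alpha/\sqrt{K}\}$ and $\beta_k=\bar\beta\alpha_k/T$ converts the variance term into $\order{\bar\alpha\max(\sigma_f^2,\sigma_{g,1}^2,\sigma_{g,2}^2)/\sqrt K}$, the additive constants into $\order{1/(\min(\bar\alpha_1,\bar\alpha_2,\bar\alpha_3)K)}$, and leaves the irreducible bias $b^2$, matching the claim. The main obstacle is the mutual coupling: each outer step inflates the inner error through $\|\m{y}^*(\m{x}^{k+1})-\m{y}^*(\m{x}^k)\|$, while the inner error corrupts the hypergradient, so the potential weight $c$ and the step-size thresholds must be selected simultaneously. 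A secondary difficulty is bounding the second moment of the nonlinear \fedhess estimator, whose indirect part multiplies a random product of sampled Hessians (with the UAR index $N'$) by a sampled Jacobian, which requires the variance control of Lemma~\ref{lem:neum:bias} together with (B2).
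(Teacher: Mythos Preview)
Your proposal is correct and follows essentially the same route as the paper: the paper likewise defines the Lyapunov function $\mb{W}^k = f(\m{x}^k) + \tfrac{M_f}{L_{\m{y}}}\|\m{y}^k-\m{y}^*(\m{x}^k)\|^2$, combines a descent lemma for $f$ (your outer recursion), an inner-error recursion for $\|\m{y}^{k+1}-\m{y}^*(\m{x}^{k+1})\|^2$ built from the \fedinn contraction plus the $L_{\m{y}}$-Lipschitz perturbation $\|\m{y}^*(\m{x}^{k+1})-\m{y}^*(\m{x}^k)\|$, and a client-drift bound, then chooses the step-size thresholds $\bar\alpha_1,\bar\alpha_2,\bar\alpha_3$ to absorb the cross terms before telescoping. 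The only cosmetic differences are that the paper fixes the potential weight to $M_f/L_{\m{y}}$ (rather than leaving $c$ free) and the inner-noise term scales as $T\beta_k^2\sigma_{g,1}^2$ rather than $\beta_k$, which with $\beta_k=\bar\beta\alpha_k/T$ still lands at $\order{\alpha_k^2}$ as you need.
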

%
%
\begin{corollary}[\textbf{Bilevel}]\label{thm:fednest:bilevel}
Under the same conditions as in Theorem \ref{thm:fednest},  if
{\small $N={\cal O}(\kappa_g \log K)$ and $T=\mc{O}(\kappa_g^{4})$}, then 
\begin{equation*}
 \frac{1}{K} \sum_{k=1}^K\mb{E}\left[\left\|\nabla f(\m{x}^k)\right\|^2\right]= \mc{O}\left(\frac{\kappa_g^4}{K} + \frac{\kappa_g^{2.5}}{\sqrt{K}}\right).
\end{equation*}
For $\epsilon$-accurate stationary point, we need $K=\mc{O}(\kappa^5_g \epsilon^{-2})$.
\end{corollary}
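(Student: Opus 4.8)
The plan is to treat Corollary~\ref{thm:fednest:bilevel} as a direct specialization of Theorem~\ref{thm:fednest}: I would take the three-term bound
$$\frac{1}{K}\sum_{k=1}^K \mb{E}[\|\nabla f(\m{x}^k)\|^2] = \mc{O}\Big(\tfrac{\bar{\alpha}\max(\sigma_{g,1}^2,\sigma_{g,2}^2,\sigma_f^2)}{\sqrt K} + \tfrac{1}{\min(\bar{\alpha}_1,\bar{\alpha}_2,\bar{\alpha}_3)K} + b^2\Big)$$
and make the $\kappa_g$-dependence of every constant explicit before substituting the prescribed $N=\mc{O}(\kappa_g\log K)$ and $T=\mc{O}(\kappa_g^4)$. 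Since the variances $\sigma_{g,1}^2,\sigma_{g,2}^2,\sigma_f^2$ are treated as $\mc{O}(1)$ constants under Assumption~\ref{assu:bound:var}, every surviving $\kappa_g$ factor must be extracted from $\bar{\alpha},\bar{\alpha}_1,\bar{\alpha}_2,\bar{\alpha}_3$ and from the bias $b$.

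First I would dispatch the bias term, which is the easiest piece. By Lemma~\ref{lem:neum:bias}, $b=\kappa_g\ell_{f,1}\big((\kappa_g-1)/\kappa_g\big)^N \le \kappa_g\ell_{f,1}\,e^{-N/\kappa_g}$, using $(1-1/\kappa_g)^{\kappa_g}\le e^{-1}$. Choosing $N=\kappa_g\log K$ (which realizes the stated $\mc{O}(\kappa_g\log K)$) gives $e^{-N/\kappa_g}=1/K$, hence $b\le \kappa_g\ell_{f,1}/K$ and $b^2=\mc{O}(1/K^2)$. This is negligible next to the $\kappa_g^4/K$ term, so the inverse-Hessian bias never affects the leading order of the rate; the logarithmic-in-$K$ cost of $N$ is exactly what buys this.

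The crux is extracting the polynomial $\kappa_g$-dependence of the stepsize constants from the proof of Theorem~\ref{thm:fednest}. Here I would use the fact (following from Assumption~\ref{assu:f} via the implicit-function formula of Lemma~\ref{lem:IFT}) that the hyperobjective $f$ has a Lipschitz gradient with constant $L_f=\mc{O}(\kappa_g^3)$, since the indirect component couples the inverse Hessian (spectral norm $\le 1/\mu_g$) with the Jacobian and $\nabla_\m{y} f_i$, producing $1/\mu_g$ and $1/\mu_g^2$ factors. The thresholds $\bar{\alpha}_1,\bar{\alpha}_2,\bar{\alpha}_3$ are descent/stability conditions on the outer stepsize and therefore scale inversely with this effective smoothness and with the inner-solve accuracy; tracking these together with the choice $T=\mc{O}(\kappa_g^4)$ — long enough that the \fedinn error is driven small enough for the hypergradient estimate to be usable in the outer descent — yields $\min(\bar{\alpha}_1,\bar{\alpha}_2,\bar{\alpha}_3)=\Theta(\kappa_g^{-4})$, so that $\tfrac{1}{\min(\bar{\alpha}_i)K}=\mc{O}(\kappa_g^4/K)$. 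The stochastic term is handled by showing $\bar{\alpha}\max(\sigma^2)=\Theta(\kappa_g^{2.5})$, where the half-integer power traces back to the interaction between the inverse-Hessian-weighted variance of the \fedhess estimator and the smoothness-limited stepsize; summing the two surviving terms gives exactly $\mc{O}(\kappa_g^4/K+\kappa_g^{2.5}/\sqrt K)$.

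Finally, to read off the iteration complexity I would balance the two terms: for large $K$ the $\kappa_g^{2.5}/\sqrt K$ term dominates, so requiring $\kappa_g^{2.5}/\sqrt K \le \epsilon$ forces $K=\mc{O}(\kappa_g^5\epsilon^{-2})$, at which point $\kappa_g^4/K=\mc{O}(\kappa_g^{-1}\epsilon^2)$ is already below $\epsilon$; this matches the claimed $K=\mc{O}(\kappa_g^5\epsilon^{-2})$ and the Table~\ref{table:blo2:results} sample counts. The main obstacle is the bookkeeping of the third paragraph — certifying $\min(\bar{\alpha}_i)=\Theta(\kappa_g^{-4})$ and the $\kappa_g^{2.5}$ prefactor — because it requires re-deriving the hidden constants inside Theorem~\ref{thm:fednest} rather than using it as a black box, whereas the bias bound and the complexity inversion are routine by comparison.
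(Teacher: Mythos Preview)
Your overall plan matches the paper's: specialize Theorem~\ref{thm:fednest} by tracking the $\kappa_g$-dependence of all constants, choose $N$ so that $b^2$ is absorbed, verify $\min(\bar\alpha_1,\bar\alpha_2,\bar\alpha_3)=\Theta(\kappa_g^{-4})$, and invert for $K$. The bias argument and the final complexity inversion are fine (the paper actually takes the slightly weaker $b=K^{-1/4}$, but your sharper $b^2=\mc{O}(K^{-2})$ works just as well).

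The gap is in your handling of the $1/\sqrt{K}$ coefficient. You write ``$\bar\alpha\max(\sigma^2)=\Theta(\kappa_g^{2.5})$,'' which with $\sigma^2=\mc{O}(1)$ forces $\bar\alpha=\Theta(\kappa_g^{2.5})$. This is backwards: in the paper $\bar\alpha=\mc{O}(\kappa_g^{-2.5})$. The issue is that the displayed bound in Theorem~\ref{thm:fednest} hides, inside its $\mc{O}(\cdot)$, both a term $\Delta_{\mb{W}}/(\bar\alpha\sqrt{K})$ from telescoping the Lyapunov function and $\kappa_g$-dependent multipliers $c_1,c_2$ on the variance terms. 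The sharper inequality actually obtained in the proof (eq.~\eqref{eqn:dec:final:bilevel}) has $1/\sqrt{K}$-coefficient proportional to $\Delta_{\mb{W}}/\bar\alpha + c_1\bar\alpha\,\sigma_{g,1}^2 + c_2\bar\alpha\,\tilde\sigma_f^2$, with $c_1=\mc{O}(\kappa_g^9/T)$ and $c_2=\mc{O}(\kappa_g^3)$ once the free parameter $\eta$ in \eqref{eqn:a1-3} is set to $M_f/L_{\m{y}}=\mc{O}(\kappa_g)$. Taking $T=\mc{O}(\kappa_g^4)$ makes $c_1=\mc{O}(\kappa_g^5)$; minimizing $1/\bar\alpha+\kappa_g^5\bar\alpha$ over $\bar\alpha$ then gives $\bar\alpha=\kappa_g^{-2.5}$ and coefficient $\kappa_g^{2.5}$. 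So the half-integer power is the usual square-root balance between the Lyapunov-gap term $1/\bar\alpha$ and the (inner-variance) term $c_1\bar\alpha$, not a property of $\bar\alpha$ by itself. You already anticipated that the crux is unpacking the constants hidden in Theorem~\ref{thm:fednest}; the only correction is the sign of the exponent on $\bar\alpha$ and the mechanism generating $\kappa_g^{2.5}$. The same unpacking, via $M_f=\mc{O}(\kappa_g^2)$ in Lemma~\ref{lem:lips} and $\bar\alpha_3=1/(216M_f^2+5M_f)$ in \eqref{eqn2:dec:lyap}, is what gives $\min_i\bar\alpha_i=\bar\alpha_3=\mc{O}(\kappa_g^{-4})$.
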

Above, we choose $N\propto \kappa_g \log K$ to guarantee $b^2\lesssim 1/\sqrt{K}$. In contrast, we use $T\gtrsim\kappa_g^4$ inner SVRG epochs. From Section~\ref{light blo}, this would imply the communication cost is dominated by SVRG epochs $N$ and $\order{\kappa_g^4}$ rounds. 

From Corollary~\ref{thm:fednest:bilevel}, we remark that \fednest matches the guarantees of non-federated alternating SGD methods, such as ALSET~\cite{chen2021closing} and  {BSA} \citep{ghadimi2018approximation}, despite federated setting, i.e. communication challenge, heterogeneity in the client objectives, and device heterogeneity.
\begin{figure*}
\vspace{-10pt}
    \centering
    \begin{subfigure}{0.31\textwidth}
    \centering
        \begin{tikzpicture}
        \node at (0,0) {\includegraphics[scale=0.33]{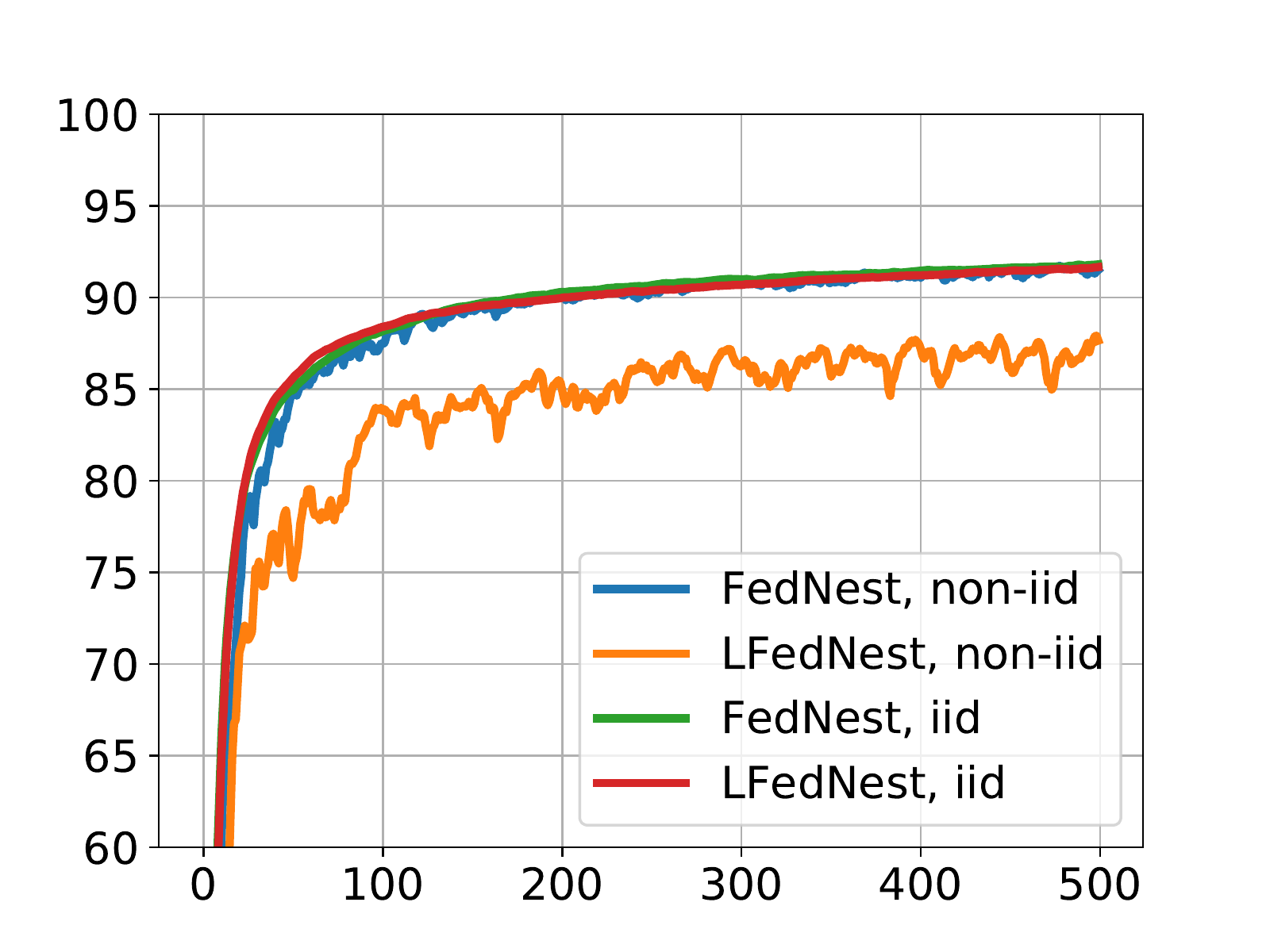}};
        \node at (0,-2.1) [scale=0.9]{\textsc{FedNest} epochs};
        \node at (-2.6,0) [scale=0.9, rotate=90]{Test accuracy};
        \end{tikzpicture}\vspace{-5pt}\caption{Comparison between \textsc{FedNest} and \textsc{LFedNest}.}
        \label{fig:hr_global_local}
    \end{subfigure}\hspace{5pt}\begin{subfigure}{0.31\textwidth}
    \centering
        \begin{tikzpicture}
        \node at (0,0) {\includegraphics[scale=0.33]{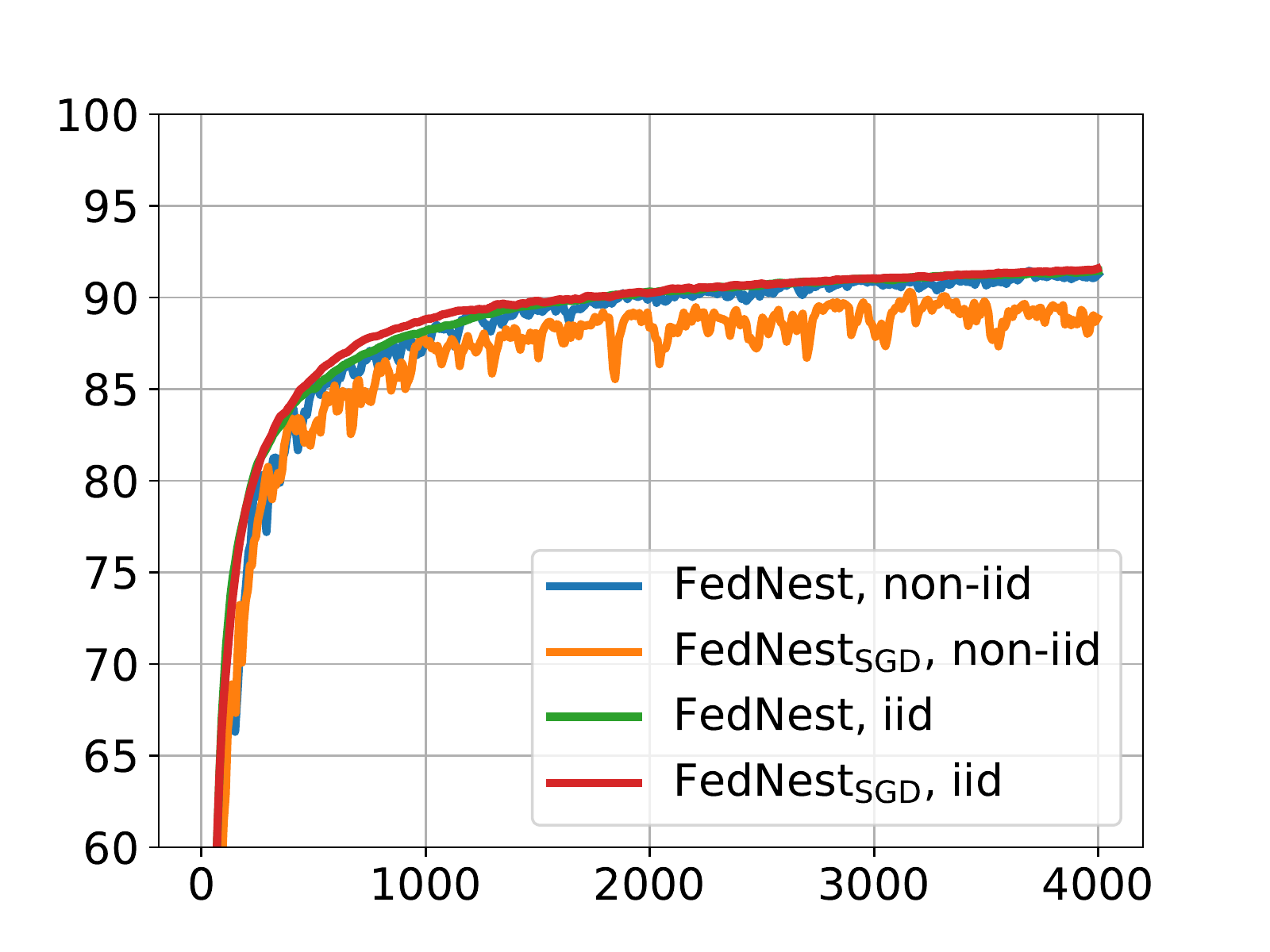}};
        \node at (0,-2.1) [scale=0.9]{Communication rounds};
        
        \end{tikzpicture}\vspace{-5pt}\caption{SVRG in \fedinn provides better convergence and stability.}\label{fig:hr_svrg_sgd}
    \end{subfigure}\hspace{5pt}\begin{subfigure}{0.31\textwidth}
    \centering
        \begin{tikzpicture}
        \node at (0,0) {\includegraphics[scale=0.33]{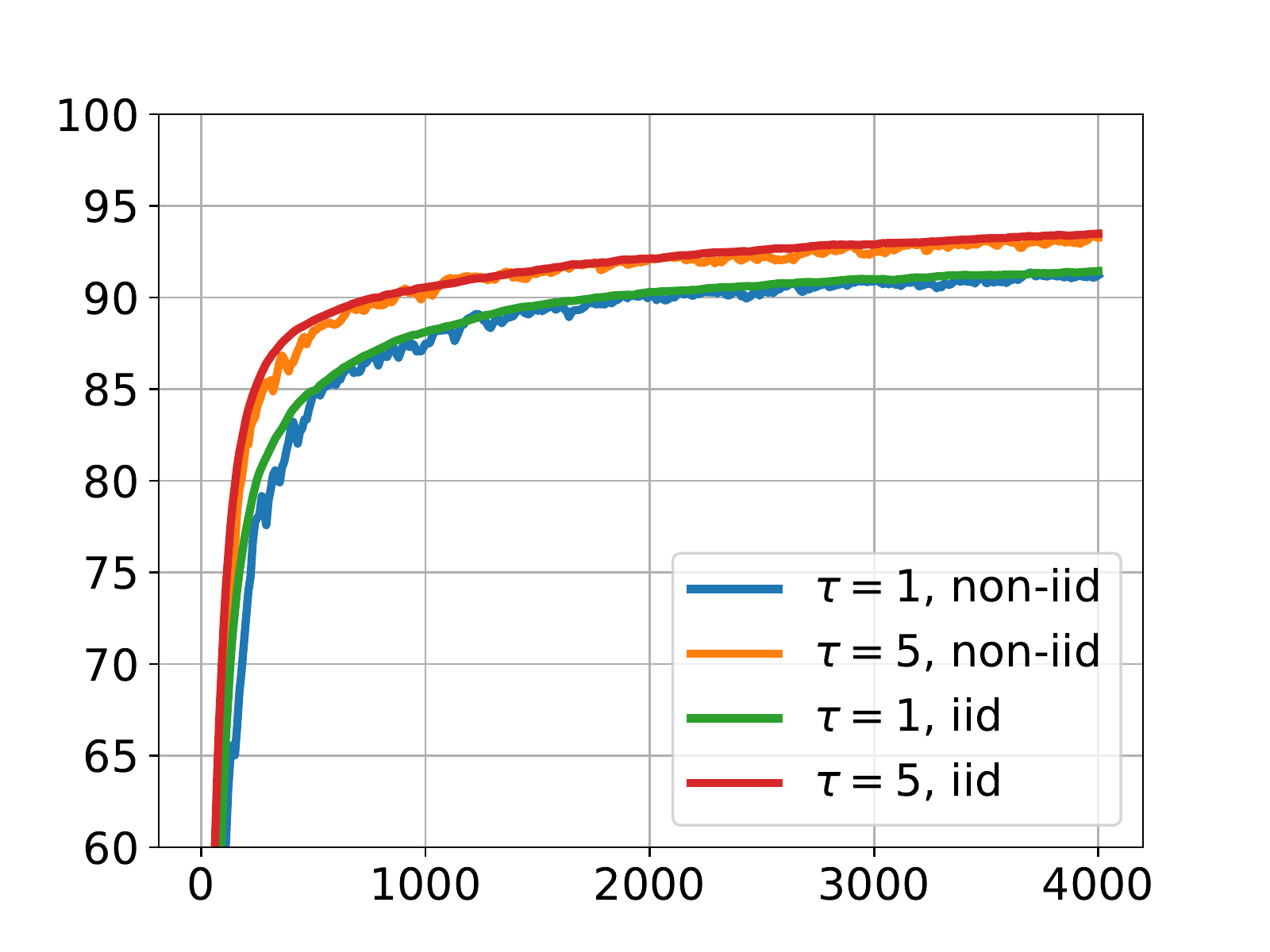}};
        \node at (0,-2.1) [scale=0.9]{Communication rounds};
        \end{tikzpicture}\vspace{-5pt}\caption{Larger $\tau$ in \fedout provides better performance.}\label{fig:hr_tau}
    \end{subfigure}\vspace{-6pt}\caption{Hyper-representation experiments on a 2-layer MLP and MNIST dataset.}\label{fig:hr_exp}
    \vspace{-8pt}
\end{figure*}
\vspace{-.3cm}
\subsection{Minimax Federated Learning}
We focus on special features of federated minimax problems and customize the general results to yield improved convergence results for this special case. Recall from \eqref{fedminmax:prob} that $g_i(\x,\y)=-f_i(\x,\y)$ which implies that $b=0$ and following Assumption \ref{assu:f},  $f_i(\x,\y)$ is $\mu_f$--strongly concave in $\y$ for all $\x$.
\begin{corollary}[\textbf{Minimax}]\label{thm:fednest:minmax}
Denote $\kappa_f=\ell_{f,1}/\mu_f$. Assume same conditions as in Theorem~\ref{thm:fednest}  and 
 $T=\mc{O}(\kappa_f)$. Then, 
\begin{equation*}
 \frac{1}{K} \sum_{k=1}^K\mb{E}\left[\left\|\nabla f(\m{x}^k)\right\|^2\right]=\mc{O}\left(\frac{\kappa^2_f}{K} + \frac{\kappa_f}{\sqrt{K}}\right).
\end{equation*}
\end{corollary}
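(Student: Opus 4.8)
The plan is to derive Corollary~\ref{thm:fednest:minmax} as a specialization of the general bound in Theorem~\ref{thm:fednest}, exploiting the structural simplifications induced by the reduction $g_i=-f_i$. First I would record the two facts flagged in the text preceding the statement. (i) The averaged inner objective is maximized at $\m{y}^*(\m{x})$, so by Danskin's theorem $\nabla f(\m{x})=\nabla_\m{x}f(\m{x},\m{y}^*(\m{x}))$: the aggregate indirect component of Lemma~\ref{lem:IFT} vanishes at the inner optimum, the \fedhess routine (and hence the parameter $N$) is never invoked, and the bias of Lemma~\ref{lem:neum:bias} is exactly $b=0$. Consequently the $b^2$ term in the Theorem~\ref{thm:fednest} bound is removed and no Hessian-variance $\sigma_{g,2}$ enters. (ii) Under Assumption~\ref{assu:f}, $-f_i$ is $\mu_f$-strongly convex in $\m{y}$ with $\ell_{f,1}$-Lipschitz gradient, so the condition number driving \fedinn (run on $-\nabla_\m{y}f_i$, per the green branch of Algorithm~\ref{alg:fedinn}) is exactly $\kappa_f=\ell_{f,1}/\mu_f$, taking over the role of $\kappa_g$.

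With $b=0$, it remains to track the $\kappa_f$-dependence of the step-size constants $\bar\alpha_1,\bar\alpha_2,\bar\alpha_3,\bar\alpha$ and of the inner length $T$. The governing quantity is the smoothness of the hypergradient. Since $\m{y}^*(\m{x})$ is $\kappa_f$-Lipschitz (strong concavity together with the implicit function theorem) and $\nabla_\m{x}f$ is $\ell_{f,1}$-Lipschitz jointly, a triangle inequality gives $\|\nabla f(\m{x}_1)-\nabla f(\m{x}_2)\|\le \ell_{f,1}(1+\kappa_f)\|\m{x}_1-\m{x}_2\|$, i.e. the effective smoothness is $L_f=\order{\ell_{f,1}\kappa_f}$. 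This is dramatically smaller than in the bilevel case of Corollary~\ref{thm:fednest:bilevel}, where the inverse-Hessian and Jacobian factors inflate $L_f$ to order $\kappa_g^3$; that collapse is precisely what lowers the powers of $\kappa$ in the final rate. Feeding $L_f=\order{\ell_{f,1}\kappa_f}$ and $b=0$ into the proof of Theorem~\ref{thm:fednest} and re-reading the admissible step-size thresholds shows $\min(\bar\alpha_1,\bar\alpha_2,\bar\alpha_3)=\Omega(\kappa_f^{-2})$, whence $1/(\min(\cdot)K)=\order{\kappa_f^2/K}$, while the variance/step-size term contributes $\order{\kappa_f/\sqrt{K}}$; together these give the claimed bound.

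The remaining ingredient is the choice $T=\order{\kappa_f}$. Here I would invoke the linear-convergence guarantee established for single-level \fedinn (\textsc{FedSVRG}): after $T$ variance-reduced epochs the inner error contracts as $\|\m{y}^{k+1}-\m{y}^*(\m{x}^k)\|^2\le \rho^{T}\|\m{y}^k-\m{y}^*(\m{x}^k)\|^2$ with $\rho=1-\Theta(\kappa_f^{-1})$, and crucially, because of variance reduction, there is no additive noise floor obstructing convergence to the exact inner minimizer. Taking $T=\Theta(\kappa_f)$ makes $\rho^T\le e^{-\Theta(1)}$ a fixed constant below one, which is all that is needed: combined with the per-round drift $\|\m{y}^*(\m{x}^{k+1})-\m{y}^*(\m{x}^k)\|\le\kappa_f\alpha_k\|\nabla f(\m{x}^k)\|$, a constant contraction keeps the inner-approximation error at the $\order{\alpha_k^2}$ scale already absorbed by the outer descent analysis.

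The main obstacle is precisely this coupling between inner accuracy and outer progress: the inner target $\m{y}^*(\m{x}^k)$ is itself moving as $\m{x}^k$ updates, so the contraction $\rho^T$ must dominate the per-round target drift. I would control it with the same Lyapunov argument used for Theorem~\ref{thm:fednest}, telescoping a combined potential $f(\m{x}^k)+c\,\|\m{y}^k-\m{y}^*(\m{x}^k)\|^2$, and then merely verify that once $b=0$ and $L_f=\order{\ell_{f,1}\kappa_f}$ are substituted, the smaller budget $T=\order{\kappa_f}$ (versus $\order{\kappa_g^4}$ in the bilevel case) already closes the recursion. The rest is routine bookkeeping: dividing the telescoped inequality by $\alpha_k K$ and substituting $\alpha_k=\min\{\bar\alpha_1,\bar\alpha_2,\bar\alpha_3,\bar\alpha/\sqrt{K}\}$ with the $\kappa_f$-scalings above yields $\order{\kappa_f^2/K+\kappa_f/\sqrt{K}}$.
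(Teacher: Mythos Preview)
Your approach is essentially the paper's: specialize Theorem~\ref{thm:fednest} by setting $b=0$, replacing the bilevel Lipschitz constants with their minimax counterparts ($L_f=\mc{O}(\kappa_f)$, $L_{\m{y}}=\mc{O}(\kappa_f)$, $M_f=\mc{O}(1)$, $L_{\m{yx}}=\mc{O}(\kappa_f^3)$), and reading off $\min(\bar\alpha_1,\bar\alpha_2,\bar\alpha_3)=\Omega(\kappa_f^{-2})$ and $c_1,c_2=\mc{O}(\kappa_f^2)$ from the explicit formulas in the proof of Theorem~\ref{thm:fednest}; choosing $\bar\alpha=\mc{O}(\kappa_f^{-1})$ then gives the rate.

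One point to correct in your intuitive account of \fedinn: under the theorem's coupling $\beta_k=\bar\beta\alpha_k/T$, the contraction factor after $T$ inner epochs is $(1-\beta_k\mu_f/2)^T\approx 1-\Theta(\alpha_k)$, not a fixed constant $e^{-\Theta(1)}$, and the inner bound (Lemma~\ref{thm:fedin}) \emph{does} carry an additive $25T\beta_k^2\sigma^2$ term---SVRG does not eliminate the noise floor here. The Lyapunov recursion closes because the $1-\Theta(\alpha_k)$ contraction exactly offsets the $1+\Theta(\alpha_k)$ drift coming from $\m{y}^*(\m{x}^k)\to\m{y}^*(\m{x}^{k+1})$, and $T=\mc{O}(\kappa_f)$ enters only through the threshold $\bar\alpha_2=T/(8\ell_{g,1}\bar\beta)$ needed to keep $\beta_k$ in the admissible range. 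Since you ultimately defer to the Lyapunov argument of Theorem~\ref{thm:fednest}, this does not affect the correctness of your conclusion, only the side explanation.
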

Corrollary~\ref{thm:fednest:minmax} implies that for the minimax problem, the convergence rate of \fedblo to the stationary point of $f$ is  $\mc{O}({1/\sqrt{K}})$. Again, we note this matches the convergence rate of non-FL algorithms (see also Table~\ref{table:blo2:results}) such as SGDA \citep{lin2020gradient} and SMD~\citep{rafique2021weakly}.
\subsection{Compositional Federated Learning}
Observe that in the compositional problem~\eqref{fedcompos:prob2}, the outer function is $f_i(\m{x},\m{y};\xi)=f_i(\m{y};\xi)$ and the inner function is  $g_i(\m{x},\m{y};\zeta)=\frac{1}{2}\|\m{y}-\m{r}_i(\m{x};\zeta)\|^2$, for all $i \in \mc{S}$. Hence, $b=0$ and $\kappa_g=1$. %
\begin{corollary}[\textbf{Compositional}]\label{thm:fednest:compos}
Under the same conditions as in Theorem \ref{thm:fednest}, if we select $T=1$ in \eqref{eqn:param:choice}. Then,
\begin{equation*}
 \frac{1}{K} \sum_{k=1}^K\mb{E}\left[\left\|\nabla f(\m{x}^k)\right\|^2\right]=\mc{O}\left(\frac{1}{\sqrt{K}}\right).
\end{equation*}
\end{corollary}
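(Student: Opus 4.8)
The plan is to obtain Corollary~\ref{thm:fednest:compos} as a direct specialization of the master bound in Theorem~\ref{thm:fednest}, after verifying that the compositional instance \eqref{fedcompos:prob2} collapses every condition-number-dependent quantity to an absolute constant. First I would check that \eqref{fedcompos:prob2} is a legitimate instance of the bilevel template \eqref{fedblo:prob} obeying Assumptions~\ref{assu:f} and~\ref{assu:bound:var}: with $g_i(\m{x},\m{y})=\tfrac12\|\m{y}-\m{r}_i(\m{x})\|^2$ one has $\nabla^2_{\m{y}}g_i=\m{I}$, so $g_i$ is $\mu_g$-strongly convex in $\m{y}$ with $\mu_g=\ell_{g,1}=1$ and hence $\kappa_g=1$; the remaining Lipschitz requirements on $f_i,\nabla f_i,\nabla g_i,\nabla^2 g_i$ follow from smoothness of the $\m{r}_i$'s, whose Jacobians play the role bounded by $\ell_{g,1},\ell_{g,2}$. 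Two simplifications are then immediate and are exactly the ones flagged in the text preceding the statement: because $f_i$ does not depend on $\m{x}$, the direct component vanishes, $\nabla^{\texttt{D}}f_i\equiv \m{0}$, so the entire hypergradient is the indirect term; and because the inner Hessian is the identity, the approximate inverse Hessian $\wh{\m{H}}_{\m{y}}$ of \eqref{eq:hessinv:est} is exact, i.e. the Neumann contraction factor $\big((\kappa_g-1)/\kappa_g\big)^N$ is zero and the bias of Lemma~\ref{lem:neum:bias} is $b=0$ for every $N$ (so $N$ is irrelevant here).

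The crux is to justify that a single inner epoch, $T=1$, already supplies an inner iterate accurate enough for the outer analysis, whereas the general bilevel Corollary~\ref{thm:fednest:bilevel} required $T=\mc{O}(\kappa_g^4)$. I would revisit, at $\kappa_g=1$, the \fedinn contraction estimate underlying Theorem~\ref{thm:fednest}. The key structural observation is that with $\m{q}_i(\m{x},\m{y})=\m{y}-\m{r}_i(\m{x})$ the \textsc{FedSVRG} correction telescopes exactly: the local direction becomes $\m{q}_{i,\nu }=\m{y}_{i,\nu }-\bar{\m{r}}(\m{x})$, where $\bar{\m{r}}(\m{x})=m^{-1}\sum_i\m{r}_i(\m{x})=\m{y}^*(\m{x})$. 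Thus each client's local step is drift-free and points along the \emph{exact} global inner gradient evaluated at its own iterate. Since the inner objective is a unit-conditioned strongly-convex quadratic, the inner-error term that the general proof contracts geometrically at a $\kappa_g$-dependent rate---and hence needs $\mc{O}(\kappa_g^4)$ epochs to drive down---is already at the $\mc{O}(\sigma_{g,1}^2)$ stochastic floor tolerated by the outer recursion after $T=1$. This is the step I expect to require the most care, since it amounts to re-deriving the inner descent estimate of Theorem~\ref{thm:fednest} in this degenerate well-conditioned regime and confirming that the inner-error contribution to the outer bound is genuinely $\mc{O}(1/\sqrt{K})$ rather than merely small.

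Finally I would substitute $\kappa_g=1$, $b=0$, and $T=1$ into the master bound of Theorem~\ref{thm:fednest}. The constants $\bar\alpha_1,\bar\alpha_2,\bar\alpha_3,\bar\alpha,\bar\beta$ governing the stepsize schedule \eqref{eqn:param:choice} are polynomial in $\kappa_g$ and therefore become absolute constants, while the variances $\sigma_f^2,\sigma_{g,1}^2,\sigma_{g,2}^2$ are fixed by Assumption~\ref{assu:bound:var}. Hence the three terms of Theorem~\ref{thm:fednest} reduce to $\mc{O}(1/\sqrt{K})+\mc{O}(1/K)+0$, and since $1/K=o(1/\sqrt{K})$ the claimed rate $\tfrac1K\sum_{k=1}^K\mb{E}[\|\nabla f(\m{x}^k)\|^2]=\mc{O}(1/\sqrt{K})$ follows.
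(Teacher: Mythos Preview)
Your approach is correct and matches the paper's: both specialize Theorem~\ref{thm:fednest} to the compositional instance by taking $b=0$ and $T=1$ and observing that all stepsize and Lipschitz constants become $\mathcal{O}(1)$, so the general bound collapses to $\mathcal{O}(1/\sqrt{K})+\mathcal{O}(1/K)=\mathcal{O}(1/\sqrt{K})$. The paper's proof is actually more direct than your middle paragraph anticipates: since Theorem~\ref{thm:fednest} already holds for \emph{any} $T\geq 1$, no re-derivation of the inner descent estimate is needed; the paper simply fixes $\bar\alpha=1$ and $\eta=1/L_{\m{yx}}$, plugs the compositional Lipschitz constants (collected in a dedicated lemma analogous to Lemma~\ref{lem:lips}) into the master inequality, and reads off the rate---your SVRG-telescoping observation is correct and insightful but not required for the formal argument.
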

Corrollary~\ref{thm:fednest:compos} implies that for the  compositional problem \eqref{fedcompos:prob2}, the convergence rate of \fedblo to the stationary point of $f$ is  $\mc{O}({1/\sqrt{K}})$. This matches the convergence rate of non-federated stochastic algorithms such as SCGD~\citep{wang2017stochastic} and NASA~\citep{ghadimi2020single} (Table~\ref{table:blo2:results}).
\subsection{Single-Level Federated Learning}
Building upon the general results for stochastic nonconvex nested problems, we establish new convergence guarantees for \emph{single-level} stochastic non-convex federated SVRG which is integrated within our \fedout.  Note that in the single-level setting,  the optimization problem~\eqref{fedblo:prob} reduces to 
\begin{align}\label{fedsingle:prob2}
\underset{\m{x} \in \mb{R}^{{d}_1}}{\min} ~~f(\m{x})=\frac{1}{m} \sum_{i=1}^{m} f_{i}\left(\m{x}\right)
\end{align}
with $f_i(\m{x}) := \mb{E}_{\xi \sim \mc{C}_i}[f_i(\m{x}; \xi)]$, where  $\xi\sim\mc{C}_i$ is sampling distribution for the $i^{\text{th}}$ client.

We make the following assumptions on \eqref{fedsingle:prob2} that are counterparts of Assumptions~\ref{assu:f} and \ref{assu:bound:var}.
\begin{assumption}[Lipschitz continuity]\label{assu:s:f} 
For all $i \in [m]$, $\nabla f_i(\m{x})$ is $L_f$-Lipschitz continuous.
\end{assumption}
\begin{assumption}[Stochastic samples]\label{assu:s:bound:var}
For all $i \in [m]$, $\nabla f_i(\m{x};\xi)$ is an unbiased estimator of $\nabla f_i(\m{x})$ and its variance is bounded, i.e., $\mb{E}_{\xi}[\|\nabla f_i(\m{x};\xi)-\nabla f_i(\m{x})\|^2] \leq \sigma_f^2$.
\end{assumption}
\begin{theorem}[\textbf{Single-Level}]\label{thm:fednest:s:level}
Suppose Assumptions~\ref{assu:s:f} and \ref{assu:s:bound:var} hold.  Further, assume  $\alpha_i^k=\alpha_k/\tau_i$ for all $ i \in \mathcal{S}$, where 
\begin{align}\label{eqn:param:s:choice}
\alpha_k&=\min\left\{\bar{\alpha}_1,\frac{\bar{\alpha}}{\sqrt{K}}\right\}
\end{align}
for some positive $\bar{\alpha}_1$ and  $\bar{\alpha}$. Then,
\begin{equation*}
 \frac{1}{K} \sum_{k=1}^K\mb{E}\left[\left\|\nabla f(\m{x}^k)\right\|^2\right]=\mc{O}\left( \frac{ \Delta_f}{ \bar{\alpha}_1 K} + \frac{  \frac{\Delta_f}{\bar{\alpha}} + \bar{\alpha}\sigma_f^2 }{\sqrt{K}}\right),
\end{equation*}
where $\Delta_{f}:= f(\m{x}^0)-\mb{E}[f(\m{x}^K)]$.
\end{theorem}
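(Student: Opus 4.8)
The plan is to run the standard ``descent lemma plus telescoping'' argument for smooth nonconvex optimization, where the entire difficulty is concentrated in showing that the aggregated single-level SVRG direction acts, in conditional expectation, as the true gradient $\nabla f(\m{x}^k)$ up to drift and variance terms that vanish with the stepsize. Writing one round of \fedout (single-level) as $\m{x}^{k+1}=|\mc{S}|^{-1}\sum_{i\in\mc{S}}\m{x}_{i,\tau_i}$ with local steps $\m{x}_{i,\nu+1}=\m{x}_{i,\nu}-(\alpha_k/\tau_i)\m{h}_{i,\nu}$ and $\m{h}_{i,\nu}=\nabla f_i(\m{x}_{i,\nu};\xi_{i,\nu})-\nabla f_i(\m{x}^k;\xi_{i,\nu})+\m{h}$, $\m{h}=|\mc{S}|^{-1}\sum_{i}\nabla f_i(\m{x}^k;\xi_i)$, I first note that the increment collapses to $\m{x}^{k+1}-\m{x}^k=-\alpha_k\bar{\m{h}}^k$, where $\bar{\m{h}}^k:=|\mc{S}|^{-1}\sum_{i\in\mc{S}}\tau_i^{-1}\sum_{\nu=0}^{\tau_i-1}\m{h}_{i,\nu}$. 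Applying $L_f$-smoothness (Assumption~\ref{assu:s:f}) then gives the one-step bound
\begin{equation*}
\mb{E}[f(\m{x}^{k+1})]\le \mb{E}[f(\m{x}^k)]-\alpha_k\,\mb{E}\langle\nabla f(\m{x}^k),\bar{\m{h}}^k\rangle+\tfrac{L_f\alpha_k^2}{2}\,\mb{E}\|\bar{\m{h}}^k\|^2 .
\end{equation*}

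Second, I would exploit the variance-reduction structure to handle the inner product. Conditioning on $\m{x}^k$ and using that $\xi_{i,\nu}$ is fresh together with the tower property and Assumption~\ref{assu:s:bound:var}, the SVRG correction has conditional mean $\mb{E}[\m{h}_{i,\nu}\mid\m{x}^k]=\mb{E}[\nabla f_i(\m{x}_{i,\nu})\mid\m{x}^k]-\nabla f_i(\m{x}^k)+\nabla f(\m{x}^k)$; crucially, averaging the anchor terms $-\nabla f_i(\m{x}^k)+\nabla f(\m{x}^k)$ over $i$ cancels the heterogeneity exactly, so $\mb{E}[\bar{\m{h}}^k\mid\m{x}^k]=\nabla f(\m{x}^k)+|\mc{S}|^{-1}\sum_i\tau_i^{-1}\sum_\nu\mb{E}[\nabla f_i(\m{x}_{i,\nu})-\nabla f_i(\m{x}^k)\mid\m{x}^k]$. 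The residual is a pure client-drift term bounded by $L_f\|\m{x}_{i,\nu}-\m{x}^k\|$ via smoothness. Hence the inner product yields the descent term $-\alpha_k\|\nabla f(\m{x}^k)\|^2$ plus a cross term that I split with Young's inequality into $\tfrac{\alpha_k}{2}\|\nabla f(\m{x}^k)\|^2$ and a weighted sum of the drifts $\mb{E}\|\m{x}_{i,\nu}-\m{x}^k\|^2$.

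The main obstacle is controlling the client drift $D_{i,\nu}:=\mb{E}\|\m{x}_{i,\nu}-\m{x}^k\|^2$ and the second moment $\mb{E}\|\bar{\m{h}}^k\|^2$, which are coupled: the drift grows with $\|\m{h}_{i,\nu}\|^2$, whose variance in turn grows with the drift. To close this I would set up the recursion $\m{x}_{i,\nu+1}-\m{x}^k=(\m{x}_{i,\nu}-\m{x}^k)-(\alpha_k/\tau_i)\m{h}_{i,\nu}$, expand the square, and bound $\mb{E}\|\m{h}_{i,\nu}\|^2$ by $\order{\sigma_f^2+\|\nabla f(\m{x}^k)\|^2+L_f^2 D_{i,\nu}}$ using smoothness and the bounded-variance assumption; summing over $\nu$ and using $\alpha_k/\tau_i\le \bar\alpha_1/\tau_i$ gives a self-bounded estimate of the form $\sum_\nu D_{i,\nu}=\order{\alpha_k^2\tau_i^2(\sigma_f^2+\|\nabla f(\m{x}^k)\|^2)}$, where the feedback coefficient is made strictly contractive by choosing $\bar\alpha_1$ small enough (this is exactly the role of the constraint $\alpha_k\le\bar\alpha_1$, and mirrors the client-drift control of \textsc{FedLin}). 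Substituting these bounds back makes the drift contribution higher order in $\alpha_k$, so it is absorbed into the $\tfrac{\alpha_k}{2}\|\nabla f(\m{x}^k)\|^2$ term, leaving the clean one-step inequality $\mb{E}[f(\m{x}^{k+1})]\le \mb{E}[f(\m{x}^k)]-c_0\alpha_k\mb{E}\|\nabla f(\m{x}^k)\|^2+C\alpha_k^2\sigma_f^2$ for universal constants $c_0,C>0$ and all $\alpha_k\le\bar\alpha_1$.

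Finally, I telescope this inequality over $k=1,\dots,K$ with the constant stepsize $\alpha_k\equiv\alpha$, which yields $\frac1K\sum_{k}\mb{E}\|\nabla f(\m{x}^k)\|^2\le \frac{\Delta_f}{c_0\alpha K}+\frac{C}{c_0}\alpha\sigma_f^2$ with $\Delta_f=f(\m{x}^0)-\mb{E}[f(\m{x}^K)]$. Plugging in $\alpha=\min\{\bar\alpha_1,\bar\alpha/\sqrt K\}$ and using $1/\alpha=\max\{1/\bar\alpha_1,\sqrt K/\bar\alpha\}$ together with $\alpha\le\bar\alpha/\sqrt K$ splits the first term as $\order{\Delta_f/(\bar\alpha_1 K)+\Delta_f/(\bar\alpha\sqrt K)}$ and bounds the second by $\order{\bar\alpha\sigma_f^2/\sqrt K}$, which recombine into the claimed rate $\order{\Delta_f/(\bar\alpha_1 K)+(\Delta_f/\bar\alpha+\bar\alpha\sigma_f^2)/\sqrt K}$. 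I expect the drift/variance decoupling in the third step to be the most delicate part, since it is where heterogeneity and the multiple local steps $\tau_i$ interact; everything else is bookkeeping.
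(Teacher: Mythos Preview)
Your proposal is correct and follows essentially the same approach as the paper: a descent lemma (smoothness of $f$) combined with a client-drift bound, then telescoping with the stepsize choice $\alpha_k=\min\{\bar\alpha_1,\bar\alpha/\sqrt{K}\}$. The paper packages the two pieces as separate lemmas (a single-level descent lemma and a drift lemma showing $\mb{E}\|\m{x}_{i,\nu}-\m{x}^k\|^2\le 12\tau_i^2\alpha_i^2\mb{E}\|\nabla f(\m{x}^k)\|^2+27\tau_i\alpha_i^2\sigma_f^2$), and decomposes the inner product via the polarization identity $2\langle a,b\rangle=\|a\|^2+\|b\|^2-\|a-b\|^2$ rather than Young's inequality, but this is a cosmetic difference; the resulting one-step bound $\mb{E}[f(\m{x}^{k+1})]\le\mb{E}[f(\m{x}^k)]-\tfrac{\alpha_k}{4}\mb{E}\|\nabla f(\m{x}^k)\|^2+(1+L_f)\alpha_k^2\sigma_f^2$ and the final telescoping match your sketch exactly.
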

Theorem~\ref{thm:fednest:s:level} extends recent results by \cite{mitra2021linear} from the stochastic strongly convex to the stochastic nonconvex setting. The above rate is also consistent with existing single-level non-FL guarantees~\cite{ghadimi2013stochastic}.
\vspace{-.2cm}
\section{Numerical Experiments}\label{sec:numerics}
In this section, we numerically investigate the impact of several attributes of our algorithms on a hyper-representation problem \cite{franceschi2018bilevel}, a hyper-parameter optimization problem for loss function tuning \cite{li2021autobalance}, and a federated minimax optimization problem. 

\subsection{Hyper-Representation Learning}
Modern approaches in meta learning such as MAML~\cite{finn2017model} and reptile \cite{nichol2018reptile} learn representations (that are shared across all tasks) in a bilevel manner. Similarly, the hyper-representation problem optimizes a classification model in a two-phased process. The outer objective optimizes the model backbone to obtain better feature representation on validation data. The inner problem optimizes a header for downstream classification tasks on training data. In this experiment, we use a 2-layer multilayer perceptron (MLP) with 200 hidden units. The outer problem optimizes the hidden layer with 157,000 parameters, and the inner problem optimizes the output layer with 2,010 parameters. We study both i.i.d and non-i.i.d. ways of partitioning the MNIST data exactly following~\textsc{FedAvg}~\cite{mcmahan17fedavg}, and split each client's data evenly to train and validation datasets. Thus, each client has 300 train and 300 validation samples. 

Figure~\ref{fig:hr_exp} demonstrates the impact on test accuracy of  several important components of \fednest. Figure~\ref{fig:hr_global_local} compares $\fednest$ and $\lfedblo$. Both algorithms perform well on the i.i.d. setup, while on the non-i.i.d. setup, $\fednest$ achieves i.i.d. performance, significantly outperforming $\lfedblo$. These findings are in line with our discussions in Section~\ref{light blo}. $\lfedblo$ saves on communication rounds compared to $\fednest$ and performs well on homogeneous clients. However, for heterogeneous clients, the isolation of local Hessian in $\lfedblo$ (see Algorithm~\ref{alg:localfedout} in Appendix~\ref{sec:simp:algs}) degrades the test performance.  Next, Figure~\ref{fig:hr_svrg_sgd} demonstrates the importance of SVRG in $\fedinn$ algorithm for heterogeneous data (as predicted by our theoretical considerations in Section~\ref{sec:fedinn}). To further clarify the algorithm difference in Figures~\ref{fig:hr_svrg_sgd} and \ref{fig:lt_global_local}, we use $\fednest_\mathrm{SGD}$ to denote the \fednest algorithm where SGD is used in \fedinn. Finally, Figure~\ref{fig:hr_tau} elucidates the role of local epoch $\tau$ in $\fedout$: larger $\tau$ saves on communication and improves test performance by enabling faster convergence.
\begin{figure}[t]
\vspace{-10pt}
    \centering
    \begin{subfigure}{0.226\textwidth}
    \centering
        \begin{tikzpicture}
        \node at (0,0) {\includegraphics[scale=0.256]{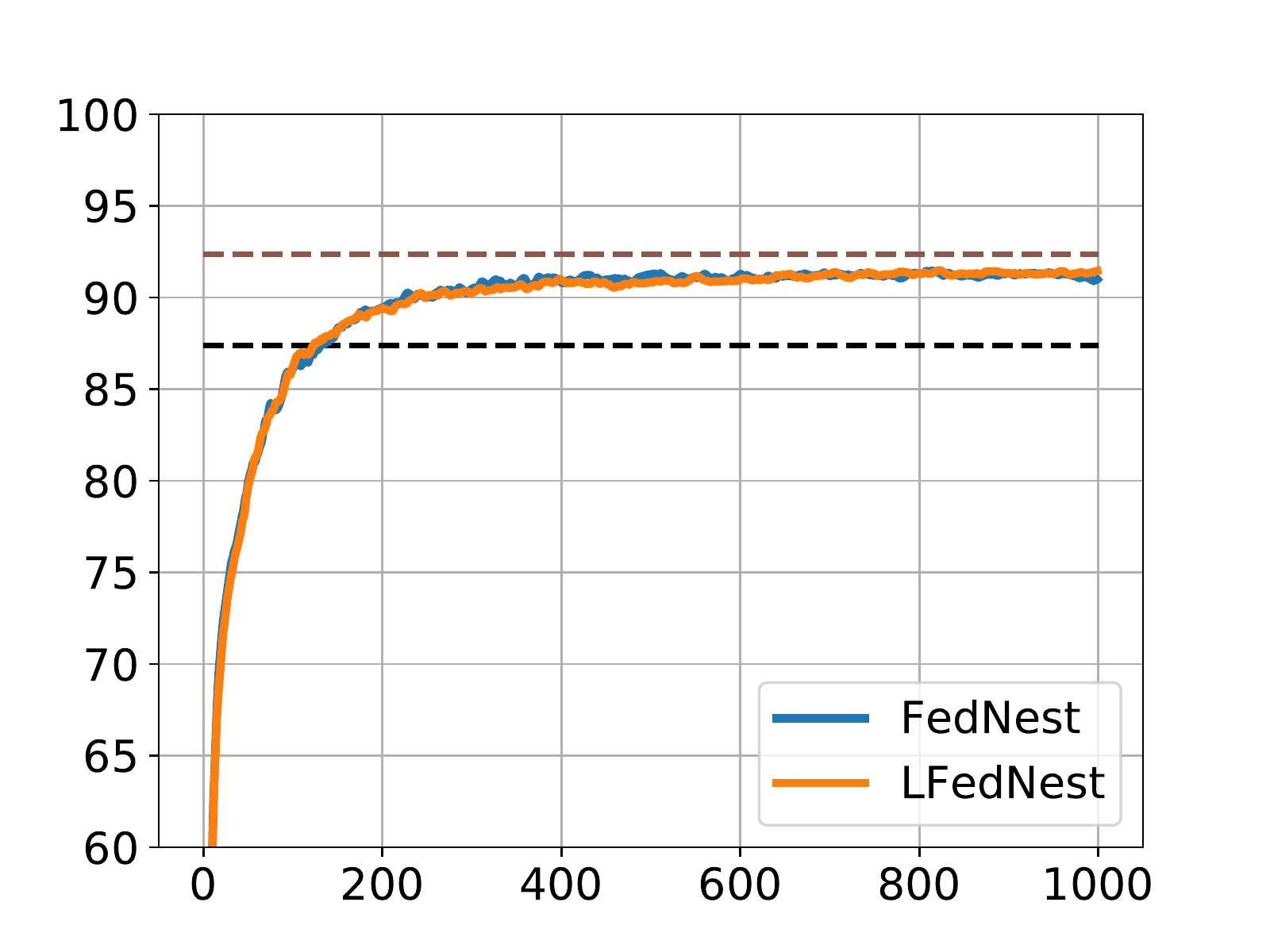}};
        \node at (0,-1.65) [scale=0.7]{\textsc{FedNest} epochs};
        \node at (-2,0) [scale=0.7, rotate=90]{Balanced Test Accuracy};
        \end{tikzpicture}\vspace{-5pt}\caption{\textsc{FedNest} achieves similar performance as non-federated bilevel loss function tuning. }\label{fig:lt_global_local}
    \end{subfigure}\hspace{5pt}\begin{subfigure}{0.225\textwidth}
    \centering
        \begin{tikzpicture}
        \node at (0,0) {\includegraphics[scale=0.256]{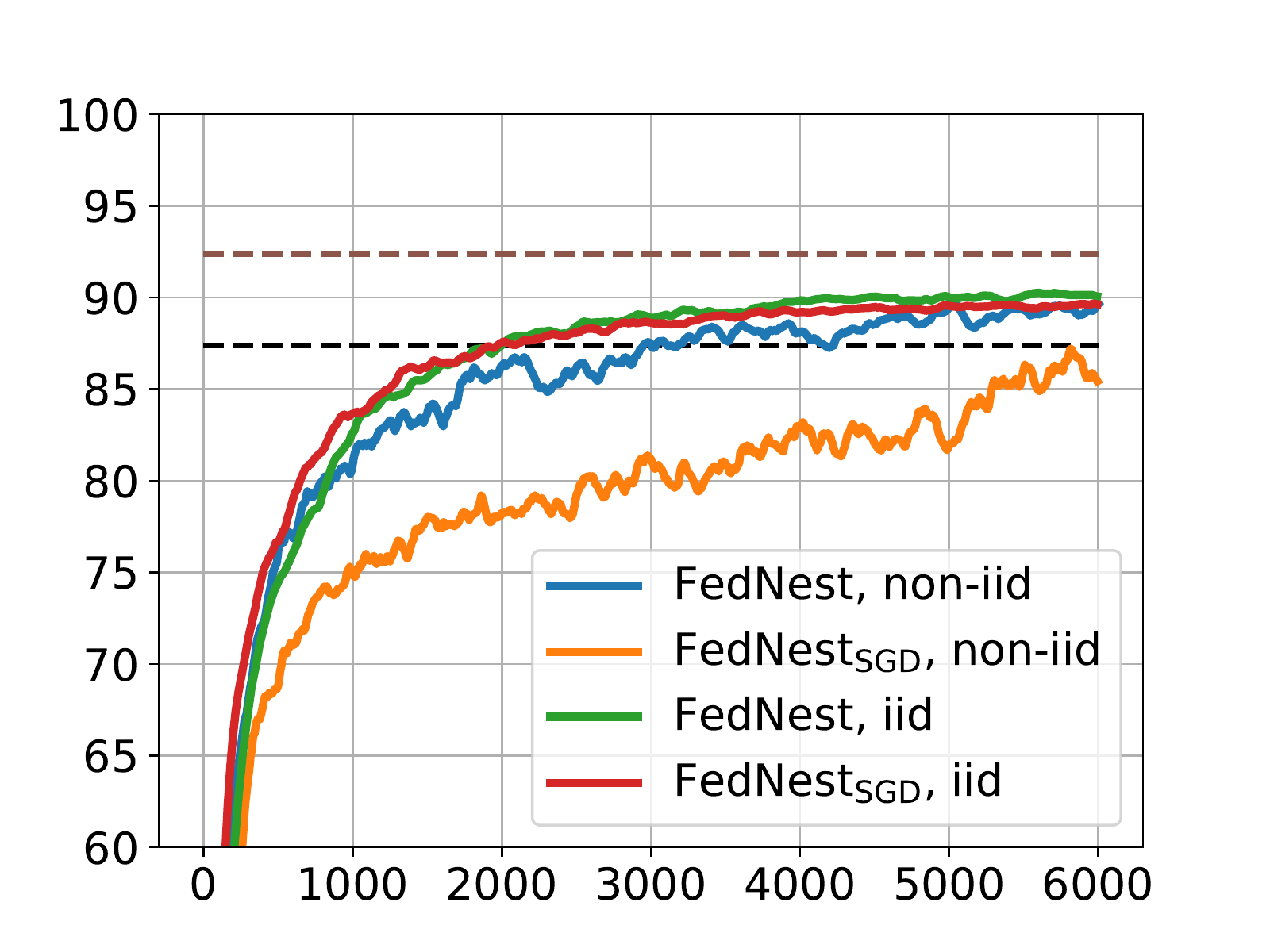}};
        \node at (0,-1.65) [scale=0.7]{Communication rounds};
        \end{tikzpicture}\vspace{-5pt}\caption{SVRG in $\fedinn$ provides better convergence and stability especially in non-iid setup.}\label{fig:lt_svrg_sgd}
    \end{subfigure}
    \vspace{-5pt}\caption{Loss function tuning on a 3-layer MLP and imbalanced MNIST dataset to maximize \emph{class-balanced test accuracy}. The \emph{brown dashed} line is the accuracy on non-federated bilevel optimization~\cite{li2021autobalance}, and the \emph{black dashed} line is the accuracy without tuning the loss function.}\label{fig:lt_exp}\vspace{-15pt}
\end{figure}

\subsection{Loss Function Tuning on Imbalanced Dataset}
We use bilevel optimization to tune a loss function for learning an imbalanced MNIST dataset. We aim to maximize the class-balanced validation accuracy (which helps minority/tail classes). Following the problem formulation in \cite{li2021autobalance}, we tune the so-called VS-loss function \cite{kini2021label} in a federated setting. In particular, we first create a long-tail imbalanced MNIST dataset by exponentially decreasing the number of examples per class (e.g.~class 0 has 6,000 samples, class 1 has 3,597 samples and finally, class 9 has only 60 samples). We partition the dataset to 100 clients following again $\fedavg$ \cite{mcmahan17fedavg} on both i.i.d.~and non-i.i.d.~setups. Different from the hyper-representation experiment, we employ 80\%-20\% train-validation on each client and use a 3-layer MLP model with 200, 100 hidden units, respectively. It is worth noting that, in this problem, the outer objective $f$ (aka validation cost) only depends on the hyperparameter $\m{x}$ through the optimal model parameters $\y^*(\x)$; thus, the direct gradient $\nabla^{\texttt{D}} f_i(\m{x},\m{y}^*(\m{x}))$ is zero for all $i \in\mc{S}$.

Figure~\ref{fig:lt_exp} displays test accuracy vs epochs/rounds for our federated bilevel algorithms. The horizontal dashed lines serve as non-FL baselines: brown depicts accuracy reached by bilevel optimization in non-FL setting, and, black depicts accuracy without any loss tuning. Compared to these, Figure~\ref{fig:lt_global_local} shows that $\fednest$ achieves near non-federated performance. In Figure~\ref{fig:lt_svrg_sgd}, we investigate the key role of SVRG in $\fedinn$ by comparing it with possible alternative implementation that uses SGD-type updates. The figure confirms our discussion in Section~\ref{sec:fedinn}: SVRG offers significant performance gains that are pronounced by client heterogeneity. 

\subsection{Federated Minimax Problem}
\begin{figure}
\centering
\vspace{-5pt}
\begin{subfigure}{0.225\textwidth}
\centering
        \begin{tikzpicture}
        \node at (0,0) {\includegraphics[scale=0.255]{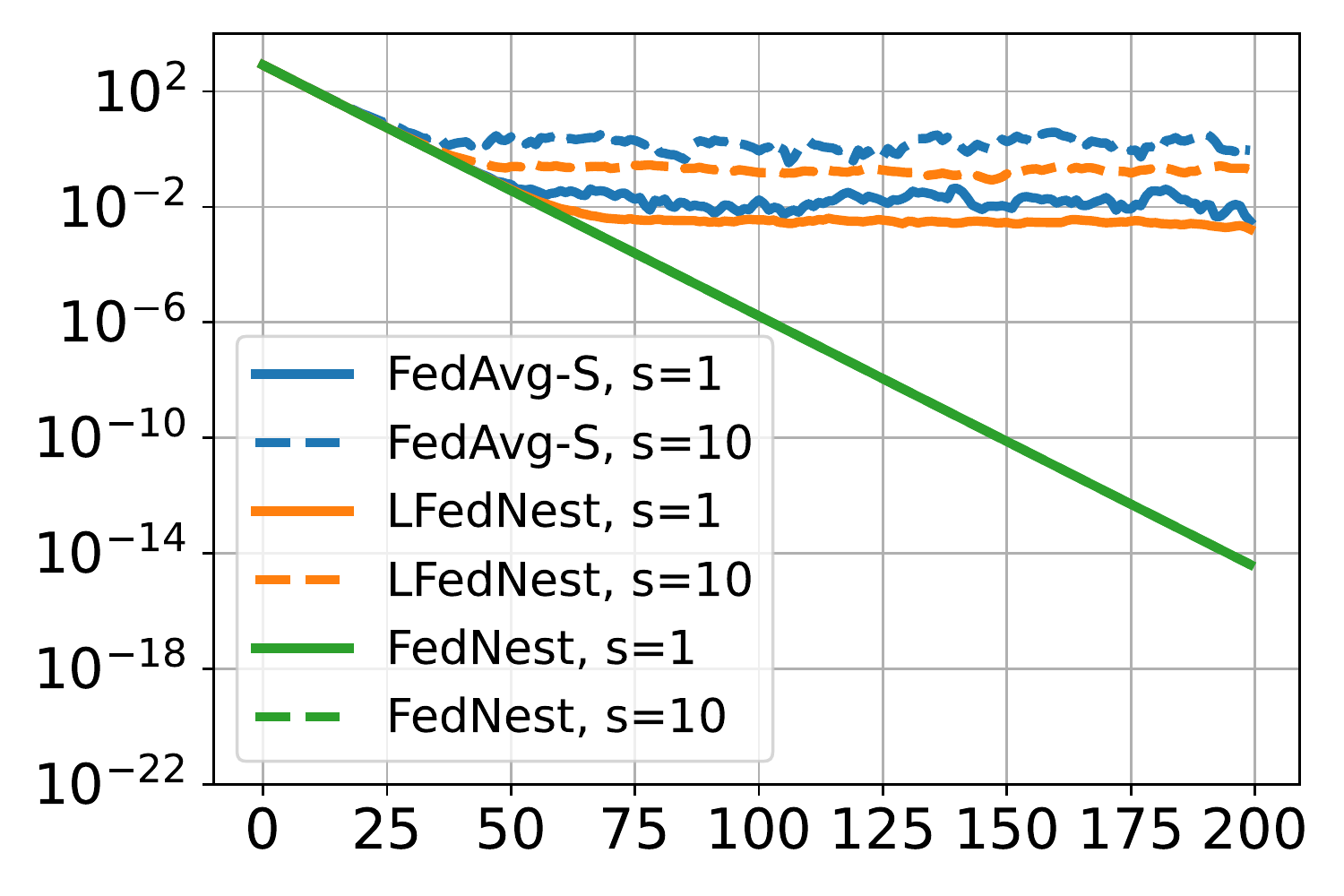}};
        \node at (0,-1.45) [scale=0.7]{Epoch};
        \node at (-2,0) [scale=0.7, rotate=90]{$\|\y-\y^*\|^2$};
        \end{tikzpicture}\vspace{-5pt}
\end{subfigure}
\hspace{7pt}
\begin{subfigure}{0.225\textwidth}
    \centering
        \begin{tikzpicture}
        \node at (0,0) {\includegraphics[scale=0.255]{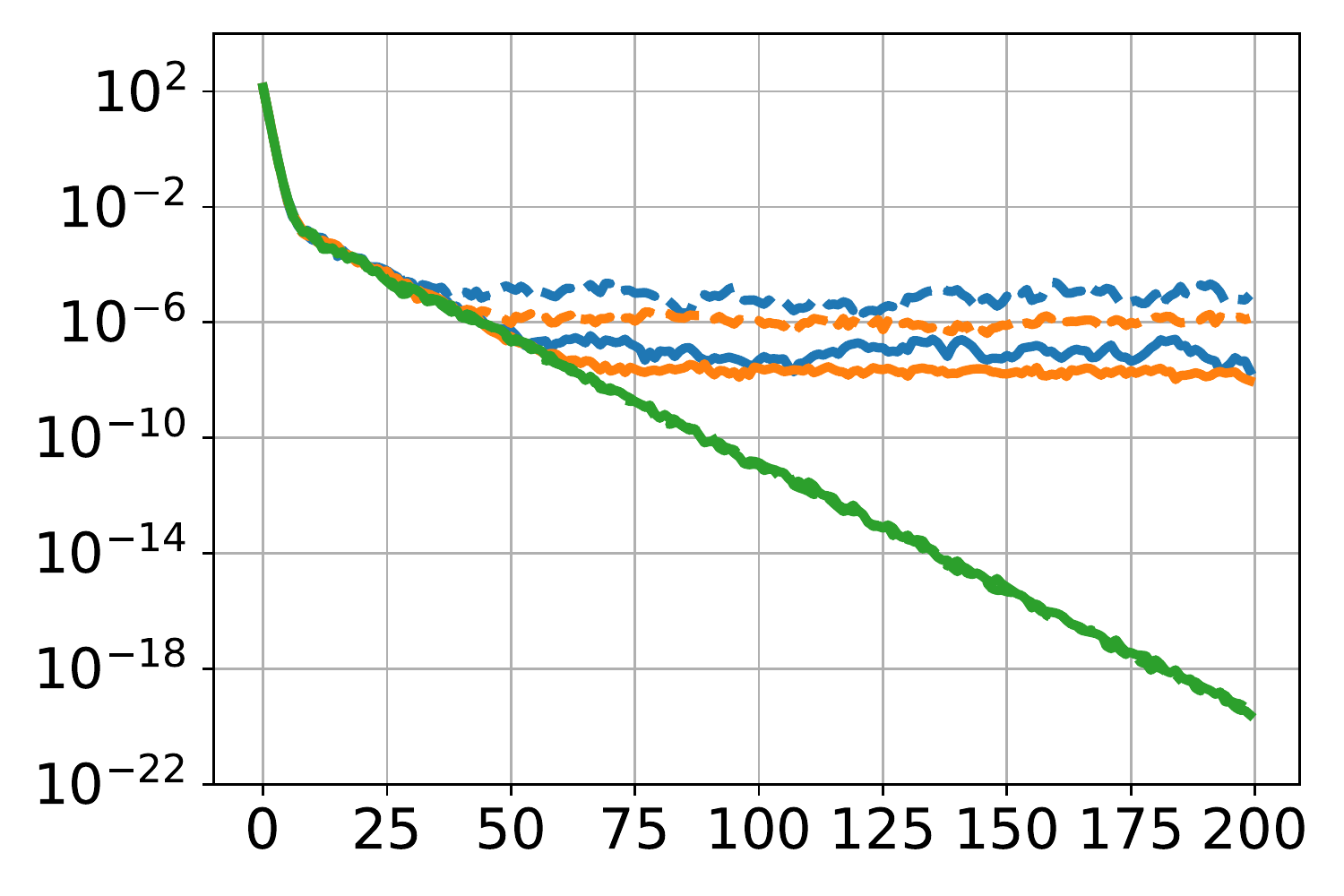}};
        \node at (0,-1.45) [scale=0.7]{Epoch};
        \node at (-2,0) [scale=0.7, rotate=90]{$\|\x-\x^*\|^2$};
        \end{tikzpicture}\vspace{-5pt}
\end{subfigure}\vspace{-5pt}
\caption{\textsc{FedNest} converges linearly despite heterogeneity. \textsc{LFedNest} slightly outperforms \textsc{FedAvg-S}.}\label{fig:mm}
\vspace{-10pt}
\end{figure}
We conduct experiments on the minimax problem \eqref{fedminmax:prob} with \[
f_i(\m{x},\m{y}):=-\left[\frac{1}{2}\|\m{y}\|^2 - \m{b}_i^\top \m{y} + \m{y}^\top \m{A}_i \m{x}\right] + \frac{\lambda}{2} \|\m{x}\|^2,
\]
to compare standard \textsc{FedAvg} saddle-point (\textsc{FedAvg}-S) method updating $(\m{x},\m{y})$ simultaneously~\cite{hou2021efficient} and our alternative approaches~(\textsc{LFedNest} and \textsc{FedNest}). This is a saddle-point formulation of $\min_{\m{x} \in \mb{R}^{d_1}} \frac{1}{2} \|\frac{1}{m} \sum_{i=1}^m \m{A}_i \m{x} - \m{b}_i\|^2$. 
We set  $\lambda=10$, $\m{b}_i=\m{b}_i' - \frac{1}{m}\sum_{i=1}^m \m{b}_i'$ and $\m{A}_i = t_i \m{I}$, where $\m{b}_i' \sim \mathcal{N}(0, s^2 I_d)$, and $t_i$ is drawn UAR over $(0,0.1)$. Figure~\ref{fig:mm} shows that \textsc{LFedNest} and \textsc{FedNest} outperform \textsc{FedAvg}-S thanks to their alternating nature. \textsc{FedNest} significantly improves the convergence of \textsc{LFedNest} due to controlling client-drift. To our knowledge, \textsc{FedNest} is the only \textit{alternating} federated SVRG for minimax problems. 

\section{Related Work}
\noindent\textbf{Federated learning.} \fedavg  was first introduced by \citet{mcmahan17fedavg}, who showed it can dramatically reduce communication costs. For identical clients, \fedavg coincides with local SGD \cite{zinkevich2010parallelized} which has been analyzed by many works \citep{stich2018local,yu2019parallel,wang2018cooperative}. Recently, many variants of \fedavg  have been proposed to tackle issues such as convergence and client drift. Examples include \textsc{FedProx}~\cite{li2020federated}, \scaffold \cite{karimireddy2020scaffold},  \textsc{FedSplit} \cite{pathak2020fedsplit}, \textsc{FedNova}~\cite{wang2020tackling}, and, the most closely relevant to us \textsc{FedLin} \cite{mitra2021linear}. A few recent studies are also devoted to the extension of \fedavg to the minimax optimization~\cite{rasouli2020fedgan,deng2020distributionally} and compositional optimization~\cite{huang2021compositional}. In contrast to these methods, \fednest makes alternating SVRG updates between the global variables $\m{x}$ and $\m{y}$, and yields sample complexity bounds and batch size choices that are on par with the non-FL guarantees~(Table~\ref{table:blo2:results}). Evaluations in the Appendix \ref{sec:joinvsalt} reveal that both alternating updates and SVRG provides a performance boost over these prior approaches.

\noindent\textbf{Bilevel optimization.} This class of problems was first introduced by \cite{bracken1973mathematical}, and since then, different types of approaches have been proposed. See~\cite{sinha2017review,liu2021investigating} for surveys. 
 Earlier works in \cite{aiyoshi1984solution,lv2007penalty} reduced the bilevel problem to a single-level optimization problem. However, the reduced problem is still difficult to solve due to for example a large number of constraints. Recently, more efficient gradient-based  algorithms have been proposed by estimating the hypergradient of $\nabla f(\m{x})$ through iterative updates~\citep{maclaurin2015gradient,franceschi2017forward,domke2012generic,pedregosa2016hyperparameter}. The asymptotic and  non-asymptotic analysis of bilevel optimization has been provided in~\citep{franceschi2018bilevel,shaban2019truncated,liu2020generic} and \citep{ghadimi2018approximation,hong2020two}, respectively. There is also a line of work focusing on minimax optimization \cite{nemirovski2004prox,daskalakis2018limit} and compositional optimization~\cite{wang2017stochastic}. Closely related to our work are \cite{lin2020gradient,rafique2021weakly,chen2021closing} and \cite{ghadimi2020single,chen2021closing} which provide non-asymptotic analysis of SGD-type methods for minimax and compositional problems with outer nonconvex objective, respectively. 

A more in-depth discussion of related work is given in Appendix~\ref{supp:sec:related}. We summarize the complexities of different methods for FL/non-FL bilevel optimization in Table~\ref{table:blo2:results}.
\section{Conclusions}\label{sec:conc}
We presented a new class of federated algorithms for solving general nested stochastic optimization spanning bilevel and \minmax~problems. \fednest runs a variant of federated SVRG on inner \& outer variables in an alternating fashion. We established provable convergence rates for \fednest under arbitrary client heterogeneity and introduced variations for min-max and compositional problems and for improved communication efficiency (\lfedblo). We showed that, to achieve an $\epsilon$-stationary point of the nested problem, \fednest requires $O(\epsilon^{-2})$ samples in total, which matches the complexity of the non-federated nested algorithms in the literature. 

\vspace{-5pt}
\section*{Acknowledgements}\vspace{-5pt}
Davoud Ataee Tarzanagh was supported by ARO YIP award W911NF1910027 and NSF CAREER award CCF-1845076. Christos Thrampoulidis was supported by NSF Grant Numbers CCF-2009030 and HDR-1934641, and an NSERC Discovery Grant. Mingchen Li and Samet Oymak were supported by the NSF CAREER award CCF-2046816, Google Research Scholar award, and ARO grant W911NF2110312.

\newpage

\bibliography{blo.bib}

\begin{thebibliography}{96}
\providecommand{\natexlab}[1]{#1}
\providecommand{\url}[1]{\texttt{#1}}
\expandafter\ifx\csname urlstyle\endcsname\relax
  \providecommand{\doi}[1]{doi: #1}\else
  \providecommand{\doi}{doi: \begingroup \urlstyle{rm}\Url}\fi

\bibitem[Acar et~al.(2021)Acar, Zhao, Navarro, Mattina, Whatmough, and
  Saligrama]{acar2021federated}
Acar, D. A.~E., Zhao, Y., Navarro, R.~M., Mattina, M., Whatmough, P.~N., and
  Saligrama, V.
\newblock Federated learning based on dynamic regularization.
\newblock \emph{arXiv preprint arXiv:2111.04263}, 2021.

\bibitem[Aiyoshi \& Shimizu(1984)Aiyoshi and Shimizu]{aiyoshi1984solution}
Aiyoshi, E. and Shimizu, K.
\newblock A solution method for the static constrained stackelberg problem via
  penalty method.
\newblock \emph{IEEE Transactions on Automatic Control}, 29\penalty0
  (12):\penalty0 1111--1114, 1984.

\bibitem[Al-Khayyal et~al.(1992)Al-Khayyal, Horst, and Pardalos]{al1992global}
Al-Khayyal, F.~A., Horst, R., and Pardalos, P.~M.
\newblock Global optimization of concave functions subject to quadratic
  constraints: an application in nonlinear bilevel programming.
\newblock \emph{Annals of Operations Research}, 34\penalty0 (1):\penalty0
  125--147, 1992.

\bibitem[Arora et~al.(2020)Arora, Du, Kakade, Luo, and
  Saunshi]{arora2020provable}
Arora, S., Du, S., Kakade, S., Luo, Y., and Saunshi, N.
\newblock Provable representation learning for imitation learning via bi-level
  optimization.
\newblock In \emph{International Conference on Machine Learning}, pp.\
  367--376. PMLR, 2020.

\bibitem[Barazandeh et~al.(2021{\natexlab{a}})Barazandeh, Huang, and
  Michailidis]{barazandeh2021decentralized}
Barazandeh, B., Huang, T., and Michailidis, G.
\newblock A decentralized adaptive momentum method for solving a class of
  min-max optimization problems.
\newblock \emph{Signal Processing}, 189:\penalty0 108245, 2021{\natexlab{a}}.

\bibitem[Barazandeh et~al.(2021{\natexlab{b}})Barazandeh, Tarzanagh, and
  Michailidis]{barazandeh2021solving}
Barazandeh, B., Tarzanagh, D.~A., and Michailidis, G.
\newblock Solving a class of non-convex min-max games using adaptive momentum
  methods.
\newblock In \emph{ICASSP 2021-2021 IEEE International Conference on Acoustics,
  Speech and Signal Processing (ICASSP)}, pp.\  3625--3629. IEEE,
  2021{\natexlab{b}}.

\bibitem[Basu et~al.(2019)Basu, Data, Karakus, and Diggavi]{basu2019qsparse}
Basu, D., Data, D., Karakus, C., and Diggavi, S.
\newblock Qsparse-local-{SGD}: Distributed {SGD} with quantization,
  sparsification and local computations.
\newblock In \emph{Advances in Neural Information Processing Systems}, pp.\
  14668--14679, 2019.

\bibitem[Bertinetto et~al.(2018)Bertinetto, Henriques, Torr, and
  Vedaldi]{bertinetto2018meta}
Bertinetto, L., Henriques, J.~F., Torr, P.~H., and Vedaldi, A.
\newblock Meta-learning with differentiable closed-form solvers.
\newblock \emph{arXiv preprint arXiv:1805.08136}, 2018.

\bibitem[Bracken \& McGill(1973)Bracken and McGill]{bracken1973mathematical}
Bracken, J. and McGill, J.~T.
\newblock Mathematical programs with optimization problems in the constraints.
\newblock \emph{Operations Research}, 21\penalty0 (1):\penalty0 37--44, 1973.

\bibitem[Brown(1951)]{brown1951iterative}
Brown, G.~W.
\newblock Iterative solution of games by fictitious play.
\newblock \emph{Activity analysis of production and allocation}, 13\penalty0
  (1):\penalty0 374--376, 1951.

\bibitem[Chen et~al.(2021{\natexlab{a}})Chen, Sun, and Yin]{chen2021closing}
Chen, T., Sun, Y., and Yin, W.
\newblock Closing the gap: Tighter analysis of alternating stochastic gradient
  methods for bilevel problems.
\newblock \emph{Advances in Neural Information Processing Systems}, 34,
  2021{\natexlab{a}}.

\bibitem[Chen et~al.(2021{\natexlab{b}})Chen, Sun, and Yin]{chen2021single}
Chen, T., Sun, Y., and Yin, W.
\newblock A single-timescale stochastic bilevel optimization method.
\newblock \emph{arXiv preprint arXiv:2102.04671}, 2021{\natexlab{b}}.

\bibitem[Dagr{\'e}ou et~al.(2022)Dagr{\'e}ou, Ablin, Vaiter, and
  Moreau]{dagreou2022framework}
Dagr{\'e}ou, M., Ablin, P., Vaiter, S., and Moreau, T.
\newblock A framework for bilevel optimization that enables stochastic and
  global variance reduction algorithms.
\newblock \emph{arXiv preprint arXiv:2201.13409}, 2022.

\bibitem[Dai et~al.(2017)Dai, He, Pan, Boots, and Song]{dai2017learning}
Dai, B., He, N., Pan, Y., Boots, B., and Song, L.
\newblock Learning from conditional distributions via dual embeddings.
\newblock In \emph{Artificial Intelligence and Statistics}, pp.\  1458--1467.
  PMLR, 2017.

\bibitem[Daskalakis \& Panageas(2018)Daskalakis and
  Panageas]{daskalakis2018limit}
Daskalakis, C. and Panageas, I.
\newblock The limit points of (optimistic) gradient descent in min-max
  optimization.
\newblock \emph{arXiv preprint arXiv:1807.03907}, 2018.

\bibitem[Deng \& Mahdavi(2021)Deng and Mahdavi]{deng2021local}
Deng, Y. and Mahdavi, M.
\newblock Local stochastic gradient descent ascent: Convergence analysis and
  communication efficiency.
\newblock In \emph{International Conference on Artificial Intelligence and
  Statistics}, pp.\  1387--1395. PMLR, 2021.

\bibitem[Deng et~al.(2020)Deng, Kamani, and Mahdavi]{deng2020distributionally}
Deng, Y., Kamani, M.~M., and Mahdavi, M.
\newblock Distributionally robust federated averaging.
\newblock \emph{Advances in Neural Information Processing Systems},
  33:\penalty0 15111--15122, 2020.

\bibitem[Diakonikolas et~al.(2021)Diakonikolas, Daskalakis, and
  Jordan]{diakonikolas2021efficient}
Diakonikolas, J., Daskalakis, C., and Jordan, M.
\newblock Efficient methods for structured nonconvex-nonconcave min-max
  optimization.
\newblock In \emph{International Conference on Artificial Intelligence and
  Statistics}, pp.\  2746--2754. PMLR, 2021.

\bibitem[Domke(2012)]{domke2012generic}
Domke, J.
\newblock Generic methods for optimization-based modeling.
\newblock In \emph{Artificial Intelligence and Statistics}, pp.\  318--326.
  PMLR, 2012.

\bibitem[Edmunds \& Bard(1991)Edmunds and Bard]{edmunds1991algorithms}
Edmunds, T.~A. and Bard, J.~F.
\newblock Algorithms for nonlinear bilevel mathematical programs.
\newblock \emph{IEEE transactions on Systems, Man, and Cybernetics},
  21\penalty0 (1):\penalty0 83--89, 1991.

\bibitem[Finn et~al.(2017)Finn, Abbeel, and Levine]{finn2017model}
Finn, C., Abbeel, P., and Levine, S.
\newblock Model-agnostic meta-learning for fast adaptation of deep networks.
\newblock In \emph{International Conference on Machine Learning}, pp.\
  1126--1135. PMLR, 2017.

\bibitem[Franceschi et~al.(2017)Franceschi, Donini, Frasconi, and
  Pontil]{franceschi2017forward}
Franceschi, L., Donini, M., Frasconi, P., and Pontil, M.
\newblock Forward and reverse gradient-based hyperparameter optimization.
\newblock In \emph{International Conference on Machine Learning}, pp.\
  1165--1173. PMLR, 2017.

\bibitem[Franceschi et~al.(2018)Franceschi, Frasconi, Salzo, Grazzi, and
  Pontil]{franceschi2018bilevel}
Franceschi, L., Frasconi, P., Salzo, S., Grazzi, R., and Pontil, M.
\newblock Bilevel programming for hyperparameter optimization and
  meta-learning.
\newblock In \emph{International Conference on Machine Learning}, pp.\
  1568--1577. PMLR, 2018.

\bibitem[Ghadimi \& Lan(2013)Ghadimi and Lan]{ghadimi2013stochastic}
Ghadimi, S. and Lan, G.
\newblock Stochastic first-and zeroth-order methods for nonconvex stochastic
  programming.
\newblock \emph{SIAM Journal on Optimization}, 23\penalty0 (4):\penalty0
  2341--2368, 2013.

\bibitem[Ghadimi \& Wang(2018)Ghadimi and Wang]{ghadimi2018approximation}
Ghadimi, S. and Wang, M.
\newblock Approximation methods for bilevel programming.
\newblock \emph{arXiv preprint arXiv:1802.02246}, 2018.

\bibitem[Ghadimi et~al.(2020)Ghadimi, Ruszczynski, and Wang]{ghadimi2020single}
Ghadimi, S., Ruszczynski, A., and Wang, M.
\newblock A single timescale stochastic approximation method for nested
  stochastic optimization.
\newblock \emph{SIAM Journal on Optimization}, 30\penalty0 (1):\penalty0
  960--979, 2020.

\bibitem[Gidel et~al.(2018)Gidel, Berard, Vignoud, Vincent, and
  Lacoste-Julien]{gidel2018variational}
Gidel, G., Berard, H., Vignoud, G., Vincent, P., and Lacoste-Julien, S.
\newblock A variational inequality perspective on generative adversarial
  networks.
\newblock \emph{arXiv preprint arXiv:1802.10551}, 2018.

\bibitem[Grazzi et~al.(2020)Grazzi, Franceschi, Pontil, and
  Salzo]{grazzi2020iteration}
Grazzi, R., Franceschi, L., Pontil, M., and Salzo, S.
\newblock On the iteration complexity of hypergradient computation.
\newblock In \emph{International Conference on Machine Learning}, pp.\
  3748--3758. PMLR, 2020.

\bibitem[Guo et~al.(2021)Guo, Xu, Yin, Jin, and Yang]{guo2021stochastic}
Guo, Z., Xu, Y., Yin, W., Jin, R., and Yang, T.
\newblock On stochastic moving-average estimators for non-convex optimization.
\newblock \emph{arXiv preprint arXiv:2104.14840}, 2021.

\bibitem[Hansen et~al.(1992)Hansen, Jaumard, and Savard]{hansen1992new}
Hansen, P., Jaumard, B., and Savard, G.
\newblock New branch-and-bound rules for linear bilevel programming.
\newblock \emph{SIAM Journal on scientific and Statistical Computing},
  13\penalty0 (5):\penalty0 1194--1217, 1992.

\bibitem[Hong et~al.(2020)Hong, Wai, Wang, and Yang]{hong2020two}
Hong, M., Wai, H.-T., Wang, Z., and Yang, Z.
\newblock A two-timescale framework for bilevel optimization: Complexity
  analysis and application to actor-critic.
\newblock \emph{arXiv preprint arXiv:2007.05170}, 2020.

\bibitem[Hou et~al.(2021)Hou, Thekumparampil, Fanti, and Oh]{hou2021efficient}
Hou, C., Thekumparampil, K.~K., Fanti, G., and Oh, S.
\newblock Efficient algorithms for federated saddle point optimization.
\newblock \emph{arXiv preprint arXiv:2102.06333}, 2021.

\bibitem[Hsu et~al.(2019)Hsu, Qi, and Brown]{hsu2019measuring}
Hsu, T.-M.~H., Qi, H., and Brown, M.
\newblock Measuring the effects of non-identical data distribution for
  federated visual classification.
\newblock \emph{arXiv preprint arXiv:1909.06335}, 2019.

\bibitem[Huang \& Huang(2021)Huang and Huang]{huang2021biadam}
Huang, F. and Huang, H.
\newblock Biadam: Fast adaptive bilevel optimization methods.
\newblock \emph{arXiv preprint arXiv:2106.11396}, 2021.

\bibitem[Huang et~al.(2021)Huang, Li, and Huang]{huang2021compositional}
Huang, F., Li, J., and Huang, H.
\newblock Compositional federated learning: Applications in distributionally
  robust averaging and meta learning.
\newblock \emph{arXiv preprint arXiv:2106.11264}, 2021.

\bibitem[Ji \& Liang(2021)Ji and Liang]{Ji2021LowerBA}
Ji, K. and Liang, Y.
\newblock Lower bounds and accelerated algorithms for bilevel optimization.
\newblock \emph{ArXiv}, abs/2102.03926, 2021.

\bibitem[Ji et~al.(2020{\natexlab{a}})Ji, Yang, and Liang]{Ji2020ProvablyFA}
Ji, K., Yang, J., and Liang, Y.
\newblock Provably faster algorithms for bilevel optimization and applications
  to meta-learning.
\newblock \emph{ArXiv}, abs/2010.07962, 2020{\natexlab{a}}.

\bibitem[Ji et~al.(2020{\natexlab{b}})Ji, Yang, and Liang]{ji2020theoretical}
Ji, K., Yang, J., and Liang, Y.
\newblock Theoretical convergence of multi-step model-agnostic meta-learning.
\newblock \emph{arXiv preprint arXiv:2002.07836}, 2020{\natexlab{b}}.

\bibitem[Ji et~al.(2021)Ji, Yang, and Liang]{ji2021bilevel}
Ji, K., Yang, J., and Liang, Y.
\newblock Bilevel optimization: Convergence analysis and enhanced design.
\newblock In \emph{International Conference on Machine Learning}, pp.\
  4882--4892. PMLR, 2021.

\bibitem[Ji(2018)]{shaoxiong}
Ji, S.
\newblock A pytorch implementation of federated learning.
\newblock Mar 2018.
\newblock \doi{10.5281/zenodo.4321561}.

\bibitem[Kairouz et~al.(2019)Kairouz, McMahan, Avent, Bellet, Bennis, Bhagoji,
  Bonawitz, Charles, Cormode, Cummings, et~al.]{kairouz2019advances}
Kairouz, P., McMahan, H.~B., Avent, B., Bellet, A., Bennis, M., Bhagoji, A.~N.,
  Bonawitz, K., Charles, Z., Cormode, G., Cummings, R., et~al.
\newblock Advances and open problems in federated learning.
\newblock \emph{arXiv preprint arXiv:1912.04977}, 2019.

\bibitem[Karimireddy et~al.(2020)Karimireddy, Kale, Mohri, Reddi, Stich, and
  Suresh]{karimireddy2020scaffold}
Karimireddy, S.~P., Kale, S., Mohri, M., Reddi, S., Stich, S., and Suresh,
  A.~T.
\newblock Scaffold: Stochastic controlled averaging for federated learning.
\newblock In \emph{International Conference on Machine Learning}, pp.\
  5132--5143. PMLR, 2020.

\bibitem[Khaled et~al.(2019)Khaled, Mishchenko, and
  Richt{\'a}rik]{khaled2019first}
Khaled, A., Mishchenko, K., and Richt{\'a}rik, P.
\newblock First analysis of local {GD} on heterogeneous data.
\newblock \emph{arXiv preprint arXiv:1909.04715}, 2019.

\bibitem[Khanduri et~al.(2021)Khanduri, Zeng, Hong, Wai, Wang, and
  Yang]{Khanduri2021ANA}
Khanduri, P., Zeng, S., Hong, M., Wai, H.-T., Wang, Z., and Yang, Z.
\newblock A near-optimal algorithm for stochastic bilevel optimization via
  double-momentum.
\newblock \emph{arXiv preprint arXiv:2102.07367}, 2021.

\bibitem[Khodak et~al.(2021)Khodak, Tu, Li, Li, Balcan, Smith, and
  Talwalkar]{khodak2021federated}
Khodak, M., Tu, R., Li, T., Li, L., Balcan, M.-F.~F., Smith, V., and Talwalkar,
  A.
\newblock Federated hyperparameter tuning: Challenges, baselines, and
  connections to weight-sharing.
\newblock \emph{Advances in Neural Information Processing Systems}, 34, 2021.

\bibitem[Kini et~al.(2021)Kini, Paraskevas, Oymak, and
  Thrampoulidis]{kini2021label}
Kini, G.~R., Paraskevas, O., Oymak, S., and Thrampoulidis, C.
\newblock Label-imbalanced and group-sensitive classification under
  overparameterization.
\newblock \emph{arXiv preprint arXiv:2103.01550}, 2021.

\bibitem[Kone{\v{c}}n{\`y} et~al.(2018)Kone{\v{c}}n{\`y}, McMahan, Ramage, and
  Richt{\'a}rik]{konevcny2016federated}
Kone{\v{c}}n{\`y}, J., McMahan, H.~B., Ramage, D., and Richt{\'a}rik, P.
\newblock Federated optimization: Distributed machine learning for on-device
  intelligence.
\newblock \emph{International Conference on Learning Representations}, 2018.

\bibitem[Li et~al.(2020{\natexlab{a}})Li, Gu, and Huang]{li2020improved}
Li, J., Gu, B., and Huang, H.
\newblock Improved bilevel model: Fast and optimal algorithm with theoretical
  guarantee.
\newblock \emph{arXiv preprint arXiv:2009.00690}, 2020{\natexlab{a}}.

\bibitem[Li et~al.(2021)Li, Zhang, Thrampoulidis, Chen, and
  Oymak]{li2021autobalance}
Li, M., Zhang, X., Thrampoulidis, C., Chen, J., and Oymak, S.
\newblock Autobalance: Optimized loss functions for imbalanced data.
\newblock \emph{Advances in Neural Information Processing Systems}, 34, 2021.

\bibitem[Li et~al.(2020{\natexlab{b}})Li, Sahu, Zaheer, Sanjabi, Talwalkar, and
  Smith]{li2020federated}
Li, T., Sahu, A.~K., Zaheer, M., Sanjabi, M., Talwalkar, A., and Smith, V.
\newblock Federated optimization in heterogeneous networks.
\newblock \emph{Proceedings of Machine Learning and Systems}, 2:\penalty0
  429--450, 2020{\natexlab{b}}.

\bibitem[Li et~al.(2019)Li, Huang, Yang, Wang, and Zhang]{li2019convergence}
Li, X., Huang, K., Yang, W., Wang, S., and Zhang, Z.
\newblock On the convergence of {FedAvg} on non-{IID} data.
\newblock \emph{arXiv preprint arXiv:1907.02189}, 2019.

\bibitem[Lin et~al.(2020)Lin, Jin, and Jordan]{lin2020gradient}
Lin, T., Jin, C., and Jordan, M.
\newblock On gradient descent ascent for nonconvex-concave minimax problems.
\newblock In \emph{International Conference on Machine Learning}, pp.\
  6083--6093. PMLR, 2020.

\bibitem[Liu et~al.(2018)Liu, Simonyan, and Yang]{liu2018darts}
Liu, H., Simonyan, K., and Yang, Y.
\newblock Darts: Differentiable architecture search.
\newblock \emph{arXiv preprint arXiv:1806.09055}, 2018.

\bibitem[Liu et~al.(2019)Liu, Mroueh, Ross, Zhang, Cui, Das, and
  Yang]{liu2019towards}
Liu, M., Mroueh, Y., Ross, J., Zhang, W., Cui, X., Das, P., and Yang, T.
\newblock Towards better understanding of adaptive gradient algorithms in
  generative adversarial nets.
\newblock \emph{arXiv preprint arXiv:1912.11940}, 2019.

\bibitem[Liu et~al.(2020)Liu, Mu, Yuan, Zeng, and Zhang]{liu2020generic}
Liu, R., Mu, P., Yuan, X., Zeng, S., and Zhang, J.
\newblock A generic first-order algorithmic framework for bi-level programming
  beyond lower-level singleton.
\newblock In \emph{International Conference on Machine Learning}, pp.\
  6305--6315. PMLR, 2020.

\bibitem[Liu et~al.(2021)Liu, Gao, Zhang, Meng, and Lin]{liu2021investigating}
Liu, R., Gao, J., Zhang, J., Meng, D., and Lin, Z.
\newblock Investigating bi-level optimization for learning and vision from a
  unified perspective: A survey and beyond.
\newblock \emph{arXiv preprint arXiv:2101.11517}, 2021.

\bibitem[Luo et~al.(2020)Luo, Ye, Huang, and Zhang]{luo2020stochastic}
Luo, L., Ye, H., Huang, Z., and Zhang, T.
\newblock Stochastic recursive gradient descent ascent for stochastic
  nonconvex-strongly-concave minimax problems.
\newblock \emph{Advances in Neural Information Processing Systems},
  33:\penalty0 20566--20577, 2020.

\bibitem[Lv et~al.(2007)Lv, Hu, Wang, and Wan]{lv2007penalty}
Lv, Y., Hu, T., Wang, G., and Wan, Z.
\newblock A penalty function method based on kuhn--tucker condition for solving
  linear bilevel programming.
\newblock \emph{Applied Mathematics and Computation}, 188\penalty0
  (1):\penalty0 808--813, 2007.

\bibitem[Maclaurin et~al.(2015)Maclaurin, Duvenaud, and
  Adams]{maclaurin2015gradient}
Maclaurin, D., Duvenaud, D., and Adams, R.
\newblock Gradient-based hyperparameter optimization through reversible
  learning.
\newblock In \emph{International conference on machine learning}, pp.\
  2113--2122. PMLR, 2015.

\bibitem[Madry et~al.(2017)Madry, Makelov, Schmidt, Tsipras, and
  Vladu]{madry2017towards}
Madry, A., Makelov, A., Schmidt, L., Tsipras, D., and Vladu, A.
\newblock Towards deep learning models resistant to adversarial attacks.
\newblock \emph{arXiv preprint arXiv:1706.06083}, 2017.

\bibitem[McMahan et~al.(2017)McMahan, Moore, Ramage, Hampson, and
  y~Arcas]{mcmahan17fedavg}
McMahan, B., Moore, E., Ramage, D., Hampson, S., and y~Arcas, B.~A.
\newblock Communication-efficient learning of deep networks from decentralized
  data.
\newblock In \emph{Proceedings of the 20th International Conference on
  Artificial Intelligence and Statistics, {AISTATS} 2017, 20-22 April 2017,
  Fort Lauderdale, FL, {USA}}, pp.\  1273--1282, 2017.
\newblock URL \url{http://proceedings.mlr.press/v54/mcmahan17a.html}.

\bibitem[Mitra et~al.(2021)Mitra, Jaafar, Pappas, and Hassani]{mitra2021linear}
Mitra, A., Jaafar, R., Pappas, G., and Hassani, H.
\newblock Linear convergence in federated learning: Tackling client
  heterogeneity and sparse gradients.
\newblock \emph{Advances in Neural Information Processing Systems}, 34, 2021.

\bibitem[Mohri et~al.(2019)Mohri, Sivek, and Suresh]{mohri2019agnostic}
Mohri, M., Sivek, G., and Suresh, A.~T.
\newblock Agnostic federated learning.
\newblock In \emph{International Conference on Machine Learning}, pp.\
  4615--4625. PMLR, 2019.

\bibitem[Mokhtari et~al.(2020)Mokhtari, Ozdaglar, and
  Pattathil]{mokhtari2020unified}
Mokhtari, A., Ozdaglar, A., and Pattathil, S.
\newblock A unified analysis of extra-gradient and optimistic gradient methods
  for saddle point problems: Proximal point approach.
\newblock In \emph{International Conference on Artificial Intelligence and
  Statistics}, pp.\  1497--1507. PMLR, 2020.

\bibitem[Moore(2010)]{moore2010bilevel}
Moore, G.~M.
\newblock \emph{Bilevel programming algorithms for machine learning model
  selection}.
\newblock Rensselaer Polytechnic Institute, 2010.

\bibitem[Nazari et~al.(2019)Nazari, Tarzanagh, and
  Michailidis]{nazari2019dadam}
Nazari, P., Tarzanagh, D.~A., and Michailidis, G.
\newblock Dadam: A consensus-based distributed adaptive gradient method for
  online optimization.
\newblock \emph{arXiv preprint arXiv:1901.09109}, 2019.

\bibitem[Nedi{\'c} \& Ozdaglar(2009)Nedi{\'c} and
  Ozdaglar]{nedic2009subgradient}
Nedi{\'c}, A. and Ozdaglar, A.
\newblock Subgradient methods for saddle-point problems.
\newblock \emph{Journal of optimization theory and applications}, 142\penalty0
  (1):\penalty0 205--228, 2009.

\bibitem[Nemirovski(2004)]{nemirovski2004prox}
Nemirovski, A.
\newblock Prox-method with rate of convergence $o(1/t)$ for variational
  inequalities with lipschitz continuous monotone operators and smooth
  convex-concave saddle point problems.
\newblock \emph{SIAM Journal on Optimization}, 15\penalty0 (1):\penalty0
  229--251, 2004.

\bibitem[Nichol \& Schulman(2018)Nichol and Schulman]{nichol2018reptile}
Nichol, A. and Schulman, J.
\newblock Reptile: a scalable metalearning algorithm.
\newblock \emph{arXiv preprint arXiv:1803.02999}, 2\penalty0 (3):\penalty0 4,
  2018.

\bibitem[Nouiehed et~al.(2019)Nouiehed, Sanjabi, Huang, Lee, and
  Razaviyayn]{nouiehed2019solving}
Nouiehed, M., Sanjabi, M., Huang, T., Lee, J.~D., and Razaviyayn, M.
\newblock Solving a class of non-convex min-max games using iterative first
  order methods.
\newblock \emph{Advances in Neural Information Processing Systems}, 32, 2019.

\bibitem[Pathak \& Wainwright(2020)Pathak and Wainwright]{pathak2020fedsplit}
Pathak, R. and Wainwright, M.~J.
\newblock Fedsplit: an algorithmic framework for fast federated optimization.
\newblock \emph{Advances in Neural Information Processing Systems},
  33:\penalty0 7057--7066, 2020.

\bibitem[Pedregosa(2016)]{pedregosa2016hyperparameter}
Pedregosa, F.
\newblock Hyperparameter optimization with approximate gradient.
\newblock In \emph{International conference on machine learning}, pp.\
  737--746. PMLR, 2016.

\bibitem[Rafique et~al.(2021)Rafique, Liu, Lin, and Yang]{rafique2021weakly}
Rafique, H., Liu, M., Lin, Q., and Yang, T.
\newblock Weakly-convex--concave min--max optimization: provable algorithms and
  applications in machine learning.
\newblock \emph{Optimization Methods and Software}, pp.\  1--35, 2021.

\bibitem[Rasouli et~al.(2020)Rasouli, Sun, and Rajagopal]{rasouli2020fedgan}
Rasouli, M., Sun, T., and Rajagopal, R.
\newblock Fedgan: Federated generative adversarial networks for distributed
  data.
\newblock \emph{arXiv preprint arXiv:2006.07228}, 2020.

\bibitem[Reddi et~al.(2020)Reddi, Charles, Zaheer, Garrett, Rush,
  Kone{\v{c}}n{\`y}, Kumar, and McMahan]{reddi2020adaptive}
Reddi, S.~J., Charles, Z., Zaheer, M., Garrett, Z., Rush, K.,
  Kone{\v{c}}n{\`y}, J., Kumar, S., and McMahan, H.~B.
\newblock Adaptive federated optimization.
\newblock In \emph{International Conference on Learning Representations}, 2020.

\bibitem[Reisizadeh et~al.(2020)Reisizadeh, Farnia, Pedarsani, and
  Jadbabaie]{reisizadeh2020robust}
Reisizadeh, A., Farnia, F., Pedarsani, R., and Jadbabaie, A.
\newblock Robust federated learning: The case of affine distribution shifts.
\newblock \emph{Advances in Neural Information Processing Systems},
  33:\penalty0 21554--21565, 2020.

\bibitem[Shaban et~al.(2019)Shaban, Cheng, Hatch, and
  Boots]{shaban2019truncated}
Shaban, A., Cheng, C.-A., Hatch, N., and Boots, B.
\newblock Truncated back-propagation for bilevel optimization.
\newblock In \emph{The 22nd International Conference on Artificial Intelligence
  and Statistics}, pp.\  1723--1732. PMLR, 2019.

\bibitem[Shen et~al.(2021)Shen, Du, Zhao, Zhang, Ji, and Gao]{shen2021fedmm}
Shen, Y., Du, J., Zhao, H., Zhang, B., Ji, Z., and Gao, M.
\newblock Fedmm: Saddle point optimization for federated adversarial domain
  adaptation.
\newblock \emph{arXiv preprint arXiv:2110.08477}, 2021.

\bibitem[Shi et~al.(2005)Shi, Lu, and Zhang]{shi2005extended}
Shi, C., Lu, J., and Zhang, G.
\newblock An extended kuhn--tucker approach for linear bilevel programming.
\newblock \emph{Applied Mathematics and Computation}, 162\penalty0
  (1):\penalty0 51--63, 2005.

\bibitem[Sinha et~al.(2017)Sinha, Malo, and Deb]{sinha2017review}
Sinha, A., Malo, P., and Deb, K.
\newblock A review on bilevel optimization: from classical to evolutionary
  approaches and applications.
\newblock \emph{IEEE Transactions on Evolutionary Computation}, 22\penalty0
  (2):\penalty0 276--295, 2017.

\bibitem[Stich(2019)]{stich2018local}
Stich, S.~U.
\newblock Local {SGD} converges fast and communicates little.
\newblock In \emph{International Conference on Learning Representations}, 2019.
\newblock URL \url{https://openreview.net/forum?id=S1g2JnRcFX}.

\bibitem[Stich \& Karimireddy(2019)Stich and Karimireddy]{stich2019error}
Stich, S.~U. and Karimireddy, S.~P.
\newblock The error-feedback framework: Better rates for {SGD} with delayed
  gradients and compressed communication.
\newblock \emph{arXiv preprint arXiv:1909.05350}, 2019.

\bibitem[Thekumparampil et~al.(2019)Thekumparampil, Jain, Netrapalli, and
  Oh]{thekumparampil2019efficient}
Thekumparampil, K.~K., Jain, P., Netrapalli, P., and Oh, S.
\newblock Efficient algorithms for smooth minimax optimization.
\newblock \emph{arXiv preprint arXiv:1907.01543}, 2019.

\bibitem[Tran~Dinh et~al.(2020)Tran~Dinh, Liu, and Nguyen]{tran2020hybrid}
Tran~Dinh, Q., Liu, D., and Nguyen, L.
\newblock Hybrid variance-reduced sgd algorithms for minimax problems with
  nonconvex-linear function.
\newblock \emph{Advances in Neural Information Processing Systems},
  33:\penalty0 11096--11107, 2020.

\bibitem[Wang \& Joshi(2018)Wang and Joshi]{wang2018cooperative}
Wang, J. and Joshi, G.
\newblock Cooperative {SGD}: A unified framework for the design and analysis of
  communication-efficient {SGD} algorithms.
\newblock \emph{arXiv preprint arXiv:1808.07576}, 2018.

\bibitem[Wang et~al.(2020)Wang, Liu, Liang, Joshi, and Poor]{wang2020tackling}
Wang, J., Liu, Q., Liang, H., Joshi, G., and Poor, H.~V.
\newblock Tackling the objective inconsistency problem in heterogeneous
  federated optimization.
\newblock \emph{Advances in neural information processing systems}, 2020.

\bibitem[Wang et~al.(2016)Wang, Liu, and Fang]{wang2016accelerating}
Wang, M., Liu, J., and Fang, E.
\newblock Accelerating stochastic composition optimization.
\newblock \emph{Advances in Neural Information Processing Systems}, 29, 2016.

\bibitem[Wang et~al.(2017)Wang, Fang, and Liu]{wang2017stochastic}
Wang, M., Fang, E.~X., and Liu, H.
\newblock Stochastic compositional gradient descent: algorithms for minimizing
  compositions of expected-value functions.
\newblock \emph{Mathematical Programming}, 161\penalty0 (1-2):\penalty0
  419--449, 2017.

\bibitem[Wang et~al.(2019)Wang, Tuor, Salonidis, Leung, Makaya, He, and
  Chan]{wang2019adaptive}
Wang, S., Tuor, T., Salonidis, T., Leung, K.~K., Makaya, C., He, T., and Chan,
  K.
\newblock Adaptive federated learning in resource constrained edge computing
  systems.
\newblock \emph{IEEE Journal on Selected Areas in Communications}, 37\penalty0
  (6):\penalty0 1205--1221, 2019.

\bibitem[Wu et~al.(2020)Wu, Zhang, Xu, and Gu]{wu2020finite}
Wu, Y.~F., Zhang, W., Xu, P., and Gu, Q.
\newblock A finite-time analysis of two time-scale actor-critic methods.
\newblock \emph{Advances in Neural Information Processing Systems},
  33:\penalty0 17617--17628, 2020.

\bibitem[Xie et~al.(2021)Xie, Zhang, Zhang, Shen, and Qian]{xie2021federated}
Xie, J., Zhang, C., Zhang, Y., Shen, Z., and Qian, H.
\newblock A federated learning framework for nonconvex-pl minimax problems.
\newblock \emph{arXiv preprint arXiv:2105.14216}, 2021.

\bibitem[Yan et~al.(2020)Yan, Xu, Lin, Liu, and Yang]{yan2020optimal}
Yan, Y., Xu, Y., Lin, Q., Liu, W., and Yang, T.
\newblock Optimal epoch stochastic gradient descent ascent methods for min-max
  optimization.
\newblock \emph{Advances in Neural Information Processing Systems},
  33:\penalty0 5789--5800, 2020.

\bibitem[Yang et~al.(2020)Yang, Kiyavash, and He]{yang2020global}
Yang, J., Kiyavash, N., and He, N.
\newblock Global convergence and variance reduction for a class of
  nonconvex-nonconcave minimax problems.
\newblock \emph{Advances in Neural Information Processing Systems},
  33:\penalty0 1153--1165, 2020.

\bibitem[Yoon \& Ryu(2021)Yoon and Ryu]{yoon2021accelerated}
Yoon, T. and Ryu, E.~K.
\newblock Accelerated algorithms for smooth convex-concave minimax problems
  with o (1/k\^{} 2) rate on squared gradient norm.
\newblock In \emph{International Conference on Machine Learning}, pp.\
  12098--12109. PMLR, 2021.

\bibitem[Yu et~al.(2019)Yu, Yang, and Zhu]{yu2019parallel}
Yu, H., Yang, S., and Zhu, S.
\newblock Parallel restarted {SGD} with faster convergence and less
  communication: Demystifying why model averaging works for deep learning.
\newblock In \emph{Proceedings of the AAAI Conference on Artificial
  Intelligence}, volume~33, pp.\  5693--5700, 2019.

\bibitem[Zinkevich et~al.(2010)Zinkevich, Weimer, Li, and
  Smola]{zinkevich2010parallelized}
Zinkevich, M., Weimer, M., Li, L., and Smola, A.~J.
\newblock Parallelized stochastic gradient descent.
\newblock In \emph{Advances in neural information processing systems}, pp.\
  2595--2603, 2010.

\end{thebibliography}
\bibliographystyle{icml/icml2022.bst}

\newpage
\appendix
\onecolumn
\icmltitle{ APPENDIX \\ \fedblo: Federated~\Blo,~\Minmax, and Compositional~Optimization}
The appendix is organized as follows: Section~\ref{sec:simp:algs} introduces the \lfedblo algorithm. Section~\ref{supp:sec:related} discusses the related work. We provide all details for the proof of the  main theorems in Sections~\ref{sec:app:bilevel}, \ref{sec:app:minmax}, \ref{sec:app:compos}, and \ref{sec:app:sing} for federated bilevel, \minmax, compositional, and single-level optimization, respectively. In Section~\ref{sec:techn}, we state a few auxiliary technical lemmas. Finally, in Section~\ref{sec:app:experim}, we provide the detailed parameters of our numerical experiments (Section~\ref{sec:numerics}) and then introduce further experiments.
%
\section{\lfedblo}\label{sec:simp:algs}
Implementing \fedinn and \fedout naively by using the global direct and indirect gradients and sending the local information to the server that would then calculate the global gradients leads to a communication and space complexity of which can be  prohibitive for large-sized $d_1$ and $d_2$. One can consider possible local variants of \fedinn and \fedout tailore to such scenarios. Each of the possible algorithms (See Table~\ref{tabl:supp:methods}) can then either use the global gradient or only the local gradient, either use a SVRG or SGD. 
\begin{algorithm}[h]
\caption{$\m{x}^{+} ~=~\pmb{\lfedout}~(\m{x}, \m{y}, \alpha)$ for stochastic \colorbox{cyan!30}{bilevel}, \colorbox{green!30}{minimax}, and \colorbox{magenta!30}{compositional} problems
}
\begin{small}
\begin{algorithmic}[1]
 \State $\m{x}_{i,0}=\m{x}$ and $\alpha_i \in (0,\alpha]$ for each $i \in \mc{S}$.
 \State Choose $N \in \mb{N}$ (the number of terms of Neumann series).
 \For {$i \in \mc{S}$ \textbf{in parallel}} 
\For {$\nu=0,\ldots,\tau_i-1$} 
 \State
\colorbox{cyan!30}{Select $ N'\in \{0, \dots, N-1\}$ UAR.}
 \State
\colorbox{cyan!30}{$ \m{h}_{i,\nu }=\nabla_\m{x} f_i(\m{x}_{i,\nu },\m{y}; \xi_{i,\nu })-  \frac{N}{\ell_{g,1}} \nabla^2_{\m{xy}}g_i(\m{x}_{i,\nu },\m{y};{\zeta}_{i,\nu })  \prod\limits_{n=1}^{N'} \big(\m{I}-\frac{1}{\ell_{g,1}}\nabla^2_y g_i(\m{x}_{i,\nu }, \m{y};{\zeta}_{i,n})\big) \nabla_\m{x} f_i(\m{y}_{i,\nu },\m{y},\xi_{i,\nu })$}
 \State\colorbox{green!30}{$\m{h}_{i,\nu }=\nabla_{\m{x}} f_i(\m{x}_{i,\nu }, \m{y}; \xi_{i,\nu })$}
 \State \colorbox{magenta!30}{$\m{h}_{i,\nu }=\nabla \m{r}_i(\m{x}_{i,\nu };  \zeta_{i,\nu })^\top \nabla f_i(\m{y}_{i,\nu };  \xi_{i,\nu })$}
 \State  $ \m{x}_{i,\nu +1}= \m{x}_{i,\nu }-\alpha_i\m{h}_{i,\nu }$
  \EndFor
\EndFor
\State $\m{x}^{+}=|\mc{S}|^{-1}\sum_{i\in \mc{S}} \m{x}_{i,\tau_i}$
\end{algorithmic}
\end{small}
\label{alg:localfedout}
\end{algorithm}
\begin{algorithm}[h]
\caption{$ \m{y}^+~=~\pmb{\lfedinn}~(\m{x},\m{y},\beta$) 
for stochastic \colorbox{cyan!30}{bilevel}, \colorbox{green!30}{minimax}, and \colorbox{magenta!30}{compositional} problems}
\begin{small}
\begin{algorithmic}[1]
 \State $\m{y}_{i,0}=\m{y}$ and $\beta_i \in (0,\beta]$ for each $i \in \mc{S}$.
 \For {$i \in \mc{S}$ \textbf{in parallel}} 
\For {$\nu=0,\ldots,\tau_i-1$} 
\State \colorbox{cyan!30}{$ \m{q}_{i,\nu }=\nabla_\m{y} g_i(\m{x}, \m{y}_{i,\nu };\zeta_{i,\nu })$} \colorbox{green!30}{$ \m{q}_{i,\nu }=-\nabla_\m{y} f_i(\m{x}, \m{y}_{i,\nu };\xi_{i,\nu })$} \colorbox{magenta!30}{$ \m{q}_{i,\nu }=\m{y}_{i,\nu }-\m{r}_i(\m{x};\zeta_{i,\nu })$}
\State $\m{y}_{i,\nu +1}= \m{y}_{i,\nu }-\beta_i\m{q}_{i,\nu }$
\EndFor
\EndFor
\State $\m{y}^{+}=|\mathcal{S}|^{-1}\sum_{i\in\mathcal{S}}\m{y}_{i,\tau_i}$
\end{algorithmic}
\end{small}
\label{alg:localfedinn}
\end{algorithm}
\begin{table*}[h]
\centering
\scalebox{.9}{
\begin{tabular}{lccccccc}
\toprule 
 & \multicolumn{2}{c}{definition} & & \multicolumn{4}{c}{properties} \\ \cmidrule{2-3}\cmidrule{5-8} 
                            & outer & inner  & & global & global & global &  \#~communication \\
                       & optimizer  & optimizer & & outer gradient & IHGP& inner gradient & rounds \\
                       \midrule
\multirow{ 2}{*}{\pmb{\fedblo}} &Algorithm~\ref{alg:fedout}&Algorithm~\ref{alg:fedinn} && \multirow{ 2}{*}{yes}& \multirow{ 2}{*}{yes} & \multirow{ 2}{*}{yes}& \multirow{ 2}{*}{$2T+N+3$} \\
& (SVRG on $\m{x})$ & (SVRG on $\m{y})$ && &  & & \\
\hline
\multirow{ 2}{*}{\pmb{\lfedblo}}&Algorithm~\ref{alg:localfedout}&Algorithm~\ref{alg:localfedinn} && \multirow{ 2}{*}{no}& \multirow{ 2}{*}{no} &\multirow{ 2}{*}{ no}& \multirow{ 2}{*}{$T+1$} \\
& (SGD on $\m{x})$ & (SGD on $\m{y})$ && &  & & \\
\hline
\multirow{ 2}{*}{$\pmb{\fedblo}_\textbf{SGD}$} &Algorithm~\ref{alg:fedout}&Algorithm~\ref{alg:localfedinn} && \multirow{ 2}{*}{yes}& \multirow{ 2}{*}{yes} & \multirow{ 2}{*}{no}& \multirow{ 2}{*}{$T+N+3$} \\
& (SVRG on $\m{x})$ & (SGD on $\m{y})$ && &  & & \\

\hline
\multirow{ 2}{*}{$\pmb{\lfedblo}_\textbf{SVRG}$}&Algorithm~\ref{alg:localfedout}&Algorithm~\ref{alg:fedinn} && \multirow{ 2}{*}{no}& \multirow{ 2}{*}{no} &\multirow{ 2}{*}{ yes}& \multirow{ 2}{*}{$2T+1$} \\
& (SGD on $\m{x})$ & (SVRG on $\m{y})$ && &  & & \\

  \bottomrule
\end{tabular}
}
\caption{Definition of studied algorithms by using inner/outer optimization algorithms and server updates and resulting properties of these algorithms. $T$ and $N$ denote the number of inner iterations and terms of Neumann series, respectively.}
\label{tabl:supp:methods}
\vspace{-.3cm}
\end{table*}
\section{ Related Work}\label{supp:sec:related}
We provide an overview of the current literature on  non-federated nested (bilevel, minmimax, and compositional) optimization and federated learning.


\subsection{Bilevel Optimization} 

A broad collection of algorithms have been proposed to solve bilevel nonlinear programming problems.  \citet{aiyoshi1984solution,edmunds1991algorithms,al1992global,hansen1992new,shi2005extended,lv2007penalty,moore2010bilevel} reduce the bilevel problem to a single-level optimization problem using for example the Karush-Kuhn-Tucker (KKT) conditions or penalty function methods.  A similar idea was also explored in \citet{khodak2021federated} where the authors provide a reformulation of the hyperparameter optimization (bilevel objective) into a {single-level} objective and develop a federated online method to solve it. However, the reduced single-level problem is usually difficult to solve~\citep{sinha2017review}. 

In comparison, alternating gradient-based approaches designed for the bilevel problems are more attractive due to their simplicity and effectiveness. This type of approaches estimate the hypergradient $\nabla f(\m{x})$ for iterative updates, and are generally divided to approximate implicit differentiation (AID) and iterative differentiation (ITD) categories. ITD-based approaches~\citep{maclaurin2015gradient,franceschi2017forward,finn2017model,grazzi2020iteration} estimate the hypergradient $\nabla f(\m{x})$ in either a reverse (automatic differentiation) or forward manner. AID-based approaches~\citep{pedregosa2016hyperparameter,grazzi2020iteration,ghadimi2018approximation} estimate the hypergradient via implicit differentiation which involves solving a linear system. Our algorithms follow the latter approach.  

Theoretically, bilevel optimization has been studied via both asymptotic and non-asymptotic analysis \cite{franceschi2018bilevel,liu2020generic,li2020improved,shaban2019truncated,ghadimi2018approximation,ji2021bilevel,hong2020two}. In particular, \cite{franceschi2018bilevel} provided the asymptotic convergence of a backpropagation-based approach as one of ITD-based algorithms by assuming the inner problem is strongly convex. \cite{shaban2019truncated} gave a similar analysis for a \textit{truncated} backpropagation approach. Non-asymptotic complexity analysis for bilevel optimization has also been explored. \citet{ghadimi2018approximation} provided a finite-time convergence analysis for an AID-based algorithm under three different loss geometries, where $f(\cdot)$ is either strongly convex, convex or nonconvex, and $g(\m{x},\cdot)$ is strongly convex. \cite{ji2021bilevel} provided an improved non-asymptotic analysis for AID- and ITD-based algorithms under the nonconvex-strongly-convex geometry. \cite{Ji2021LowerBA} provided the first-known lower bounds on complexity as well as tighter upper bounds. When the objective functions can be expressed in an expected or finite-time form, \cite{ghadimi2018approximation,ji2021bilevel,hong2020two} developed stochastic bilevel algorithms and provided the non-asymptotic analysis. \citep{chen2021closing} provided a tighter analysis of SGD for stochastic bilevel problems. \cite{chen2021single,guo2021stochastic,Khanduri2021ANA,Ji2020ProvablyFA,huang2021biadam,dagreou2022framework} studied accelerated SGD, SAGA, momentum, and adaptive-type bilevel optimization methods. More results can be found in the recent review paper~\citep{liu2021investigating} and references therein. 

\subsubsection{Minimax Optimization}
Minimax optimization has a long history dating back to \cite{brown1951iterative}. Earlier works focused on the deterministic convex-concave regime \cite{nemirovski2004prox,nedic2009subgradient}. Recently, there has emerged a surge of studies of stochastic minimax problems. The alternating version of the gradient descent ascent (SGDA) has been studied by incorporating the idea of optimism \cite{daskalakis2018limit,gidel2018variational,mokhtari2020unified,yoon2021accelerated}.  \cite{rafique2021weakly,thekumparampil2019efficient,nouiehed2019solving,lin2020gradient} studied  SGDA in the nonconvex-strongly concave setting. Specifically, the ${\cal O}(\epsilon^{-2})$ sample complexity has been established in \citep{lin2020gradient} under an increasing batch size ${\cal O}(\epsilon^{-1})$. \citet{chen2021closing} provided the ${\cal O}(\epsilon^{-2})$ sample complexity under an ${\cal O}(1)$ constant batch size. In the same setting, accelerated GDA algorithms have been developed in \cite{luo2020stochastic,yan2020optimal,tran2020hybrid}.    
Going beyond the one-side concave settings, algorithms and their convergence analysis have been studied for nonconvex-nonconcave minimax problems with certain benign structure; see e.g., \cite{gidel2018variational,liu2019towards,yang2020global,diakonikolas2021efficient,barazandeh2021solving}. A comparison of our results with prior work can be found in Table \ref{table:blo2:results}.

\subsubsection{Compositional Optimization}
Stochastic compositional gradient algorithms \citep{wang2017stochastic,wang2016accelerating} can be viewed as an  alternating SGD for the special compositional problem. However, to ensure convergence, the algorithms in \citep{wang2017stochastic,wang2016accelerating} use two sequences of variables being updated in two different time scales, and thus the iteration complexity of \citep{wang2017stochastic} and \citep{wang2016accelerating} is worse than ${\cal O}(\epsilon^{-2})$ of the standard SGD. Our work is closely related to ALSET \cite{chen2021closing}, where an $\mc{O}(\epsilon^{-2})$ sample complexity has been established in a non-FL setting.
\subsection{Federated Learning}

FL involves learning a centralized model from distributed client
data. Although this centralized model benefits from all client data, it raises several types of issues such as generalization, fairness, communication efficiency, and privacy \cite{mohri2019agnostic,stich2018local,yu2019parallel,wang2018cooperative,stich2019error, basu2019qsparse,nazari2019dadam,barazandeh2021decentralized}. \fedavg~\citep{mcmahan17fedavg} can tackle some of these issues such as high communication costs. Many variants of \fedavg have been proposed to tackle other emerging issues such as convergence and \textit{client drift}. Examples include adding a regularization term in the client objectives towards the broadcast model \citep{li2020federated}, proximal splitting \cite{pathak2020fedsplit,mitra2021linear}, variance reduction \cite{karimireddy2020scaffold,mitra2021linear}, dynamic regularization \cite{acar2021federated}, and adaptive updates \cite{reddi2020adaptive}. When clients are homogeneous, \fedavg is closely related to local SGD \citep{zinkevich2010parallelized}, which has been analyzed by many works \citep{stich2018local,yu2019parallel,wang2018cooperative,stich2019error, basu2019qsparse}.

In order to analyze \fedavg in heterogeneous settings,  \citep{li2020federated, wang2019adaptive, khaled2019first, li2019convergence} derive convergence rates depending on the amount of heterogeneity. They showed that the convergence rate of \fedavg gets worse with client heterogeneity. By using control variates to reduce client drift, the \scaffold method \citep{karimireddy2020scaffold} achieves convergence rates that are independent of the amount of heterogeneity. Relatedly, \textsc{FedNova}~\cite{wang2020tackling} and  \textsc{FedLin} \cite{mitra2021linear} provided the convegence of their methods  despite arbitrary local objective and systems heterogeneity. In particular, \cite{mitra2021linear} showed that \textsc{FedLin} guarantees linear convergence to the global minimum of deterministic objective, despite arbitrary objective and systems heterogeneity.  As explained in the main body, our algorithms critically leverage these ideas after identifying the additional challenges that client drift brings to federated bilevel settings. 

\subsubsection{Federated minimax Learning}

A few recent studies are devoted to federated minimax optimization \cite{rasouli2020fedgan, reisizadeh2020robust,deng2020distributionally,hou2021efficient}. In particular, \cite{reisizadeh2020robust} consider minimax problem with inner problem satisfying PL condition and the outer one being either nonconvex or satisfying PL. However, the proposed algorithm only communicates $\m{x}$ to the server. \citet{xie2021federated} consider a general class of nonconvex-PL minimax problems in the cross-device federated learning setting. Their algorithm performs multiple local update steps on a subset of active clients in each round and leverages global gradient estimates to correct the bias in local update directions. \citet{deng2021local} studied federated optimization for a family of smooth nonconvex minimax functions. \citet{shen2021fedmm} proposed a distributed minimax optimizer called \textsc{FedMM}, designed specifically for the federated adversary domain adaptation problem. \citet{hou2021efficient} proposed a SCAFFOLD saddle point algorithm (SCAFFOLD-S) for solving strongly convex-concave minimax problems in the federated setting.  To the best of our knowledge, all the aforementioned developments require a bound on the \textit{heterogeneity} of the local functions, and do not account for the effects of systems heterogeneity which is also a key challenge in FL. In addition, our work proposes the first \textit{alternating} federated SVRG-type algorithm for minimax problems with iteration complexity that matches to the non-federated setting (see, Table~\ref{table:blo2:results}).   
\section{Proof for Federated Bilevel Optimization}\label{sec:app:bilevel}

Throughout the proof, we will use $\mathcal{F}^{k,t}_{i,\nu}$ to denote the filtration that captures all the randomness up to the $\nu$-th local step of client $i$ in inner round $t$ and outer round $k$. With a slight abuse of notation, $\mathcal{F}^{k,t}_{i,-1}$ is to be interpreted as $\mathcal{F}^{k,t}, \forall i \in \mc{S}$. For simplicity, we remove subscripts $k$ and $t$ from the definition of stepsize and model parameters. For example, $\m{x}$ and $\m{x}^+$ denote $\m{x}^k$ and $\m{x}^{k+1}$, respectively. 

We analyze the convergence of \fedblo for general setting: starting from a common global model $\m{x}_{i,0}=\m{x}$, each client $i$ performs $\tau_i$ local steps (in parallel):
\begin{align}\label{app:eqn:update1:fedout}
\m{x}_{i,\nu +1} = \m{x}_{i,\nu }-\alpha_i \left( \m{h}(\m{x},\m{y}^{+})-\m{h}_i(\m{x},\m{y}^{+})+\m{h}_i(\m{x}_{i,\nu },\m{y}^{+})\right),
\end{align}
and then the server aggregates local models via $\m{x}^+=|\mc{S}|^{-1} \sum_{i\in \mc{S}} \m{x}_{i,\tau_i}$.   Here, $\alpha_i \in (0,\alpha]$ is the local stepsize, and $\m{h}(\m{x},\m{y}) := |\mc{S}|^{-1}\sum_{i\in\mathcal{S}} \m{h}_i^{\texttt{I}} (\m{x},\m{y})+ \m{h}_i^{\texttt{D}} (\m{x},\m{y})$. The proof for the special case  \eqref{eqn:update1:fedout}--\eqref{eqn:hyper:estim} where $\m{h}_{i,\nu}$ is used as a search direction follows similarly. Note that from \eqref{app:eqn:update1:fedout}, we have  
\begin{align}\label{eqn1:lem:dec}
\nonumber 
     \m{x}^{+}&=\m{x}-\frac{1}{m}  \sum\limits_{i=1}^{m} \alpha_i \sum\limits_{\nu=0}^{\tau_i-1}  \left( \m{h}(\m{x},\m{y}^{+})-\m{h}_i(\m{x},\m{y}^{+})+\m{h}_i(\m{x}_{i,\nu },\m{y}^{+})\right) \\
    &=\m{x}-\frac{1}{m}  \sum\limits_{i=1}^{m} \alpha_i \sum\limits_{\nu=0}^{\tau_i-1} \m{h}_i(\m{x}_{i,\nu}, \m{y}^+).
 \end{align}
We further set 
 \begin{align}\label{eqn:expe:h}
 \bar{\m{h}}_i (\m{x}_{i,\nu}, \m{y}^+)&:=\mb{E}\Big[\m{h}_i(\m{x}_{i,\nu}, \m{y}^+)|{\cal F}_{i,\nu-1}\Big].    
 \end{align}
\subsection*{Proof of Lemma~\ref{lem:IFT}}
\begin{proof}
Given $\m{x}\in \mb R^{d_1}$, the optimality condition of the inner problem in \eqref{fedblo:prob} is $\nabla_\m{y}g(\m{x},\m{y})=0$. Now, since $\nabla_\m{x}\left( \nabla_\m{y} g(\m{x},\m{y})\right)=0$, we obtain
$$
0=\sum_{j=1}^m \left( \nabla^2_{\m{xy}}g_j\left(\m{x},\m{y}^*(\m{x})\right)+\nabla \m{y}^*(\m{x}) \nabla^2_{\m{y}}g_j\left(\m{x},\m{y}^*(\m{x})\right) \right),
$$
which implies
\begin{equation*}
   \nabla \m{y}^*(\m{x})=- \bigg(\sum_{i=1}^m \nabla^2_{\m{xy}}g_i\left(\m{x},\m{y}^*(\m{x})\right)\bigg) \bigg(
\sum_{i=1}^m\nabla_{\m{y}}^2 {g_j}(\m{x},\m{y}^*(\m{x}))\bigg)^{-1}.
\end{equation*}
The results follows from a simple application of the chain rule to $f$ as follows: 
$$ \nabla f  \left(\m{x},\m{y}^*(\m{x})\right)= \nabla_{\m{x}}f\left(\m{x},\m{y}^*(\m{x})\right)+\nabla \m{y}^*(\m{x})\nabla_{\m{y}}f\left(\m{x},\m{y}^*(\m{x})\right).
$$
\end{proof}
\subsection*{Proof of Lemma~\ref{lem:neum:bias}}
\begin{proof}
By independency of $N'$, $\zeta_{i,n}$, and $\mc{S}_n$, and under Assumption~\ref{assu:bound:var}, we have
\begin{align}
\nonumber 
\mb{E}_{\mc{W}}\left[\wh{\m{H}}_\m{y}\right] &= \mb{E}_{\mc{W}}\left[ \frac{ N}{\ell_{g,1}} \prod_{n=1}^{N'} \left (\m{I} - \frac{1}{ \ell_{g,1}  |\mc{S}_n|} \sum_{i=1}^{|\mc{S}_n|} \nabla_{\m{y}}^2 g_i(\m{x},\m{y}; \zeta_{i,n}) \right)\right]\\
\nonumber 
&=\mb{E}_{N'} \left[  \mb{E}_{\mc{S}_{1:N'}} \left[
\mb{E}_{\zeta} \left[ \frac{ N}{\ell_{g,1}} \prod_{n=1}^{N'} \left (\m{I} - \frac{1}{ \ell_{g,1}  |\mc{S}_n|} \sum_{i=1}^{|\mc{S}_n|} \nabla_{\m{y}}^2 g_i(\m{x},\m{y}; \zeta_{i,n}) \right)\right]\right]\right]\\
&=\tfrac{1}{\ell_{g,1}}\sum_{n=0}^{N-1} \left[\m{I} - \tfrac{1}{\ell_{g,1}}\nabla^2_{\m{y}} g(\m{x}, \m{y})\right]^{n},\label{lem_st_p1}
\end{align}
where the last equality follows from the uniform distribution of $N'$. 

Note that since $\m{I} \succeq \tfrac{1}{\ell_{g,1}} \nabla^2_{\m{y}} g_i \succeq \frac{\mu_g}{\ell_{g,1}}$ for all $i \in [m]$ due to Assumption~\ref{assu:f}, we have
\begin{align*}
\mb{E}_{\mc{W}}\left[\|\wh{\m{H}}_{\m{y}}\|\right] &\le \frac{N}{\ell_{g,1}}\mb{E}_{\mc{W}}\left[\prod_{n=1}^{N'}  \left\|\m{I} - \frac{1}{ \ell_{g,1}  |\mc{S}_n|} \sum_{i=1}^{|\mc{S}_n|} \nabla_{\m{y}}^2 g_i(\m{x},\m{y}; \zeta_{i,n})\right\| \right]  \\
&\leq \frac{N}{\ell_{g,1}} \mb{E}_{N'} \left[1- \frac{\mu_g}{\ell_{g,1}} \right]^{N'}=
\frac{1}{\ell_{g,1}}\sum_{n=0}^{N-1} \left[1- \frac{\mu_{g}}{\ell_{g,1}}\right]^n\le \frac{1}{\mu_g}.
\end{align*}
The reminder of the proof is similar to \cite{ghadimi2018approximation}.
\end{proof}

The following lemma extends \citep[Lemma 2.2]{ghadimi2018approximation} and \citep[Lemma~2]{chen2021closing} to the finite-sum problem~\eqref{fedblo:prob}. Proofs follow similarly by applying their analysis to the inner \& outer functions $(f_i, g_i)$, $\forall i \in \mc{S}$.

\begin{lemma}\label{lem:lips} 
Under Assumptions~\ref{assu:f} and \ref{assu:bound:var}, for all $\x_1$, $\x_2$:
\begin{subequations}\label{eqn:newlips}
\begin{align}
\|\nabla f(\m{x}_1)-\nabla f(\m{x}_2)\|
&	\leq  L_f\|\m{x}_1-\m{x}_2\|, \label{eqn:newlips:b}\\
	\|\m{y}^*(\m{x}_1)-\m{y}^*(\m{x}_2)\|&\leq  L_{\m{y}}\|\m{x}_1-\m{x}_2\|, \label{eqn:newlips:c}\\
\|\nabla \m{y}^*(\m{x}_1)-\nabla \m{y}^*(\m{x}_2)\|&\leq  L_{\m{yx}}\|\m{x}_1-\m{x}_2\|. \label{eqn:newlips:d}
\end{align}
Also, for all $i \in\mc{S}$, $\nu \in \{0,\ldots, \tau_i-1\}$,  $\x_1$, $\x_2$, and $\m{y}$, we have:
\begin{align}
\|\bar{\nabla} f_i(\m{x}_1,\m{y}) - \bar{\nabla} f_i (\m{x}_1,\m{y}^*(\m{x}_1)) \|
&\leq  M_f\|\m{y}^*(\m{x}_1)-\m{y}\|, \label{eqn:newlips:a}\\
\|\bar{\nabla} f_i(\m{x}_2,\m{y}) - \bar{\nabla} f_i (\m{x}_1,\m{y}) \|
&\leq  M_f\|\m{x}_2-\m{x}_1\|, \label{eqn:newlips:aa}\\
\mb{E}\left[\|\bar{\m{h}}_i (\m{x}_{i,\nu}, \m{y})- \m{h}_i(\m{x}_{i,\nu}, \m{y})\|^2\right] &\leq \tilde\sigma_f^2, \label{eqn:newlips:e}
\\
\mb{E}\left[\|\m{h}_i(\m{x}_{i,\nu}, \m{y}^+)\|^2 |{\mc{F}}_{i,\nu-1} \right] &\leq \tilde{D}_f^2.~~~ \label{eqn:newlips:f}
\end{align}
\end{subequations}
Here, 
\begin{equation}\label{eqn:lip:condi}
\begin{aligned}
L_{\m{y}} &:=\frac{\ell_{g,1}}{\mu_g}=\mc{O}(\kappa_g),\\
L_{\m{yx}}&:=\frac{\ell_{g,2}+\ell_{g,2}L_{\m{y}}}{\mu_g} + \frac{\ell_{g,1}}{\mu_g^2} \Big(\ell_{g,2}+\ell_{g,2}L_{\m{y}}\Big)=\mc{O}(\kappa_g^3),\\
M_f &:=\ell_{f,1} + \frac{\ell_{g,1}\ell_{f,1}}{\mu_g} + \frac{\ell_{f,0}}{\mu_g}\left(\ell_{g,2}+\frac{\ell_{g,1}{\ell_{g,2}}}{\mu_g}\right)=\mc{O}(\kappa_g^2),~~~
\\
L_f&:=\ell_{f,1} + \frac{\ell_{g,1}(\ell_{f,1}+M_f)}{\mu_g} + \frac{\ell_{f,0}}{\mu_g}\left(\ell_{g,2}+\frac{\ell_{g,1}{\ell_{g,2}}}{\mu_g}\right)=\mc{O}(\kappa_g^3),\\
\tilde{\sigma}_f^2&:=\sigma_f^2 + \frac{3}{\mu_g^2}\Big((\sigma_f^2+\ell_{f,0}^2)(\sigma_{g,2}^2+2\ell_{g,1}^2)+\sigma_f^2\ell_{g,1}^2\Big),\\
\tilde{D}_f^2&:= \left(\ell_{f,0} + \frac{\ell_{g,1}}{\mu_g}\ell_{f,1} + \ell_{g,1}\ell_{f,1}\frac{1}{\mu_g}\right)^2 + \tilde\sigma_f^2=\mc{O}(\kappa_g^2),
\end{aligned}
\end{equation}
where the other constants are provided in Assumptions~\ref{assu:f} and \ref{assu:bound:var}.
\end{lemma}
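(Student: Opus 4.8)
The plan is to derive every bound in \eqref{eqn:newlips} from three ingredients: the implicit-function expressions for $\nabla \m{y}^*$ and $\nabla f$ established in Lemma~\ref{lem:IFT}, the spectral bound $\|[\nabla^2_{\m{y}} g(\m{x},\m{y})]^{-1}\| \le 1/\mu_g$ furnished by $\mu_g$-strong convexity (Assumption~\ref{assu:f}), and the observation that, by linearity of the finite sum, the global quantities $\nabla g = \tfrac1m\sum_i \nabla g_i$, $\nabla^2_{\m{y}} g$, and $\nabla^2_{\m{xy}} g$ inherit the per-client Lipschitz constants $\ell_{g,1},\ell_{g,2}$ of Assumption~\ref{assu:f}. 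Consequently the single-function arguments of \citet{ghadimi2018approximation} and \citet{chen2021closing} apply verbatim to the averaged objective, and the only additional work is tracking constants through the products.

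First I would establish \eqref{eqn:newlips:c}. Subtracting the two optimality conditions $\nabla_{\m{y}} g(\m{x}_j, \m{y}^*(\m{x}_j))=0$ and invoking $\mu_g$-strong convexity gives $\mu_g\|\m{y}^*(\m{x}_1)-\m{y}^*(\m{x}_2)\| \le \|\nabla_{\m{y}} g(\m{x}_1,\m{y}^*(\m{x}_2)) - \nabla_{\m{y}} g(\m{x}_2,\m{y}^*(\m{x}_2))\| \le \ell_{g,1}\|\m{x}_1-\m{x}_2\|$, yielding $L_{\m{y}}=\kappa_g$. Next, \eqref{eqn:newlips:d} and \eqref{eqn:newlips:b} follow by bounding the Lipschitz constant of the product in the IFT formula term by term: each factor ($\nabla_{\m{x}} f$, $\nabla^2_{\m{xy}} g$, $[\nabla^2_{\m{y}} g]^{-1}$, $\nabla_{\m{y}} f$) is Lipschitz in $\m{x}$ both directly and through the $L_{\m{y}}$-Lipschitz map $\m{y}^*$, and the inverse-Hessian factor is handled by the perturbation identity $\|A^{-1}-B^{-1}\|\le \|A^{-1}\|\,\|B^{-1}\|\,\|A-B\|$ together with the $1/\mu_g$ spectral bound; collecting the resulting $1/\mu_g$ and $1/\mu_g^2$ terms reproduces the stated $L_{\m{yx}}=\mc{O}(\kappa_g^3)$ and $L_f=\mc{O}(\kappa_g^3)$. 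The surrogate bounds \eqref{eqn:newlips:a} and \eqref{eqn:newlips:aa} are the easier ``frozen-$\m{y}$'' analogues of the same computation: since $\bar{\nabla} f_i(\m{x},\m{y})$ evaluates the IFT expression at an arbitrary $\m{y}$ rather than at $\m{y}^*(\m{x})$, one avoids the chain through $\m{y}^*$ and obtains the smaller $M_f=\mc{O}(\kappa_g^2)$.

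For the stochastic estimates I would split into the variance bound \eqref{eqn:newlips:e} and the second-moment bound \eqref{eqn:newlips:f}. The local estimate $\m{h}_i$ is the sum of the direct part $\nabla_{\m{x}} f_i(\cdot;\xi)$ and the indirect part $-\nabla^2_{\m{xy}} g_i(\cdot;\zeta)\,\m{p}_{N'}$, where $\m{p}_{N'}$ is built from the random-length Neumann product of Lemma~\ref{lem:neum:bias}. Using the conditional independence of the fresh samples $\{\xi,\zeta,\xi_{i,0},\zeta_{i,n}\}$, I would peel off one stochastic factor at a time: add and subtract conditional means, bound each deviation by $\sigma_f^2$, $\sigma_{g,1}^2$, or $\sigma_{g,2}^2$, and control the surviving magnitudes by $\ell_{f,0}$, $\ell_{g,1}$, and the $1/\mu_g$ norm bound on $\wh{\m{H}}_{\m{y}}$ from Lemma~\ref{lem:neum:bias}; the three cross-product contributions assemble into the $\tfrac{3}{\mu_g^2}$ terms defining $\tilde\sigma_f^2$. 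Finally \eqref{eqn:newlips:f} follows from $\|\m{h}_i\|^2 \le 2\|\bar{\m{h}}_i\|^2 + 2\|\m{h}_i-\bar{\m{h}}_i\|^2$, bounding the first term by the deterministic magnitude of the hypergradient and the second by $\tilde\sigma_f^2$. The main obstacle is \eqref{eqn:newlips:e}: the indirect estimator is a \emph{nonlinear} triple product of independent random quantities whose middle factor is itself a product of a random number of random matrices, so the variance does not decompose additively and one must carefully track how the $1/\mu_g^2$ inverse-Hessian scaling multiplies each propagated variance without letting the bound blow up with $N$.
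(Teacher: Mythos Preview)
Your proposal is correct and matches the paper's approach: the paper itself does not give a self-contained proof but simply notes that the lemma extends \citep[Lemma~2.2]{ghadimi2018approximation} and \citep[Lemma~2]{chen2021closing} to the finite-sum setting, with proofs following by applying their analysis to each client pair $(f_i,g_i)$---which is precisely the route you sketch. If anything, your outline is more detailed than the paper's own treatment; the one minor tightening is that for \eqref{eqn:newlips:f} you can use the exact variance decomposition $\mb{E}[\|\m{h}_i\|^2\mid\mc{F}_{i,\nu-1}]=\|\bar{\m{h}}_i\|^2+\mb{E}[\|\m{h}_i-\bar{\m{h}}_i\|^2\mid\mc{F}_{i,\nu-1}]$ rather than the pointwise factor-of-two bound, which is what produces the stated $\tilde D_f^2$ without extra constants.
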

%

%
\subsection{Descent of Outer Objective}

The following lemma characterizes the descent of the outer objective. 

\begin{lemma}[Descent Lemma]\label{lem:dec}
Suppose Assumptions~\ref{assu:f} and \ref{assu:bound:var} hold. Further, assume $\tau_i \geq 1$ and $\alpha_i=\alpha/\tau_i, \forall i \in \mathcal{S}$ for some positive constant $\alpha$. Then, \fedout guarantees:
\begin{equation}\label{eqn0:lem:dec}
    \begin{aligned}
\mb{E}\left[f(\m{x}^{+})\right] - \mb{E}\left[f(\m{x})\right] \leq
 &- \frac{\alpha}{2} \mb{E}\left[\| \nabla f(\m{x})\|^2\right] +  \frac{\alpha^2 L_f}{2} \tilde{\sigma}_f^2 \\
  &- \frac{\alpha}{2} \left(1- \alpha L_f\right) \mb{E}\left[\left\|  \frac{1}{m}\sum\limits_{i=1}^{m} \frac{1}{\tau_i} \sum\limits_{\nu=0}^{\tau_i-1} \bar{\m{h}}_i(\m{x}_{i,\nu}, \m{y}^+)  \right\|^2\right]\\
   & +\frac{3\alpha }{2}  \left( b^2 +  M_f^2 \mb{E}\left[\|\m{y}^+-\m{y}^*(\m{x})\|^2\right] + \frac{ M_f^2 }{m} \sum\limits_{i=1}^{m}   \frac{1}{\tau_i} \sum\limits_{\nu=0}^{\tau_i-1}  \mb{E}\left[\|\m{x}_{i,\nu} -\m{x}\|^2\right]\right). 
\end{aligned}
\end{equation}
\end{lemma}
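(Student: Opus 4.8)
The plan is to start from the $L_f$-smoothness of the outer objective established in Lemma~\ref{lem:lips}, eq.~\eqref{eqn:newlips:b}, and quantify one outer step of \fedout. Writing the aggregated update from \eqref{eqn1:lem:dec} as $\m{x}^{+}=\m{x}-\alpha\,\m{d}$ with $\m{d}:=\tfrac1m\sum_{i=1}^m\tfrac{1}{\tau_i}\sum_{\nu=0}^{\tau_i-1}\m{h}_i(\m{x}_{i,\nu},\m{y}^+)$, smoothness gives $f(\m{x}^+)\le f(\m{x})-\alpha\inner{\nabla f(\m{x})}{\m{d}}+\tfrac{\alpha^2 L_f}{2}\|\m{d}\|^2$. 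I would then take total expectation. The two quantities to control are the linear term $\mb{E}[\inner{\nabla f(\m{x})}{\m{d}}]$ and the quadratic term $\mb{E}[\|\m{d}\|^2]$; the conditional-mean direction $\bar{\m{d}}:=\tfrac1m\sum_i\tfrac{1}{\tau_i}\sum_\nu\bar{\m{h}}_i(\m{x}_{i,\nu},\m{y}^+)$ from \eqref{eqn:expe:h} serves as the bridge between them.

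For the linear term, since $\nabla f(\m{x})$ and each local iterate $\m{x}_{i,\nu}$ are $\mc{F}_{i,\nu-1}$-measurable, the tower property lets me replace $\m{h}_i(\m{x}_{i,\nu},\m{y}^+)$ by its conditional mean $\bar{\m{h}}_i(\m{x}_{i,\nu},\m{y}^+)$ term by term, so $\mb{E}[\inner{\nabla f(\m{x})}{\m{d}}]=\mb{E}[\inner{\nabla f(\m{x})}{\bar{\m{d}}}]$. I would then apply the polarization identity $-\inner{a}{b}=\tfrac12\|a-b\|^2-\tfrac12\|a\|^2-\tfrac12\|b\|^2$ with $a=\nabla f(\m{x})$ and $b=\bar{\m{d}}$, which produces the leading descent term $-\tfrac{\alpha}{2}\mb{E}[\|\nabla f(\m{x})\|^2]$, the negative term $-\tfrac{\alpha}{2}\mb{E}[\|\bar{\m{d}}\|^2]$, and an error term $\tfrac{\alpha}{2}\mb{E}[\|\nabla f(\m{x})-\bar{\m{d}}\|^2]$.

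For the quadratic term, the goal is $\mb{E}[\|\m{d}\|^2]\le \mb{E}[\|\bar{\m{d}}\|^2]+\tilde{\sigma}_f^2$, since combining $-\tfrac{\alpha}{2}\|\bar{\m{d}}\|^2$ with $\tfrac{\alpha^2 L_f}{2}\|\m{d}\|^2$ then yields exactly the $-\tfrac{\alpha}{2}(1-\alpha L_f)\mb{E}[\|\bar{\m{d}}\|^2]$ contribution and the variance penalty $\tfrac{\alpha^2 L_f}{2}\tilde{\sigma}_f^2$. I would expand $\mb{E}[\|\m{d}\|^2]=\mb{E}[\|\bar{\m{d}}\|^2]+2\mb{E}[\inner{\bar{\m{d}}}{\m{d}-\bar{\m{d}}}]+\mb{E}[\|\m{d}-\bar{\m{d}}\|^2]$. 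The pure-noise part is clean: writing $\m{d}-\bar{\m{d}}=\tfrac1m\sum_i\tfrac1{\tau_i}\sum_\nu(\m{h}_i-\bar{\m{h}}_i)(\m{x}_{i,\nu},\m{y}^+)$, all of its cross terms vanish (inter-client by independence of the sampling noise, intra-client by the martingale-difference property $\mb{E}[\m{h}_i-\bar{\m{h}}_i\mid\mc{F}_{i,\nu-1}]=0$), so the per-step variance bound \eqref{eqn:newlips:e} gives $\mb{E}[\|\m{d}-\bar{\m{d}}\|^2]\le\tilde{\sigma}_f^2$. The step I expect to be the main obstacle is showing the cross term $\mb{E}[\inner{\bar{\m{d}}}{\m{d}-\bar{\m{d}}}]$ is nonpositive (ideally zero): the inter-client and ``future-noise'' contributions vanish by the same measurability and independence arguments, but the within-client, past-noise correlations—where $\bar{\m{h}}_i(\m{x}_{i,\nu},\m{y}^+)$ depends on the earlier SVRG noise at steps $\nu'<\nu$—are delicate and require exploiting the specific SVRG update structure.

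Finally, to bound the error term $\mb{E}[\|\nabla f(\m{x})-\bar{\m{d}}\|^2]$, I would note $\nabla f(\m{x})=\tfrac1m\sum_i\tfrac{1}{\tau_i}\sum_\nu\nabla f_i(\m{x},\m{y}^*(\m{x}))$ (the summand is constant in $\nu$) and apply Jensen's inequality to pass the squared norm inside the double average. For each $(i,\nu)$ I would split $\nabla f_i(\m{x},\m{y}^*(\m{x}))-\bar{\m{h}}_i(\m{x}_{i,\nu},\m{y}^+)$ into three pieces via the triangle inequality and $\|a+b+c\|^2\le 3(\|a\|^2+\|b\|^2+\|c\|^2)$: the inner-variable error $\bar{\nabla} f_i(\m{x},\m{y}^*(\m{x}))-\bar{\nabla} f_i(\m{x},\m{y}^+)$, bounded by $M_f\|\m{y}^*(\m{x})-\m{y}^+\|$ via \eqref{eqn:newlips:a}; the client-drift error $\bar{\nabla} f_i(\m{x},\m{y}^+)-\bar{\nabla} f_i(\m{x}_{i,\nu},\m{y}^+)$, bounded by $M_f\|\m{x}-\m{x}_{i,\nu}\|$ via \eqref{eqn:newlips:aa}; and the hypergradient-estimation bias $\bar{\nabla} f_i(\m{x}_{i,\nu},\m{y}^+)-\bar{\m{h}}_i(\m{x}_{i,\nu},\m{y}^+)$, bounded by $b$ using Lemma~\ref{lem:neum:bias} evaluated at the random point $(\m{x}_{i,\nu},\m{y}^+)$. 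Averaging over $i,\nu$, taking expectations, and multiplying by $\tfrac{\alpha}{2}$ reproduces the final $\tfrac{3\alpha}{2}\big(b^2+M_f^2\mb{E}[\|\m{y}^+-\m{y}^*(\m{x})\|^2]+\tfrac{M_f^2}{m}\sum_i\tfrac{1}{\tau_i}\sum_\nu\mb{E}[\|\m{x}_{i,\nu}-\m{x}\|^2]\big)$ term, completing the bound.
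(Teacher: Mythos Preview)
Your approach is essentially the same as the paper's: the same smoothness start, the same tower-property-plus-polarization treatment of the linear term, the same three-way split of $\|\bar{\m{d}}-\nabla f(\m{x})\|^2$ via Lemmas~\ref{lem:neum:bias} and~\ref{lem:lips}, and the same target bound $\mb{E}[\|\m{d}\|^2]\le\mb{E}[\|\bar{\m{d}}\|^2]+\tilde\sigma_f^2$ for the quadratic term. On the cross term $\mb{E}[\langle\bar{\m{d}},\m{d}-\bar{\m{d}}\rangle]$ you flag as the main obstacle, the paper does not isolate or argue it at all---it simply writes the bound in one line citing Lemma~\ref{lem:rand:zer} together with the variance estimate~\eqref{eqn:newlips:e}, so the within-client past-noise correlation you worry about is not explicitly addressed there either.
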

\begin{proof}
Using \eqref{eqn1:lem:dec} and the Lipschitz property of $\nabla f$ in Lemma~\ref{lem:lips}, we have
\begin{equation}
    \begin{aligned}
\mb{E}\left[f(\m{x}^{+})\right] - \mb{E} \left[f(\m{x}) \right]
    &\leq   \mb{E} \left[\langle \m{x}^{+} - \m{x}, \nabla f(\m{x}) \rangle\right] + \frac{L_f}{2} \mb{E} \left[\Vert \m{x}^{+}-\m{x} \Vert^2 \right]\\
    & = - \mb{E}\left[ \left\langle \frac{1}{m}\sum\limits_{i=1}^{m} \alpha_i \sum\limits_{\nu=0}^{\tau_i-1} \m{h}_i(\m{x}_{i,\nu}, \m{y}^+) , \nabla f(\m{x}) \right\rangle \right]\\
    &+ \frac{L_f}{2}  \mb{E}\left[ \left\| \frac{1}{m}\sum\limits_{i=1}^{m} \alpha_i \sum\limits_{\nu=0}^{\tau_i-1} \m{h}_i(\m{x}_{i,\nu}, \m{y}^+)\right\|^2\right].
    \end{aligned}
\label{eqn2:lem:dec}
\end{equation}

In the following, we bound each term on the right hand side (RHS) of~\eqref{eqn2:lem:dec}. For the first term, we have
\begin{equation}
\begin{aligned}
 - \mb{E}\left[ \left\langle \frac{1}{m}\sum\limits_{i=1}^{m} \alpha_i \sum\limits_{\nu=0}^{\tau_i-1}  \m{h}_i(\m{x}_{i,\nu}, \m{y}^+) , \nabla f(\m{x}) \right\rangle \right] =& - \mb{E}\left[ \frac{1}{m}\sum\limits_{i=1}^{m} \alpha_i \sum\limits_{\nu=0}^{\tau_i-1} \mb{E}\left[ \left\langle  \m{h}_i(\m{x}_{i,\nu}, \m{y}^+) , \nabla f(\m{x}) \right\rangle \mid {\cal F}_{i,\nu-1}  \right] \right] \\
 =& - \mb{E}\left[ \left\langle \frac{1}{m}\sum\limits_{i=1}^{m} \alpha_i \sum\limits_{\nu=0}^{\tau_i-1} \bar{\m{h}}_i(\m{x}_{i,\nu}, \m{y}^+) , \nabla f(\m{x}) \right\rangle  \right] \\
 =& - \frac{\alpha}{2} \mb{E}\left[\left\|  \frac{1}{m}\sum\limits_{i=1}^{m} \frac{1}{\tau_i} \sum\limits_{\nu=0}^{\tau_i-1} \bar{\m{h}}_i(\m{x}_{i,\nu}, \m{y}^+)  \right\|^2\right]- \frac{\alpha}{2} \mb{E}\left[  \left\| \nabla f(\m{x})\right\|^2\right]\\
&+ \frac{\alpha}{2} \mb{E}\left[ \left\| \frac{1}{m}\sum\limits_{i=1}^{m} \frac{1}{\tau_i} \sum\limits_{\nu=0}^{\tau_i-1} \bar{\m{h}}_i(\m{x}_{i,\nu}, \m{y}^+) -\nabla f(\m{x})\right\|^2\right],
\end{aligned}
\label{eqn5:lem:dec}
\end{equation} 
where the first equality follows from the law of total expectation; the second equality uses the fact that $\bar{\m{h}}_i (\m{x}_{i,\nu}, \m{y}^+)=\mb{E}\left[\m{h}_i(\m{x}_{i,\nu}, \m{y}^+)|{\cal F}_{i,\nu-1}\right]$; and the last equality is obtained from our assumption $\alpha_i=\alpha/\tau_i, \forall i \in \mathcal{S}$.

Next, we bound the last term in~\eqref{eqn5:lem:dec}. Note that
\begin{equation*}
 \begin{aligned}
\quad \left\|\frac{1}{m}\sum\limits_{i=1}^{m} \frac{1}{\tau_i}  \sum\limits_{\nu=0}^{\tau_i-1} \bar{\m{h}}_i(\m{x}_{i,\nu},\m{y}^+) -\nabla f(\m{x}) \right\|^2&=\Big\|\frac{1}{m}\sum\limits_{i=1}^{m} \frac{1}{\tau_i} \sum\limits_{\nu=0}^{\tau_i-1} ( \bar{\m{h}}_i(\m{x}_{i,\nu}, \m{y}^+)-  \bar{\nabla} f_i(\m{x}, \m{y}^+)) \\
&+ \frac{1}{m} \sum\limits_{i=1}^{m} \frac{1}{\tau_i}  \sum\limits_{\nu=0}^{\tau_i-1} \bar{\nabla} f_i(\m{x}, \m{y}^+) -\nabla f(\m{x}) \Big\|^2\\
  &\leq 3 \left\|\frac{1}{m}\sum\limits_{i=1}^{m} \frac{1}{\tau_i}  \sum\limits_{\nu=0}^{\tau_i-1} (\bar{\m{h}}_i(\m{x}_{i,\nu},\m{y}^+)-\bar{\nabla}f _i(\m{x}_{i,\nu},\m{y}^+))\right\|^2  \\
&+3\left\|\frac{1}{m}\sum\limits_{i=1}^{m} \frac{1}{\tau_i}  \sum\limits_{\nu=0}^{\tau_i-1}(\bar{\nabla}f _i(\m{x}_{i,\nu},\m{y}^+) - \bar{\nabla}f _i(\m{x},\m{y}^+))\right\|^2\\
     &+3\left\| \bar{\nabla} f(\m{x}, \m{y}^+) - \nabla f(\m{x}) \right\|^2,
    \end{aligned}
\end{equation*}   
where the inequality uses Lemma~\ref{lem:Jens}.

Hence,
\begin{equation}
 \begin{aligned}
& \quad  \mb{E}\left[\|\frac{1}{m}\sum\limits_{i=1}^{m} \frac{1}{\tau_i}  \sum\limits_{\nu=0}^{\tau_i-1} \bar{\m{h}}_i(\m{x}_{i,\nu},\m{y}^+) -\nabla f(\m{x}) \|^2 \right]\\
& \leq 3 b^2 + \frac{3 M_f^2 }{m} \sum\limits_{i=1}^{m}   \frac{1}{\tau_i} \sum\limits_{\nu=0}^{\tau_i-1}  \mb{E}\left[\|\m{x}_{i,\nu} -\m{x}\|^2\right] +  3 M_f^2 \mb{E}\left[\|\m{y}^+-\m{y}^*(\m{x})\|^2\right], 
    \end{aligned}
\label{eqn6+:lem:dec}
\end{equation}   
where the inequality uses Lemmas  \ref{lem:neum:bias} and \ref{lem:lips}. 

Substituting \eqref{eqn6+:lem:dec} into \eqref{eqn5:lem:dec} yields 
\begin{equation}
\begin{aligned}
&- \mb{E}\left[ \left\langle \frac{1}{m}\sum\limits_{i=1}^{m} \alpha_i \sum\limits_{\nu=0}^{\tau_i-1}  \m{h}_i(\m{x}_{i,\nu}, \m{y}^+) , \nabla f(\m{x}) \right\rangle \right] \\
&\leq - \frac{\alpha}{2} \mb{E}\left[\|  \frac{1}{m}\sum\limits_{i=1}^{m} \frac{1}{\tau_i} \sum\limits_{\nu=0}^{\tau_i-1} \bar{\m{h}}_i(\m{x}_{i,\nu}, \m{y}^+)  \|^2\right]- \frac{\alpha}{2} \mb{E}\left[\| \nabla f(\m{x})\|^2\right]\\
 & +\frac{3\alpha }{2}  \Big( b^2 +  M_f^2 \mb{E}\left[\|\m{y}^+-\m{y}^*(\m{x})\|^2\right] + \frac{ M_f^2 }{m} \sum\limits_{i=1}^{m}   \frac{1}{\tau_i} \sum\limits_{\nu=0}^{\tau_i-1}  \mb{E}\left[\|\m{x}_{i,\nu} -\m{x}\|^2\right]\Big).
\end{aligned}
\label{eqn6-:lem:dec}
\end{equation}   

Next, we bound the second term on the RHS of~\eqref{eqn2:lem:dec}. Observe that 
\begin{equation}
 \begin{aligned}
&  \mb{E}\left[ \left\| \frac{1}{m}\sum\limits_{i=1}^{m} \alpha_i \sum\limits_{\nu=0}^{\tau_i-1} \m{h}_i(\m{x}_{i,\nu}, \m{y}^+)\right\|^2 \right]\\
&= \alpha^2 \mb{E}\left[  \left\|\frac{1}{m}\sum\limits_{i=1}^{m} \frac{1}{\tau_i} \sum\limits_{\nu=0}^{\tau_i-1} \left( \m{h}_i(\m{x}_{i,\nu}, \m{y}^+)-  \bar{\m{h}}_i(\m{x}_{i,\nu}, \m{y}^+) + \bar{\m{h}}_i(\m{x}_{i,\nu}, \m{y}^+) \right) \right\| \right]\\
 &\leq \alpha^2 \mb{E}\left[\norm[\big]{\frac{1}{m}\sum\limits_{i=1}^{m} \frac{1}{\tau_i}\sum\limits_{\nu=0}^{\tau_i-1}  \bar{\m{h}}_i(\m{x}_{i,\nu}, \m{y}^+)}^2\right]+  \alpha^2 \tilde{\sigma}_f^2,
    \end{aligned}
\label{eqn6:lem:dec}
\end{equation}
where the inequality follows from Lemmas~\ref{lem:rand:zer} and \ref{lem:lips}. 

Plugging \eqref{eqn6:lem:dec} and \eqref{eqn6-:lem:dec} into \eqref{eqn2:lem:dec} completes the proof.
\end{proof}

\subsection{Error of \fedinn}

The following lemma establishes the progress of \fedinn. It should be mentioned that the assumption on $\beta_i, \forall i \in \mathcal{S}$ is identical to the one listed in~\citep[Theorem 4]{mitra2021linear}.
\begin{lemma}[Error of \fedinn]\label{thm:fedin} 
Suppose Assumptions~\ref{assu:f} and \ref{assu:bound:var} hold.  Further, assume 
\begin{equation*}
\tau_i \geq 1,~~~~\alpha_i=\frac{\alpha}{\tau_i},~~~~\beta_i=\frac{\beta}{\tau_i},~~~~\forall i \in \mathcal{S},    
\end{equation*}
where  $0<\beta <  \min\big(1/(6\ell_{g,1}),1\big)$ and $\alpha$ is some positive constant. Then, \fedinn guarantees: 
\begin{subequations}\label{eqn:err:fedin1}
\begin{align}
  \mb{E}\left[\left\|\m{y}^{+}\!\!-\m{y}^\star(\m{x})\right\|^2\right]& \leq  \left(1-  \frac{{\beta} \mu_g}{2} \right)^T \mb{E}\left[\left\| \m{y}- \m{y}^\star(\m{x}) \right\|^2\right] +25 T {\beta}^2 \sigma^2_{g,1},~~~\textnormal{and} \label{eqn:err:fedina1}\\
 \nonumber  
  \mb{E}\left[\left\|\m{y}^{+}-\m{y}^\star(\m{x}^{+})\right\|^2\right] &\leq a_1(\alpha)  \mb{E}\left[ \left\| \frac{1}{m}\sum\limits_{i=1}^{m} \frac{1}{\tau_i} \sum\limits_{\nu=0}^{\tau_i-1} \bar{\m{h}}_i(\m{x}_{i,\nu},\m{y}^+)\right\|^2 \right]\\
 &+ a_2(\alpha) \mb{E}\left[\left\|\m{y}^+\!\!-\m{y}^\star(\m{x})\right\|^2\right] +a_3(\alpha) \tilde{\sigma}_f^2.
 \label{eqn:err:fedinb1}
\end{align}
\end{subequations}
Here,
\begin{equation}\label{eqn:a1-3}
\begin{aligned}
a_1(\alpha) &:=  L_{\m{y}}^2 \alpha^2+ \frac{L_{\m{y}}\alpha}{4M_f} +\frac{L_{\m{yx}} \alpha^2}{2\eta},\\
a_2(\alpha)  &:=  1+ 4M_f L_{\m{y}} \alpha + \frac{\eta L_{\m{yx}}\tilde{D}_f^2 \alpha^2}{2}, \\
a_3(\alpha) &:=  \alpha^2  L_{\m{y}}^2  +\frac{L_{\m{yx}} \alpha^2}{2\eta},
\end{aligned}
\end{equation}
for any $\eta>0$.
\end{lemma}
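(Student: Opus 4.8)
The plan is to prove the two displayed estimates separately. Bound~\eqref{eqn:err:fedina1} is a linear-convergence estimate for the $T$ consecutive \fedinn calls that make up the inner block of Algorithm~\ref{alg:fednest} (throughout that block $\m{x}=\m{x}^k$ is frozen, so the target $\m{y}^\star(\m{x})$ is fixed), while \eqref{eqn:err:fedinb1} transfers the inner error from the old outer iterate $\m{x}$ to the new one $\m{x}^+=\m{x}^{k+1}$ produced by \fedout.

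For \eqref{eqn:err:fedina1} I would first establish a single-round recursion for the federated SVRG step, adapting the deterministic argument of \citet[Theorem~4]{mitra2021linear} to the stochastic oracle. The key is variance reduction: conditioned on the start of a round, $\mb{E}[\m{q}_{i,\nu}]=\nabla_\m{y} g_i(\m{x},\m{y}_{i,\nu})-\nabla_\m{y} g_i(\m{x},\m{y})+\nabla_\m{y} g(\m{x},\m{y})$, so at $\nu=0$ every client's search direction equals the \emph{exact global} gradient $\nabla_\m{y} g(\m{x},\m{y})$, which is what removes client drift to first order. Using $\mu_g$-strong convexity and $\ell_{g,1}$-smoothness of $g$, the stepsize restriction $\beta<\min(1/(6\ell_{g,1}),1)$, and $\beta_i=\beta/\tau_i$ (so that the $\tau_i$ local steps amount to one effective full-gradient step of size $\beta$), I would derive a per-round contraction $\mb{E}[\|\m{y}^{t+1}-\m{y}^\star(\m{x})\|^2]\le(1-\beta\mu_g/2)\,\mb{E}[\|\m{y}^{t}-\m{y}^\star(\m{x})\|^2]+c\,\beta^2\sigma_{g,1}^2$. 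Unrolling over the $T$ rounds produces the factor $(1-\beta\mu_g/2)^T$ on the initial error, and bounding the geometric sum $\sum_{j=0}^{T-1}(1-\beta\mu_g/2)^{j}\le T$ of the noise contributions crudely (each contraction factor replaced by $1$) yields the additive $25\,T\beta^2\sigma_{g,1}^2$.

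For \eqref{eqn:err:fedinb1}, write $\m{x}^+=\m{x}+\m{\Delta}$ with $\m{\Delta}=-\tfrac1m\sum_i\alpha_i\sum_\nu\m{h}_i(\m{x}_{i,\nu},\m{y}^+)$, so that $\mb{E}[\m{\Delta}\mid\cdot]=-\alpha\,\tfrac1m\sum_i\tfrac1{\tau_i}\sum_\nu\bar{\m{h}}_i(\m{x}_{i,\nu},\m{y}^+)$, and decompose
\[
\m{y}^+-\m{y}^\star(\m{x}^+)=\underbrace{\big(\m{y}^+-\m{y}^\star(\m{x})\big)}_{=:A}+\underbrace{\big(\m{y}^\star(\m{x})-\m{y}^\star(\m{x}^+)\big)}_{=:B},
\]
expanding $\|A+B\|^2=\|A\|^2+2\langle A,B\rangle+\|B\|^2$. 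A first-order Taylor expansion of the implicit map with the Lipschitz-Jacobian bound \eqref{eqn:newlips:d} gives $\m{y}^\star(\m{x}^+)-\m{y}^\star(\m{x})=\nabla\m{y}^\star(\m{x})^\top\m{\Delta}+\m{R}$ with $\|\m{R}\|\le\tfrac{L_{\m{yx}}}{2}\|\m{\Delta}\|^2$. Conditioning on $(\m{x},\m{y}^+)$ (so $A$ is deterministic and $\mb{E}[\m{\Delta}\mid\cdot]$ is as above, using that the \fedout noise is independent of the inner iterate $\m{y}^+$), I would bound the first-order part of $2\langle A,B\rangle$ by Young's inequality with weight $M_f$, yielding $4M_fL_\m{y}\alpha\|A\|^2$ (into $a_2$) and $\tfrac{L_\m{y}\alpha}{4M_f}\|\tfrac1m\sum_i\tfrac1{\tau_i}\sum_\nu\bar{\m{h}}_i\|^2$ (into $a_1$), where $\|\nabla\m{y}^\star(\m{x})\|\le L_\m{y}$ by \eqref{eqn:newlips:c}; the remainder $\langle A,\m{R}\rangle$ via Young with a free weight $\eta$ together with $\mb{E}[\|\m{\Delta}\|^2\mid\cdot]\le\alpha^2\tilde D_f^2$ from \eqref{eqn:newlips:f}, yielding $\tfrac{\eta L_{\m{yx}}\tilde D_f^2\alpha^2}{2}$ (into $a_2$) and $\tfrac{L_{\m{yx}}\alpha^2}{2\eta}$ (into $a_1,a_3$). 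Finally $\|B\|^2\le L_\m{y}^2\|\m{\Delta}\|^2$ via \eqref{eqn:newlips:c}, combined with the variance decomposition $\mb{E}[\|\m{\Delta}\|^2\mid\cdot]\le\alpha^2\,\mb{E}\|\tfrac1m\sum_i\tfrac1{\tau_i}\sum_\nu\bar{\m{h}}_i\|^2+\alpha^2\tilde\sigma_f^2$ (as used in \eqref{eqn6:lem:dec}), contributes the $L_\m{y}^2\alpha^2$ terms to $a_1$ and $a_3$. Collecting all terms reproduces the constants \eqref{eqn:a1-3}.

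The main obstacle is the first bound: proving the single-round contraction of federated SVRG in the stochastic, objective- and systems-heterogeneous regime. One must simultaneously (i) control the within-round drift $\|\m{y}_{i,\nu}-\m{y}\|$ across clients with \emph{different} local-step counts $\tau_i$, showing that the SVRG correction keeps each local trajectory aligned with the global gradient, and (ii) track how the stochastic-gradient variance accumulates through the $\tau_i$ local steps and the server average---this is exactly where the $\sigma_{g,1}^2$ dependence and the absolute constant arise. By contrast, \eqref{eqn:err:fedinb1} is essentially deterministic bookkeeping once the Taylor expansion and the conditional-expectation structure are in place.
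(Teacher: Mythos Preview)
Your proposal is correct and matches the paper's proof essentially line-for-line: the paper also invokes \citet[Theorem~4]{mitra2021linear} for the per-round contraction and unrolls to get \eqref{eqn:err:fedina1}, and for \eqref{eqn:err:fedinb1} uses exactly your $A+B$ expansion, the Taylor splitting of $\m{y}^\star(\m{x}^+)-\m{y}^\star(\m{x})$, Young's inequality with parameter $\gamma=M_fL_\m{y}\alpha$ on the linear part, and the variance decomposition of $\|\m{\Delta}\|^2$ on the $\|B\|^2$ term. The only cosmetic difference is in handling the remainder $\langle A,\m{R}\rangle$: rather than a direct Young split (which would produce $\|\m{R}\|^2\propto\|\m{\Delta}\|^4$), the paper applies the scalar AM--GM bound $\|A\|\le\tfrac{\eta}{2}\|A\|^2+\tfrac{1}{2\eta}$ to $\tfrac{L_{\m{yx}}}{2}\|A\|\,\|\m{\Delta}\|^2$ and then uses the conditional bound $\mb{E}\big[\|A\|^2\|\m{\Delta}\|^2\big]\le\alpha^2\tilde D_f^2\,\mb{E}[\|A\|^2]$ together with the variance decomposition of $\mb{E}[\|\m{\Delta}\|^2]$---this is what produces the three contributions you list with the exact constants in \eqref{eqn:a1-3}.
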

\begin{proof}
Note that 
\begin{align}\label{eqn-1:dec:fedin}
\nonumber 
 \mb{E}\left[\|\m{y}^{+}-\m{y}^*(\m{x}^{+})\|^2\right] =& \mb{E}\left[\|\m{y}^{+}-\m{y}^*(\m{x})\|^2\right] + \mb{E}\left[\|\m{y}^*(\m{x}^{+})-\m{y}^*(\m{x})\|^2\right]\\
+&2\mb{E}\left[\langle \m{y}^{+}-\m{y}^*(\m{x}), \m{y}^*(\m{x})-\m{y}^*(\m{x}^{+})\rangle\right].
\end{align}
Next, we upper bound each term on the RHS of~\eqref{eqn-1:dec:fedin}.
\\
\textbf{Bounding the first term in \eqref{eqn-1:dec:fedin}}:\\
From \citep[Theorem 4]{mitra2021linear}, for all $t \in \{0,\ldots,T-1\}$, we obtain
\begin{equation*}
    \mb{E}[{\Vert\m{y}^{t+1}- \m{y}^\star(\m{x})\Vert}^2] \leq \left(1-  \frac{{\beta} \mu_g}{2} \right) \mb{E}[{\Vert \m{y}^{t}- \m{y}^\star(\m{x})\Vert}^2] +25 {\beta}^2 \sigma^2_{g,1},
\end{equation*}
which together with our setting $\m{y}^+= \m{y}^{T}$ implies
\begin{equation}\label{eqn0:dec:fedin}
\mb{E}[\|\m{y}^{+}-\m{y}^*(\m{x})\|^2] \leq \left(1-  \frac{{\beta} \mu_g}{2} \right)^T \mb{E}[{\Vert \m{y}- \m{y}(\m{x}^*) \Vert}^2] +25 T {\beta}^2 \sigma^2_{g,1}.
\end{equation}
\textbf{Bounding the second term in \eqref{eqn-1:dec:fedin}}:\\
By similar steps as in \eqref{eqn6:lem:dec}, we have
\begin{equation}
\begin{aligned}\label{eqn1:dec:fedin}
 \mb{E}\left[\|\m{y}^*(\m{x}^{+})-\m{y}^*(\m{x})\|^2\right] &\leq  L_{\m{y}}^2 \mb{E}\left[\left\| \frac{1}{m}\sum\limits_{i=1}^{m} \alpha_i \sum\limits_{\nu=0}^{\tau_i-1}  \m{h}_i(\m{x}, \m{y}^+)\right\|^2\right] \\
&\leq   L_{\m{y}}^2 \mb{E}\left[\norm[\bigg]{\frac{1}{m}\sum\limits_{i=1}^{m}\alpha_i \sum\limits_{\nu=0}^{\tau_i-1}  \bar{\m{h}}_i(\m{x}_{i,\nu}, \m{y}^+)}^2\right]+  \alpha^2  L_{\m{y}}^2 \tilde{\sigma}_f^2,
\end{aligned}
\end{equation}
where the inequalities are obtained from Lemmas~\ref{lem:lips} and \ref{lem:rand:zer}.

\textbf{Bounding the third term in \eqref{eqn-1:dec:fedin}}:\\
Observe that 
\begin{equation}
\label{eqn2:dec:fedin}
    \begin{aligned}
\mb{E}\left[\langle \m{y}^{+}-\m{y}^*(\m{x}), \m{y}^*(\m{x})-\m{y}^*(\m{x}^{+})\rangle \right] &= -\mb{E}\left[\langle\m{y}^+\!\!-\m{y}^\star(\m{x}), \nabla \m{y}^\star(\m{x})(\m{x}^+\!\!-\m{x})\rangle \right]
\\
&\quad -\mb{E}\left[\langle\m{y}^+\!\!-\m{y}^\star(\m{x}), \m{y}^\star(\m{x}^+)-\m{y}^\star(\m{x})-\nabla \m{y}^\star(\m{x})(\m{x}^+\!\!-\m{x})\rangle \right].
\end{aligned}
\end{equation}
For the first term on the R.H.S. of the above equality, we have
\begin{equation}
\label{eqn3:dec:fedin}
    \begin{aligned}
 -\mb{E}[\langle\m{y}^+\!\!-\m{y}^\star(\m{x}), \nabla \m{y}^\star(\m{x})(\m{x}^+\!\!-\m{x})\rangle]  
=&-\mb{E}\left[\langle{\m{y}^+\!\!-\m{y}^\star(\m{x}), \frac{1}{m} \nabla \m{y}^\star(\m{x}) \sum\limits_{i=1}^{m} \alpha_i \sum\limits_{\nu=0}^{\tau_i-1} \bar{\m{h}}_i(\m{x}_{i,\nu},\m{y}^+) \rangle}\right]\\
 \leq & \mb{E}\left[\left\|\m{y}^+\!\!-\m{y}^\star(\m{x})\right\|  \left\| \frac{1}{m} \nabla \m{y}^\star(\m{x})\sum\limits_{i=1}^{m} \alpha_i \sum\limits_{\nu=0}^{\tau_i-1} \bar{\m{h}}_i(\m{x}_{i,\nu},\m{y}^+)\right\|\right]\\
\leq & L_{\m{y}}\mb{E}\left[\left\|\m{y}^+\!\!-\m{y}^\star(\m{x})\right\| \left\| \frac{1}{m}\sum\limits_{i=1}^{m} \alpha_i \sum\limits_{\nu=0}^{\tau_i-1} \bar{\m{h}}_i(\m{x}_{i,\nu},\m{y}^+)\right\|\right]\\
\leq &  2 \gamma\mb{E}\left[\left\|\m{y}^+\!\!-\m{y}^\star(\m{x})\right\|^2\right] + \frac{L_{\m{y}}^2 \alpha^2}{8\gamma } \mb{E}\left[\left\| \frac{1}{m}\sum\limits_{i=1}^{m} \frac{1}{\tau_i} \sum\limits_{\nu=0}^{\tau_i-1} \bar{\m{h}}_i(\m{x}_{i,\nu},\m{y}^+)\right\|^2\right],
\end{aligned}
\end{equation}
where the first equality uses the fact that $\bar{\m{h}}_i (\m{x}_{i,\nu}, \m{y}^+)=\mb{E}\left[\m{h}_i(\m{x}_{i,\nu}, \m{y}^+)|{\cal F}_{i,\nu-1}\right]$;  the second inequality follows from Lemma~\ref{lem:lips}; 
and the last inequality is obtained from the Young's inequality such that $ab\leq 2\gamma a^2+\frac{b^2}{8\gamma}$.

Further, using Lemma \ref{lem:lips}, we have 
\begin{equation}
\label{eqn4-:dec:fedin}
\begin{aligned}
-&\mb{E}[\langle\m{y}^+\!\!-\m{y}^\star(\m{x}),  \m{y}^\star(\m{x}^+)-\m{y}^\star(\m{x})-\nabla \m{y}^\star(\m{x})(\m{x}^+\!\!-\m{x})\rangle] \\
& \leq  \mb{E}\left[\left\|\m{y}^+\!\!-\m{y}^\star(\m{x})\| \|\m{y}^\star(\m{x}^+)-\m{y}^\star(\m{x})-\nabla \m{y}^\star(\m{x})(\m{x}^+\!\!-\m{x})\right\|\right] \\
&\leq \frac{L_{\m{yx}}}{2}\mb{E}\left[\left\|\m{y}^+\!\!-\m{y}^\star(\m{x})\right\|\left\|\m{x}^+\!\!-\m{x}\right\|^2\right],
\end{aligned}
\end{equation}
where the inequality follows from  Lemma~\ref{lem:lips}.

Note that $\mc{F}_{0}=\mc{F}_{i,0}$ for all $i \in \mc{S}$. This together with \eqref{eqn1:lem:dec} implies that
\begin{equation}
\begin{aligned}\label{eqn:cond}
&~ \mb{E}\left[ \left\|\m{y}^+\!\!-\m{y}^\star(\m{x})\right\|^2 \left\|\m{x}^+\!\!-\m{x}\right\|^2 \right]\\
 &\leq  \frac{1}{m}  \sum\limits_{i=1}^{m} \frac{\alpha^2}{\tau_i} \sum\limits_{\nu=0}^{\tau_i-1}  \mb{E}\left[\left\|\m{y}^+\!\!-\m{y}^\star(\m{x})\right\|^2   \mb{E}\left[ \left\| \m{h}_i(\m{x}_{i,\nu}, \m{y}^+) \right\|^2 \mid\mc{F}_{i,\tau_i-1} \right] \right]\\
 & \leq \alpha^2 \tilde{D}_f^2 \mb{E}\left[\left\|\m{y}^+\!\!-\m{y}^\star(\m{x})\right\|^2 \right], 
\end{aligned}
\end{equation}
where the last inequality uses Lemma~\ref{lem:lips}.

Note also that for any $\eta>0$, we have $1 \leq \frac{\eta}{2}+\frac{1}{2\eta}$. Combining this inequality with \eqref{eqn4-:dec:fedin} and using \eqref{eqn:cond} give
\begin{equation}
\label{eqn4:dec:fedin}
\begin{aligned}
-&\mb{E}[\langle\m{y}^+\!\!-\m{y}^\star(\m{x}),  \m{y}^\star(\m{x}^+)-\m{y}^\star(\m{x})-\nabla \m{y}^\star(\m{x})(\m{x}^+\!\!-\m{x})\rangle]\\
& \leq \frac{L_{\m{yx}}}{2}\mb{E}\left[\left\|\m{y}^+\!\!-\m{y}^\star(\m{x})\right\|\left\|\m{x}^+\!\!-\m{x}\right\|^2\right] \\
& \leq \frac{\eta L_{\m{yx}}}{4}\mb{E}\left[\left\|\m{y}^+\!\!-\m{y}^\star(\m{x})\right\|\left\|\m{x}^+\!\!-\m{x}\right\|^2\right] + \frac{ L_{\m{yx}}}{4\eta}\mb{E}\left[\left\|\m{x}^+\!\!-\m{x}\right\|^2\right] \\
&\leq \frac{\eta L_{\m{yx}} \tilde{D}_f^2\alpha^2 }{4}\mb{E}\left[\left\|\m{y}^+\!\!-\m{y}^\star(\m{x})\right\|^2 \right] + \frac{L_{\m{yx}} \alpha^2}{4\eta}  \mb{E}\left[\left\|\frac{1}{m}\sum\limits_{i=1}^{m} \frac{1}{\tau_i} \sum\limits_{\nu=0}^{\tau_i-1} \bar{\m{h}}_i(\m{x}_{i,\nu},\m{y}^+)\right\|^2\right] +  \frac{L_{\m{yx}} \alpha^2}{4\eta}  \tilde{\sigma}_f^2,
\end{aligned}
\end{equation}
where  the last inequality uses \eqref{eqn:cond} and Lemma~\ref{lem:neum:bias}.

Let  $\gamma=M_f L_{\m{y}} \alpha$. Plugging \eqref{eqn4:dec:fedin} and \eqref{eqn3:dec:fedin} into \eqref{eqn2:dec:fedin}, we have
\begin{equation}
\label{eqn5:dec:fedin}
    \begin{aligned}
  \mb{E}[\langle \m{y}^{+}-\m{y}^*(\m{x}), \m{y}^*(\m{x})-\m{y}^*(\m{x}^{+})\rangle] 
    &\leq\left( 2\gamma + \frac{\eta L_{\m{yx}}\tilde{D}_f^2}{4}\alpha^2\right)\mb{E}\left[\left\|\m{y}^+\!\!-\m{y}^\star(\m{x})\right\|^2\right] \\
    &+ \left(\frac{L_{\m{y}}^2 \alpha^2}{8\gamma} +\frac{L_{\m{yx}} \alpha^2}{4\eta}\right)\mb{E}\left[\left\|\frac{1}{m}\sum\limits_{i=1}^{m} \frac{1}{\tau_i} \sum\limits_{\nu=0}^{\tau_i-1} \bar{\m{h}}_i(\m{x}_{i,\nu},\m{y}^+)\right\|^2\right] +  \frac{L_{\m{yx}} \alpha^2}{4\eta}  \tilde{\sigma}_f^2\\
    &= \left( 2 M_f L_{\m{y}} \alpha + \frac{\eta L_{\m{yx}}\tilde{D}_f^2}{4}\alpha^2\right)\mb{E}\left[\left\|\m{y}^+\!\!-\m{y}^\star(\m{x})\right\|^2\right]\\
     &+ \left(\frac{L_{\m{y}} \alpha}{8 M_f} +\frac{L_{\m{yx}} \alpha^2}{4\eta}\right)\mb{E}\left[\left\|\frac{1}{m}\sum\limits_{i=1}^{m} \frac{1}{\tau_i} \sum\limits_{\nu=0}^{\tau_i-1} \bar{\m{h}}_i(\m{x}_{i,\nu},\m{y}^+)\right\|^2\right]+  \frac{L_{\m{yx}} \alpha^2}{4\eta}  \tilde{\sigma}_f^2.
\end{aligned}
\end{equation}
Substituting \eqref{eqn5:dec:fedin}, \eqref{eqn1:dec:fedin}, and \eqref{eqn0:dec:fedin} into \eqref{eqn-1:dec:fedin} completes the proof.
\end{proof}

\subsection{Drifting Errors of \fedout }

The following lemma provides a bound on the \textit{drift} of each $\m{x}_{i,\nu}$ from $\m{x}$ for stochastic nonconvex bilevel problems. 

\begin{lemma}[Drifting Error of \fedout]\label{thm4:lemma2}
Suppose Assumptions~\ref{assu:f} and \ref{assu:bound:var} hold. Further, assume $\tau_i \geq 1$ and $ \alpha_i\leq 1/(5M_f\tau_i), \forall i \in \mathcal{S}$. Then, for each $i \in \mathcal{S}$ and $\forall \nu \in\{0,\ldots, \tau_i-1\}$,  \fedout guarantees:
\begin{align}\label{eqn1:lemm:drift}
\nonumber
 \mb{E}\left[\left\|\m{x}_{i,\nu}-\m{x}\right\|^2\right] 
 &\leq 36 \tau_i^2 \alpha_i^2 \left(M_f^2 \mb{E}\left[\|\m{y}^+-\m{y}^\star(\m{x})\|^2\right]+  \mb{E}\left[\|\nabla f(\m{x})\|^2\right]+ 3 b^2\right) \\
 &+ 27 \tau_i \alpha_i^2  \tilde{\sigma}_f^2.
\end{align}
\end{lemma}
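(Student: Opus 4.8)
The plan is to control the drift by unrolling the local recursion and carefully separating the zero-mean (martingale) fluctuations from the conditional-mean drift, since these scale differently in $\tau_i$. First I would telescope the update \eqref{app:eqn:update1:fedout} into $\m{x}_{i,\nu}-\m{x}=-\alpha_i\sum_{j=0}^{\nu-1}\m{h}_{i,j}$, where $\m{h}_{i,j}=\m{h}(\m{x},\m{y}^+)-\m{h}_i(\m{x},\m{y}^+)+\m{h}_i(\m{x}_{i,j},\m{y}^+)$, and split each summand into its conditional mean $\bar{\m{h}}_{i,j}=\mb{E}[\m{h}_{i,j}\mid\mathcal{F}_{i,j-1}]$ and the residual $\m{e}_{i,j}=\m{h}_{i,j}-\bar{\m{h}}_{i,j}$. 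Using $\|a+b\|^2\le 2\|a\|^2+2\|b\|^2$,
\[
\mb{E}\|\m{x}_{i,\nu}-\m{x}\|^2 \le 2\alpha_i^2\,\mb{E}\Big\|\sum_{j=0}^{\nu-1}\m{e}_{i,j}\Big\|^2 + 2\alpha_i^2\,\mb{E}\Big\|\sum_{j=0}^{\nu-1}\bar{\m{h}}_{i,j}\Big\|^2.
\]
The split is essential for matching the stated coefficients: the residuals form a martingale-difference sequence (their cross terms vanish), so the first sum collapses to $\sum_{j}\mb{E}\|\m{e}_{i,j}\|^2\le\nu\tilde\sigma_f^2\le\tau_i\tilde\sigma_f^2$ via \eqref{eqn:newlips:e}, giving the benign $\tau_i\alpha_i^2$ scaling of the variance term, whereas a crude Cauchy--Schwarz would inflate it to $\tau_i^2\alpha_i^2$.

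For the second (drift) sum I would apply Cauchy--Schwarz, $\mb{E}\|\sum_{j}\bar{\m{h}}_{i,j}\|^2\le\tau_i\sum_{j}\mb{E}\|\bar{\m{h}}_{i,j}\|^2$, and bound each $\mb{E}\|\bar{\m{h}}_{i,j}\|^2$ exactly as in the descent lemma: peel off $\nabla f(\m{x})$ and decompose the remainder into the inverse-Hessian bias via Lemma~\ref{lem:neum:bias}, the inner error $M_f\|\m{y}^+-\m{y}^*(\m{x})\|$ via \eqref{eqn:newlips:a}, and the local drift $M_f\|\m{x}_{i,j}-\m{x}\|$ via \eqref{eqn:newlips:aa}. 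This mirrors \eqref{eqn6+:lem:dec} (with the bias carrying the extra factor $3$) and yields, up to absolute constants,
\[
\mb{E}\|\bar{\m{h}}_{i,j}\|^2 \lesssim \mb{E}\|\nabla f(\m{x})\|^2 + 3b^2 + M_f^2\,\mb{E}\|\m{y}^+-\m{y}^*(\m{x})\|^2 + M_f^2\,\mb{E}\|\m{x}_{i,j}-\m{x}\|^2.
\]

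Substituting back produces a self-referential (discrete Grönwall) inequality $V_\nu\le A + B\sum_{j<\nu}V_j$ for $V_\nu:=\mb{E}\|\m{x}_{i,\nu}-\m{x}\|^2$, with $A=\mathcal{O}\big(\tau_i\alpha_i^2\tilde\sigma_f^2+\tau_i^2\alpha_i^2(M_f^2\mb{E}\|\m{y}^+-\m{y}^*(\m{x})\|^2+\mb{E}\|\nabla f(\m{x})\|^2+3b^2)\big)$ and $B=\mathcal{O}(\tau_i\alpha_i^2M_f^2)$. Taking the maximum over $\nu\in\{0,\dots,\tau_i-1\}$ and using $\sum_{j<\nu}V_j\le\tau_i V_{\max}$, the stepsize restriction $\alpha_i\le 1/(5M_f\tau_i)$ forces $M_f^2\tau_i^2\alpha_i^2\le 1/25$, so $B\tau_i$ is a constant strictly below $1$ and the drift-dependent term can be absorbed into the left-hand side. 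Solving for $V_{\max}$ and tracking the resulting numerical constants delivers the factors $36$ and $27$.

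The main obstacle is the randomness bookkeeping around the variance-reduced anchors $\m{h}(\m{x},\m{y}^+)$ and $\m{h}_i(\m{x},\m{y}^+)$: these are computed once at the start of the round and are $\mathcal{F}_{i,j-1}$-measurable, so they land correctly inside $\bar{\m{h}}_{i,j}$, and the only fresh randomness at step $j$ lives in $\m{h}_i(\m{x}_{i,j},\m{y}^+)$. Verifying this measurability is exactly what makes $\{\m{e}_{i,j}\}$ a genuine martingale-difference sequence (so that $\mb{E}\|\m{e}_{i,j}\|^2\le\tilde\sigma_f^2$ from \eqref{eqn:newlips:e} and the second-moment control \eqref{eqn:newlips:f} apply at each local step) and legitimizes the variance collapse. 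Threading these bounds together with the Grönwall absorption so as to land on precisely $36$ and $27$ is the delicate part; everything else is routine.
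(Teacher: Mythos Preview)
Your route is genuinely different from the paper's. The paper does not telescope the sum and then close a Gr\"onwall loop; instead it runs a one-step recursion. It decomposes the step-$\nu$ increment into three pieces $\m{v}_{i,\nu}$ (pure bias, $\mb{E}\|\m{v}_{i,\nu}\|^2\le 9b^2$), $\m{w}_{i,\nu}$ (\emph{all} stochastic fluctuations, including the anchor noise $\m{h}(\m{x},\m{y}^+)-\bar{\m{h}}(\m{x},\m{y}^+)$ and $\m{h}_i(\m{x},\m{y}^+)-\bar{\m{h}}_i(\m{x},\m{y}^+)$, treated as conditionally zero-mean with $\mb{E}\|\m{w}_{i,\nu}\|^2\le 9\tilde\sigma_f^2$), and $\m{z}_{i,\nu}$ (the Lipschitz/drift part). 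After dropping the cross term by orthogonality of $\m{w}$, Young's inequality with parameter $1/(2\tau_i-1)$ gives $V_{\nu+1}\le(1+\tfrac{1}{\tau_i-1})V_\nu+12\tau_i\alpha_i^2(\,\cdot\,)+9\alpha_i^2\tilde\sigma_f^2$, and iterating with $\sum_{j<\nu}(1+\tfrac{1}{\tau_i-1})^j\le 3\tau_i$ delivers the constants $36$ and $27$ directly. The benign $\tau_i\alpha_i^2\tilde\sigma_f^2$ scaling arises because every noise term enters the recursion as $O(\alpha_i^2)$ per step, and the single factor $\tau_i$ comes only from the geometric sum.

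Your argument has a real gap exactly at the point you call the main obstacle. You correctly observe that the anchors $\m{h}(\m{x},\m{y}^+)$ and $\m{h}_i(\m{x},\m{y}^+)$, being computed once, are $\mathcal{F}_{i,j-1}$-measurable and hence land in $\bar{\m{h}}_{i,j}$. But then your bound on $\mb{E}\|\bar{\m{h}}_{i,j}\|^2$ via ``bias $+$ inner error $+$ local drift'' is incomplete: those three ingredients only control the \emph{expected} combination $\bar{\m{h}}(\m{x},\m{y}^+)-\bar{\m{h}}_i(\m{x},\m{y}^+)+\bar{\m{h}}_i(\m{x}_{i,j},\m{y}^+)-\nabla f(\m{x})$. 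The residuals $\m{h}(\m{x},\m{y}^+)-\bar{\m{h}}(\m{x},\m{y}^+)$ and $\m{h}_i(\m{x},\m{y}^+)-\bar{\m{h}}_i(\m{x},\m{y}^+)$ are still present in $\bar{\m{h}}_{i,j}$ and each contribute $O(\tilde\sigma_f^2)$ to $\mb{E}\|\bar{\m{h}}_{i,j}\|^2$. After your Cauchy--Schwarz step $\big\|\sum_{j}\bar{\m{h}}_{i,j}\big\|^2\le\tau_i\sum_{j}\|\bar{\m{h}}_{i,j}\|^2$, this produces $O(\tau_i^2\alpha_i^2\tilde\sigma_f^2)$ in the final bound---one factor of $\tau_i$ too many to reach $27\tau_i\alpha_i^2\tilde\sigma_f^2$. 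Since that anchor noise is identical across $j$, it is not a martingale difference (so your collapse argument does not apply), yet it sits in the Cauchy--Schwarz branch where it is overpenalized. The paper avoids this by placing \emph{all} noise, anchors included, into $\m{w}_{i,\nu}$ and invoking per-step orthogonality, so the $\tau_i$ factor enters only once.
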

\begin{proof}
The result trivially holds for $\tau_i=1$. Let $\tau_i>1$ and define
\begin{equation}\label{eqn1+:lemm:drift}
\begin{aligned}
\m{v}_{i,\nu}&:= \bar{\m{h}}_i(\m{x}_{i,\nu}, \m{y}^+)- \bar{\nabla} f_i(\m{x}_{i,\nu}, \m{y}^+)- \bar{\m{h}}_i(\m{x},\m{y}^+) \\
& +\bar{\nabla} f_i(\m{x}, \m{y}^+) +\bar{\m{h}}(\m{x},\m{y}^+) -\bar{\nabla} f(\m{x}, \m{y}^+),\\
\m{w}_{i,\nu}&:= \m{h}_i(\m{x}_{i,\nu}, \m{y}^+)-\bar{\m{h}}_i(\m{x}_{i,\nu},\m{y}^+)+\bar{\m{h}}_i(\m{x},\m{y}^+)\\
&-\m{h}_i(\m{x},\m{y}^+)+\m{h}(\m{x},\m{y}^+)-\bar{\m{h}} (\m{x},\m{y}^+),\\
\m{z}_{i,\nu}&:=\bar{\nabla} f_i(\m{x}_{i,\nu}, \m{y}^+) - \bar{\nabla} f_i(\m{x}, \m{y}^+)+  \bar{\nabla} f(\m{x}, \m{y}^+) - \nabla f(\m{x}) + \nabla  f(\m{x}).
\end{aligned}
\end{equation}
One will notice that 
$$
\m{v}_{i,\nu}+\m{w}_{i,\nu}+\m{z}_{i,\nu} = \m{h}_i(\m{x}_{i,\nu}, \m{y}^+)-\m{h}_i(\m{x},\m{y}^+)+\m{h}(\m{x},\m{y}^+).
$$
Hence, from Algorithm~\ref{alg:fedout}, for each $i\in\mathcal{S}$, and $\forall  \nu \in\{0,\ldots,\tau_i-1\}$, we have   
\begin{equation}
\begin{aligned}
        \m{x}_{i,\nu+1}-\m{x}&=\m{x}_{i,\nu}-\m{x}-\alpha_i (\m{v}_{i,\nu}+\m{w}_{i,\nu}+\m{z}_{i,\nu}),
    \end{aligned}
\end{equation}
which implies that 
\begin{equation}\label{eqn2:lemm:drift}
\begin{aligned}
\mb{E}\left[\|\m{x}_{i,\nu+1}-\m{x}\|^2\right]
&=\mb{E}\left[\|\m{x}_{i,\nu}-\m{x}-\alpha_i (\m{v}_{i,\nu}+\m{z}_{i,\nu})\|^2\right]+\alpha_i^2\mb{E}\left[\|{\m{w}_{i,\nu}}\|^2\right]\\&-2\mb{E}\left[\mb{E}\left[\langle \m{x}_{i,\nu}-\m{x}-\alpha_i(\m{v}_{i,\nu}+\m{z}_{i,\nu}),\alpha_i \m{w}_{i,\nu} \rangle\mid\mathcal{F}_{i,\nu-1}\right]\right] \\ 
&= \mb{E}\left[\|\m{x}_{i,\nu}-\m{x}-\alpha_i(\m{v}_{i,\nu}+\m{z}_{i,\nu})\|^2\right]+\alpha_i^2\mb{E}\left[\|\m{w}_{i,\nu}\|^2\right].
\end{aligned}
\end{equation}
Here, the last equality uses Lemma~\ref{lem:rand:zer} since  $\mb{E}[\m{w}_{i,\nu}|\mathcal{F}_{i,\nu-1}]=0$, {by definition}.

From Lemmas~\ref{lem:Jens}, \ref{lem:neum:bias}, and \ref{lem:lips},  for $\m{v}_{i,\nu}$, $\m{w}_{i,\nu}$, and $\m{z}_{i,\nu}$ defined in \eqref{eqn1+:lemm:drift}, we have
\begin{subequations}\label{eqn3:v-z:drift}
\begin{equation}
\label{eqn3a:lemm:drift}
\begin{aligned}
\mb{E}\Big[\|\m{v}_{i,\nu}\|^2\Big]  &\leq 3  \mb{E}\Big[ \left\|\bar{\nabla} f_i(\m{x}_{i,\nu}, \m{y}^+)-\bar{\m{h}}_i(\m{x}_{i,\nu},\m{y}^+) \right\|^2 \\
&+\left\|\bar{\m{h}}_i(\m{x},\m{y}^+)-\bar{\nabla} f_i(\m{x},\m{y}^+)\right\|^2+\left\|\bar{\nabla} f(\m{x},\m{y}^+)-\bar{\m{h}} (\m{x},\m{y}^+)\right\|^2 \Big]\\
&\leq  9 b^2,
\end{aligned}
\end{equation}
%
\begin{equation}
\label{eqn3b:lemm:drift}
\begin{aligned}
\mb{E}\left[\|\m{w}_{i,\nu}\|^2\right] &\leq 3   \mb{E}\Big[ \|\m{h}_i(\m{x}_{i,\nu},\m{y}^+)-\bar{\m{h}}_i(\m{x}_{i,\nu},\m{y}^+)\|^2\\
&+\|\bar{\m{h}}_i(\m{x},\m{y}^+)-\m{h}_i(\m{x},\m{y}^+)\|^2+\|\m{h}(\m{x},\m{y}^+)-\bar{\m{h}} (\m{x},\m{y}^+)\|^2\Big]\\
        & \leq 9\tilde{\sigma}_f^2,
    \end{aligned}
\end{equation}
and
\begin{equation}
\label{eqn3c:lemm:drift}
\begin{aligned}
\mb{E}\left[\|\m{z}_{i,\nu}\|^2\right] &\leq 3   \mb{E}\Big[ \|
\bar{\nabla} f_i(\m{x}_{i,\nu}, \m{y}^+) - \bar{\nabla} f_i(\m{x}, \m{y}^+)\|^2\\
&+\|  \bar{\nabla} f(\m{x}, \m{y}^+) - \nabla f(\m{x})\|^2+ \|\nabla  f(\m{x})\|^2\Big]\\
& \leq 3\left(M_f^2 \mb{E}\left[\|\m{x}_{i,\nu}-\m{x}\|^2\right]+ M_f^2 \mb{E}\left[\|\m{y}^+-\m{y}^\star(\m{x})\|^2\right]+  \mb{E}\left[\|\nabla f(\m{x})\|^2\right]\right).
\end{aligned}
\end{equation}
\end{subequations}

Now, the first term in the RHS of \eqref{eqn2:lemm:drift} can be bounded as follows:
\begin{equation}
\label{eqn4:lemm:drift}
\begin{aligned}
\mb{E}\left[\|\m{x}_{i,\nu}-\m{x}-\alpha_i(\m{v}_{i,\nu}+\m{z}_{i,\nu})\|^2\right] &\leq  \left(1+\frac{1}{2\tau_i-1}\right)\mb{E}\left[\|\m{x}_{i,\nu}-\m{x}\|^2\right]+ 2\tau_i \mb{E}\left[\|\alpha_i (\m{v}_{i,\nu}+\m{z}_{i,\nu})\|^2\right]\\
&\leq  \left(1+\frac{1}{2\tau_i-1}\right)\mb{E}\left[\|\m{x}_{i,\nu}-\m{x}\|^2\right]+ 4\tau_i \alpha_i^2  \left(\mb{E}\left[\| \m{z}_{i,\nu}\|^2\right] + \mb{E}\left[\| \m{v}_{i,\nu}\|^2\right] \right)\\
 &\leq  \left(1+\frac{1}{2\tau_i-1}\right)\mb{E}\left[\|\m{x}_{i,\nu}-\m{x}\|^2\right]+ 4\tau_i \alpha_i^2  \left(\mb{E}\left[\| \m{z}_{i,\nu}\|^2\right] + 9  b^2\right)\\
       &\leq  \left(1+\frac{1}{2\tau_i-1}+12\tau_i \alpha_i^2 M_f^2\right)\mb{E}\left[\|\m{x}_{i,\nu}-\m{x}\|^2\right]\\
       &+ 12\tau_i \alpha_i^2 \left( M_f^2 \mb{E}\left[\|\m{y}^+-\m{y}^\star(\m{x})\|^2\right]+  \mb{E}\left[\|\nabla f(\m{x})\|^2\right]+ 3 b^2\right).
    \end{aligned}
\end{equation}
Here, the first inequality follows from Lemma~\ref{lem:trig}; the second inequality uses Lemma~\ref{lem:Jens}; and the third and last inequalities follow from \eqref{eqn3a:lemm:drift} and \eqref{eqn3c:lemm:drift}.


Substituting \eqref{eqn4:lemm:drift} into \eqref{eqn2:lemm:drift} gives 
\begin{equation}
\label{eqn5:lemm:drift}
\begin{aligned}
        \mb{E}\left[\left\|\m{x}_{i,\nu+1}-\m{x}\right\|^2\right]&\leq \left(1+\frac{1}{2\tau_i-1}+12\tau_i \alpha_i^2 M_f^2\right)\mb{E}\left[\|\m{x}_{i,\nu}-\m{x}\|^2\right]\\
       & + 12\tau_i \alpha_i^2 \left( M_f^2 \mb{E}\left[\|\m{y}^+-\m{y}^\star(\m{x})\|^2\right]+  \mb{E}\left[\|\nabla f(\m{x})\|^2\right]+ 3  b^2\right) + 9 \alpha_i^2 \tilde{\sigma}_f^2 \\
        &\leq \left(1+\frac{1}{\tau_i-1}\right)\mb{E}\left[\|\m{x}_{i,\nu}-\m{x}\|^2\right] \\
        & + 12\tau_i \alpha_i^2 \left( M_f^2 \mb{E}\left[\|\m{y}^+-\m{y}^\star(\m{x})\|^2\right]+  \mb{E}\left[\|\nabla f(\m{x})\|^2\right]+ 3  b^2\right) + 9 \alpha_i^2 \tilde{\sigma}_f^2. \\
    \end{aligned} 
\end{equation}
Here, the first inequality uses \eqref{eqn3b:lemm:drift} and the last inequality follows by noting $\alpha_i\leq 1/(5M_f\tau_i)$. 

For all $\tau_i>1$, we have
\begin{equation}\label{eqn:sum:geom}
\begin{split}
\sum\limits_{j=0}^{\nu -1} \left(1+\frac{1}{\tau_i-1}\right)^j &= \frac{ \left(1+\frac{1}{\tau_i-1}\right)^{\nu }-1}{\left(1+\frac{1}{\tau_i-1}\right)-1} \\
&\leq  \tau_i {\left(1+\frac{1}{\tau_i} \right)}^{\nu } \leq \tau_i {\left(1+\frac{1}{\tau_i} \right)}^{\tau_i} \leq \exp{(1)}\tau_i < 3 \tau_i. 
\end{split}
\end{equation}
Now, iterating equation \eqref{eqn5:lemm:drift} and using $\m{x}_{i,0}=\m{x}, \forall i\in\mathcal{S}$, we obtain
\begin{equation*}
\begin{aligned}
        \mb{E}\left[\|\m{x}_{i,\nu}-\m{x}\|^2\right]&\leq  \left( 12\tau_i \alpha_i^2 \left( M_f^2 \mb{E}\left[\|\m{y}^+-\m{y}^\star(\m{x})\|^2\right]+  \mb{E}\left[\|\nabla f(\m{x})\|^2\right]+ 3  b^2\right) + 9 \alpha_i^2 \tilde{\sigma}_f^2 \right) \sum\limits_{j=0}^{\nu -1} \left(1+\frac{1}{\tau_i-1}\right)^j\\
        &\leq 3 \tau_i \left( 12\tau_i \alpha_i^2 \left( M_f^2 \mb{E}\left[\|\m{y}^+-\m{y}^\star(\m{x})\|^2\right]+  \mb{E}\left[\|\nabla f(\m{x})\|^2\right]+ 3  b^2\right) + 9 \alpha_i^2 \tilde{\sigma}_f^2 \right),
    \end{aligned}
\end{equation*}
where the second inequality uses \eqref{eqn:sum:geom}. This completes the proof.
\end{proof}

\begin{remark}
Lemma~\ref{thm4:lemma2} shows that the bound on the client-drift scales linearly with $\tau_i$ and the inner error $\|\m{y}^+-\m{y}^\star(\m{x})\|^2$ in general nested FL. We aim to control such a drift by selecting $\alpha_i=\mc{O}(1/\tau_i)$ for all $ i \in \mc{S}$ and using the inner error bound provided in Lemma~\ref{thm:fedin}. 
\end{remark}

Next, we provide the proof of our main result which can be adapted to general nested problems (bilevel, min-max, compositional). 
\subsection{Proof of Theorem ~\ref{thm:fednest}}
\begin{proof}
We define the following Lyapunov function 
\begin{equation}\label{eqn:lyapfunc}
 \mb{W}^k:= f(\m{x}^k)  + \frac{M_f}{L_{\m{y}}}\|\m{y}^k - \m{y}^\star(\m{x}^k)\|^2.   
\end{equation}
Motivated by \citep{chen2021closing}, we bound the difference between two Lyapunov functions. That is,
\begin{align}\label{eqn:lyapfunc:diff}
  \mb{W}^{k+1}  -  \mb{W}^k   =  &  f(\m{x}^{k+1}) -  f(\m{x}^k)+\frac{M_f}{L_{\m{y}}}\left(\|\m{y}^{k+1} - \m{y}^*(\m{x}^{k+1})\|^2 - \|\m{y}^k - \m{y}^\star(\m{x}^k)\|^2\right).
\end{align}
The first two terms on the RHS of \eqref{eqn:lyapfunc:diff} quantifies the descent of outer objective $f$ and the reminding terms measure the descent of the inner errors.  

From our assumption, we have  $\alpha_i^k=\alpha_k/\tau_i$, $\beta_i^k=\beta_k/\tau_i$,  $\forall i \in \mathcal{S}$. Substituting these stepsizes into the bounds provided in  Lemmas \ref{lem:dec} and \ref{thm:fedin}, and using \eqref{eqn:lyapfunc:diff}, we get 
\begin{subequations}\label{eqn:dec:lyap2}
\begin{align}
\nonumber 
\qquad \mb{E}[\mb{W}^{k+1}] - \mb{E}[\mb{W}^k]  &\leq  \left(\frac{L_f \alpha_k^2 }{2} + \frac{M_f}{L_{\m{y}}} a_3(\alpha_k)\right)\tilde{\sigma}_f^2 + \frac{3\alpha_k}{2} b^2,
\\
 &  -\frac{\alpha_k}{2}\mb{E}\left[\|\nabla f(\m{x}^k)\|^2\right] + \frac{3 M_f^2 \alpha_k }{2m}\sum \limits_{i=1}^m \frac{1}{\tau_i}\sum \limits_{\nu=0}^{\tau_i-1}\mb{E}\left[\|\m{x}_{i,\nu}^k-\m{x}^k\|^2\right]
\label{eqn:dec:lyap2i1}
\\
  & -  \left(\frac{\alpha_k}{2}-\frac{L_f\alpha_k^2}{2}-\frac{M_f}{L_\m{y}} a_1(\alpha_k)\right)  \mb{E}\left[ \left\| \frac{1}{m}\sum\limits_{i=1}^{m} \frac{1}{\tau_i} \sum\limits_{\nu=0}^{\tau_i-1} \bar{\m{h}}_i(\m{x}_{i,\nu}^k,\m{y}^{k+1})\right\|^2 \right] 
\label{eqn:dec:lyap2i2}   
   \\
   &+ \frac{M_f}{L_{\m{y}}} \left( \frac{3 M_f L_{\m{y}}\alpha_k }{2} +a_2(\alpha_k)\right) \mb{E}\left[\|\m{y}^{k+1}-\m{y}^*(\m{x}^k)\|^2\right]-  \frac{M_f}{L_{\m{y}}} \mb{E}\left[\|\m{y}^k-\m{y}^*(\m{x}^k)\|^2\right],
\label{eqn:dec:lyap2i3}  
\end{align}
\end{subequations}
where $a_1(\alpha)-a_3(\alpha)$ are defined in \eqref{eqn:a1-3}.

Let
\begin{equation}\label{eqn:stepsize0}
\alpha_i^k=\frac{\alpha_k}{\tau_i},~~\forall i \in \mathcal{S}~ ~~\textnormal{where}~~\alpha_k \leq \frac{1}{ 216M_f^2 +5M_f}.
\end{equation}
 The above choice of $\alpha_k$ satisfies the condition of Lemma~\ref{thm4:lemma2} and we have $54 M_f^2 \alpha_k^3 \leq \alpha_k^2/4$. Hence, from  Lemma~\ref{thm4:lemma2}, we get
\begin{equation}\label{eqn:step1}
\begin{aligned}
\eqref{eqn:dec:lyap2i1}
 \leq &- \frac{\alpha_k}{2}\mb{E}\left[\|\nabla f(\m{x}^k)\|^2\right] + \frac{81}{2} \alpha_k^3 M_f^2\tilde{\sigma}_f^2\\
&+ 54 M_f^2 \alpha_k^3 \left(M_f^2 \mb{E}\left[\|\m{y}^{k+1}-\m{y}^\star(\m{x}^k)\|^2\right]+  \mb{E}\left[\|\nabla f(\m{x}^k)\|^2\right]+ 3 b^2\right)\\
\leq &- \frac{\alpha_k}{4}\mb{E}\left[\|\nabla f(\m{x}^k)\|^2\right] + \frac{\alpha_k^2}{4}\tilde{\sigma}_f^2+ \frac{\alpha_k^2}{4} \left(M_f^2 \mb{E}\left[\|\m{y}^{k+1}-\m{y}^\star(\m{x}^k)\|^2\right]+ 3 b^2\right)\\
\leq &- \frac{\alpha_k}{4}\mb{E}\left[\|\nabla f(\m{x}^k)\|^2\right] + \frac{\alpha_k^2}{4}\tilde{\sigma}_f^2+ \frac{3\alpha_k^2}{4} b^2\\
&+  \frac{25M_f }{L_{\m{y}}} \left(\frac{M_fL_{\m{y}}}{4} \alpha_k^2 \right) T\beta_k^2\sigma_{g,1}^2 +\frac{M_f  }{L_{\m{y}}} \left( \frac{M_fL_{\m{y}}}{4} \alpha_k^2\right)\left(1-  \frac{{\beta}_k \mu_g}{2} \right)^T, 
\end{aligned}
\end{equation}
where the first inequlaity uses \eqref{eqn:stepsize0} and the last inequality follows from \eqref{eqn:err:fedina1}.

To guarantee the descent of $\mb{W}^k$, the following constraints need to be satisfied
\begin{equation}\label{eqn:stepsize1}
\begin{aligned}
\eqref{eqn:dec:lyap2i2} &\leq  0, \\
&\Longrightarrow~~ \frac{\alpha_k}{2}-\frac{L_f\alpha_k^2}{2} -\frac{M_f}{L_\m{y}} \left( L_{\m{y}}^2 \alpha^2_k+ \frac{L_{\m{y}}\alpha_k}{4M_f} +\frac{L_{\m{yx}} \alpha^2_k}{2\eta}\right) \geq 0,\\
&\Longrightarrow~~\alpha_k\leq \frac{1}{2L_f+4M_fL_{\m{y}}+\frac{2M_fL_{\m{yx}}}{L_{\m{y}}\eta}}, 
\end{aligned}
\end{equation}
where the second line uses \eqref{eqn:a1-3}.


Further, substituting \eqref{eqn:err:fedin1} in \eqref{eqn:dec:lyap2i3} gives
\begin{equation}\label{eqn:step2}
\begin{aligned}
 \eqref{eqn:dec:lyap2i3} & \leq  \frac{25M_f }{L_{\m{y}}} \left( \frac{3 M_f L_{\m{y}}\alpha_k }{2}  +a_2(\alpha_k)\right) T\beta_k^2\sigma_{g,1}^2
 \\
 & +\frac{M_f  }{L_{\m{y}}}  \left(\left( \frac{3 M_f L_{\m{y}}\alpha_k }{2}  +a_2(\alpha_k)\right)\left(1-  \frac{{\beta}_k \mu_g}{2} \right)^T -1\right) \mb{E}[\|\m{y}^{k}-\m{y}^*(\m{x}^k)\|^2].
\end{aligned}
\end{equation}
Substituting  \eqref{eqn:step1}--\eqref{eqn:step2} into \eqref{eqn:dec:lyap2} gives 
\begin{subequations}\label{eqn:dec:lyap}
\begin{align}
\nonumber 
   \mb{E}[\mb{W}^{k+1}] - \mb{E}[\mb{W}^k]  \leq & -\frac{\alpha_k}{4}\mb{E}[\|\nabla f(\m{x}^k)\|^2]
   \\
\nonumber    
   &+ \left(\frac{3}{2} \alpha_k+ \frac{3}{4}\alpha_k^2\right)b^2\\
   \nonumber 
   &+ \left(\left(\frac{L_f}{2}+\frac{1}{4}\right) \alpha_k^2  +  \frac{M_f}{L_{\m{y}}} a_3(\alpha_k) \right)\tilde{\sigma}_f^2 \\
   \nonumber 
   &+\frac{25M_f}{L_{\m{y}}}  \left( \frac{M_fL_{\m{y}}}{4} \alpha_k^2+  \frac{3 M_f L_{\m{y}}\alpha_k }{2}  +a_2(\alpha_k)\right) T\beta_k^2\sigma_{g,1}^2
\\
   &+  \frac{M_f  }{L_{\m{y}}}  \left( \left( \frac{M_fL_{\m{y}}}{4} \alpha_k^2+ \frac{3 M_f L_{\m{y}}\alpha_k }{2}  +a_2(\alpha_k)\right)\left(1-  \frac{{\beta}_k\mu_g}{2} \right)^T -1\right) \mb{E}[\|\m{y}^{k}-\m{y}^*(\m{x}^k)\|^2]. \label{eqn:dec:lyap4} 
\end{align}
\end{subequations}
Let $\beta_k <  \min\big(1/(6\ell_{g,1}),1\big)$. Then, we have  $\beta_k\mu_g/2 < 1 $. This together with \eqref{eqn:a1-3} implies that for any $\alpha_k >0$ 
\begin{equation}\label{eqn:stepsize2}
\begin{aligned}
      \eqref{eqn:dec:lyap4}& \leq  0,\\
     &\Longrightarrow~~ \left(1+  \frac{M_fL_{\m{y}}}{4} \alpha_k^2+ \frac{11M_f L_{\m{y}} \alpha_k}{2} + \frac{\eta L_{\m{yx}}\tilde{D}_f^2 \alpha^2_k}{2} \right)\left(1-  \frac{{\beta}_k\mu_g}{2} \right)^T -1 \leq  0,\\
      &\Longrightarrow \exp\left(\frac{M_fL_{\m{y}}}{4} \alpha_k^2+ \frac{11M_f L_{\m{y}} \alpha_k}{2} + \frac{\eta L_{\m{yx}}\tilde{D}_f^2 \alpha^2_k}{2}\right) \exp\left(- \frac{T{\beta}_k\mu_g}{2} \right) -1 \leq 0, \\
      &\Longrightarrow~~\beta_k\geq \frac{11 M_fL_{\m{y}} + \eta L_{\m{yx}}\tilde{D}_f^2 \alpha_k + \frac{M_f L_{\m{y}}\alpha_k}{2}}{\mu_g} \cdot \frac{\alpha_k}{T}.
\end{aligned}
\end{equation}

 From \eqref{eqn:stepsize0}, \eqref{eqn:stepsize1} and \eqref{eqn:stepsize2}, we select
 \begin{equation}\label{eq.step-cond-3}
    \alpha_k=\min\{\bar\alpha_1, \bar\alpha_2, \bar\alpha_3,\frac{\bar\alpha}{\sqrt{K}}\},~~~~~~~\beta_k=\frac{\bar{\beta}\alpha_k}{T},
\end{equation}
 where 
\begin{equation}\label{eqn2:dec:lyap}
\begin{aligned}
\bar{\beta}&:=\frac{1}{\mu_g} \left(11 M_fL_{\m{y}} + \eta L_{\m{yx}}\tilde{D}_f^2\bar\alpha_1 + \frac{M_fL_{\m{y}} \bar\alpha_1}{2}\right),\\
\bar\alpha_1&:=\frac{1}{2L_f+4M_fL_{\m{y}}+\frac{2M_fL_{\m{yx}}}{L_{\m{y}}\eta}},
~~~\bar\alpha_2:= \frac{T}{8\ell_{g,1} \bar{\beta}}, 
~~~\bar\alpha_3:= \frac{1}{216M_f^2+5M_f}, \\
\end{aligned}
\end{equation}

With the above choice of stepsizes, \eqref{eqn:dec:lyap} can be simplified as
\begin{equation}\label{eqn:dec3:lyap}
\begin{aligned}
   \mb{E}[\mb{W}^{k+1}] - \mb{E}[\mb{W}^k] & \leq  -\frac{\alpha_k}{4}\mb{E}[\|\nabla f(\m{x}^k)\|^2] \\
   &+ \left(\frac{3}{2} \alpha_k+ \frac{3}{4}\alpha_k^2\right)b^2\\
   &+ \left(\frac{L_f+\frac{1}{2}}{2} \alpha_k^2  +  \frac{M_f}{L_{\m{y}}} a_3(\alpha_k) \right)\tilde{\sigma}_f^2 \\
   &+\frac{25M_f}{L_{\m{y}}}  \left( \frac{M_fL_{\m{y}}}{4} \alpha_k^2+ \frac{3 M_f L_{\m{y}} }{2}  \alpha_k +a_2(\alpha_k)\right) T\beta_k^2\sigma_{g,1}^2\\
   & \leq  - \frac{\alpha_k}{4} \mb{E}[\|\nabla f(\m{x}^k)\|^2]  + c_1\alpha_k^2\sigma_{g,1}^2 +    \left(\frac{3}{2} \alpha_k+ \frac{3}{4}\alpha_k^2\right)b^2+ c_2\alpha_k^2\tilde\sigma_f^2,
\end{aligned}
\end{equation}
where the constants $c_1$ and $c_2$ are defined as
\begin{equation}\label{eqn44:dec:lyap}
\begin{aligned}
    &c_1=\frac{25 M_f}{L_{\m{y}}}\left( 1+ \frac{11 M_f L_{\m{y}}}{2} \bar{\alpha}_1  +  \left(\frac{M_fL_{\m{y}} + 2\eta L_{\m{yx}}\tilde{D}_f^2 }{4} \right)\bar{\alpha}_1^2 \right)\bar{\beta}^2\frac{1}{T},\\
    &c_2=\frac{L_f +\frac{1}{2}}{2} + M_fL_{\m{y}} +\frac{L_{\m{yx}}M_f}{4\eta L_{\m{y}}}.
\end{aligned}
\end{equation}
Then telescoping gives
\begin{equation}\label{eqn:dec:final:bilevel}
\begin{aligned}
    \frac{1}{K}\sum_{k=0}^{K-1}\mb{E}[\|\nabla f(\m{x}^k)\|^2]
    &\leq \frac{4}{\sum_{k=0}^{K-1}\alpha_k} \left(\Delta_{\mb{W}} +\sum_{k=0}^{K-1} \frac{3}{2}\left(\alpha_k+ \frac{\alpha_k^2}{2}\right) b^2 + c_1\alpha_k^2\sigma_{g,1}^2 + c_2 \alpha_k^2\tilde{\sigma}_{f}^2\right)\\
    &\leq \frac{4\Delta_{\mb{W}} }{\min\{\bar\alpha_1, \bar\alpha_2,\bar\alpha_3\}K} + \frac{4\Delta_{\mb{W}} }{\bar\alpha\sqrt{K}} +6\left(1+ \frac{\bar\alpha}{2\sqrt{K}}\right) b^2 +  \frac{4c_1\bar\alpha}{\sqrt{K}}\sigma_{g,1}^2 + \frac{4c_2\bar\alpha}{\sqrt{K}}\tilde\sigma_{f}^2 \\
    &= {\cal O}\left(\frac{1}{\min\{\bar{\alpha}_1, \bar{\alpha}_2, \bar{\alpha}_3\}K}+ \frac{\bar{\alpha}\max(\sigma_{g,1}^2, \sigma_{g,2}^2,\sigma_f^2)}{\sqrt{K}} + b^2 \right),
\end{aligned}
\end{equation}
where $\Delta_{\mb{W}} := \mb{W}^0-\mb{E}[\mb{W}^K]$.
\end{proof}

\subsection{Proof of Corollary~\ref{thm:fednest:bilevel}}
\begin{proof}
Let $\eta=\frac{M_f}{L_{\m{y}}}=\mc{O}(\kappa_g)$ in \eqref{eqn44:dec:lyap}. It follows from \eqref{eqn:lip:condi}, \eqref{eqn2:dec:lyap}, and \eqref{eqn44:dec:lyap} that  
\begin{equation}
\bar\alpha_1=\mc{O}(\kappa^{-3}_g), ~~\bar\alpha_2=\mc{O}(T\kappa^{-3}_g),~~\bar\alpha_3=\mc{O}(\kappa^{-4}_g),~~ c_1=\mc{O}(\kappa^9_g/T), ~~c_2=\mc{O}(\kappa^3_g).
\end{equation}
Further,  $N=\mc{O}(\kappa_g\log K)$  gives $b=\frac{1}{K^{1/4}}$. Now, if we select $\bar\alpha=\mc{O}(\kappa^{-2.5}_g)$ and $T=\mc{O}(\kappa^4_g)$, Eq.~\eqref{eqn:dec:final:bilevel} gives
\begin{align*}
    \frac{1}{K}\sum_{k=0}^{K-1}\mb{E}[\|\nabla f(\m{x}^k)\|^2] = \mc{O}\left(\frac{\kappa^4_g}{K} + \frac{\kappa^{2.5}_g}{\sqrt{K}}\right).
\end{align*}
To achieve $\varepsilon$-optimal solution, we need $K=\mc{O}(\kappa^5_g\varepsilon^{-2})$, and the samples in  $\xi$ and $\zeta$ are $\mc{O}(\kappa^5_g\varepsilon^{-2})$ and $\mc{O}(\kappa^9_g\varepsilon^{-2})$, respectively.
\end{proof}

\section {Proof for Federated Minimax Optimization}\label{sec:app:minmax}
Note that the  minimax optimization problem \eqref{fedminmax:prob} has the following bilevel form 
\begin{subequations}\label{fedminmax:prob2}
\begin{align}
\begin{array}{ll}
\underset{\m{x} \in \mb{R}^{{d}_1}}{\min} &
\begin{array}{c}
f(\m{x})=\frac{1}{m} \sum_{i=1}^{m} f_{i}\left(\m{x},\m{y}^*(\m{x})\right) 
\end{array}\\
\text{subj.~to} & \begin{array}[t]{l} \m{y}^*(\m{\m{x}})=\underset{ \m{y}\in \mb{R}^{{d}_2}}{\textnormal{argmin}}~~ -\frac{1}{m}\sum_{i=1}^{m} f_i\left(\m{x},\m{y}\right). 
\end{array}
\end{array}
\end{align}
Here, 
\begin{align}
 f_i(\m{x},\m{y}) &= \mb{E}_{\xi \sim \mc{C}_i}[f_i(\m{x}, \m{y}; \xi)]
\end{align}
\end{subequations}
is the loss functions of the $i^{\text{th}}$ client.

In this case, the hypergradient of  \eqref{fedminmax:prob2} is  
\begin{equation}
	\nabla f_i(\m{x})=\nabla_\m{x} f_i\big(\m{x}, \m{y}^*(\m{x})\big)+\nabla_\m{x} \m{y}^*(\m{x})^{\top}\nabla_\m{y} f_i\big(\m{x}, \m{y}^*(\m{x})\big)=\nabla_\m{x} f_i\big(\m{x}, \m{y}^*(\m{x})\big), 
\end{equation}
where the second equality follows from the optimality condition of the inner problem, i.e., $\nabla_\m{y}f(\m{x},\m{y}^*(\m{x}))=0$.

For each $i \in \mc{S}$, we can approximate $\nabla f_i(\m{x})$ on a vector $\m{y}$ in place of $\m{y}^*(\m{x})$, denoted as $\overline{\nabla}f_i(\m{x},\m{y}):=\nabla_\m{x} f_i\big(\m{x}, \m{y}\big)$. We also note that in the minimax case $\m{h}_i$ is an unbiased estimator of $\bar{\nabla} f_i(\m{x},\m{y})$. Thus, $b=0$. Therefore, we can apply \fedblo  using 
\begin{equation}\label{eqn:grad:min-max}
\begin{aligned}
\m{q}_{i,\nu}&=-\nabla_{\m{y}} f_i(\m{x}, \m{y}_{i,\nu};\xi_{i,\nu})+\nabla_{\m{y}} f_i(\m{x}, \m{y};\xi_{i,\nu})-\frac{1}{m} \sum_{i=1}^m \nabla_{\m{y}} f_i(\m{x}, \m{y};\xi_{i}),\\
\m{h}_{i,\nu}&=\nabla_{\m{x}} f_i(\m{x}_{i,\nu}, \m{y}^+;\xi_{i,\nu})-\nabla_{\m{x}} f_i(\m{x}, \m{y}^+;\xi_{i,\nu})+
\frac{1}{m} \sum_{i=1}^m \nabla_{\m{x}} f_i(\m{x}, \m{y}^+;\xi_{i}).
\end{aligned}
\end{equation}

\subsection{Supporting Lemmas}

Let $\m{z}=(\m{x},\m{y}) \in \mb{R}^{d_1+d_2}$. We make the following assumptions that are counterparts of Assumptions~\ref{assu:f} and  \ref{assu:bound:var}.
\vspace{0.1cm}
%
\begin{assumption}\label{assu:f:minmax} 
For all $i \in [m]$:
\begin{enumerate}[label={\textnormal{\textbf{(\ref{assu:f:minmax}\arabic*})}}]
\item 
$f_i(\m{z}), \nabla f_i(\m{z}), \nabla^2 f_i (\m{z})$ are respectively $\ell_{f,0}$, $\ell_{f,1}, \ell_{f,2}$-Lipschitz continuous; and 
\item 
$f_i(\m{x},\m{y})$ is $\mu_{f}$-strongly convex in $\m{y}$ for any fixed $\m{x}\in \mb{R}^{d_1}$.
\end{enumerate}
\end{assumption}
We use $\kappa_f=\ell_{f,1}/\mu_f$ to denote the condition number of the inner objective with respect to $\y$.
%
\begin{assumption}\label{assu:bound:var:minmax}
For all $i \in [m]$:
\begin{enumerate}[label={\textnormal{\textbf{(\ref{assu:bound:var:minmax}\arabic*})}}]
\item $\nabla f_i(\m{z};\xi)$ is unbiased estimators of $\nabla f_i(\m{z})$; and
\item Its variance is bounded, i.e., $\mb{E}_{\xi}[\|\nabla f_i(\m{z};\xi)-\nabla f_i(\m{z})\|^2] \leq \sigma_f^2$,  for some $\sigma_f^2$.
\end{enumerate}
\end{assumption}
In the following, we re-derive Lemma~\ref{lem:lips} for the finite-sum minimax problem \eqref{fedminmax:prob2}. 
\begin{lemma}\label{lem:lips:minmax} 
Under Assumptions~\ref{assu:f:minmax} and \ref{assu:bound:var:minmax}, we have $\bar{\m{h}}_i (\m{x}, \m{y})=\bar{\nabla} f_i(\m{x},\m{y})$ for all $i \in \mc{S}$ and  \eqref{eqn:newlips:b}--\eqref{eqn:newlips:f} hold with 
\begin{equation}\label{eqn:lip:condi:minimax}
\begin{split}
    &L_{yx}=\frac{\ell_{f,2}+\ell_{f,2}L_{\m{y}}}{\mu_f} + \frac{\ell_{f,1}(\ell_{f,2}+\ell_{f,2}L_{\m{y}})}{\mu_f^2}={\cal O}(\kappa^3_f),\\
    &M_f=\ell_{f,1}={\cal O}(1), ~~~ L_f=(\ell_{f,1}+\frac{\ell_{f,1}^2}{\mu_f})={\cal O}(\kappa_f), \\
     L_{\m{y}}&=\frac{\ell_{f,1}}{\mu_f}={\cal O}(\kappa_f),~~~\tilde\sigma_f^2 = \sigma_f^2,~~~ \tilde{D}_f^2=\ell_{l,0}^2 + \sigma_f^2,
\end{split}    
\end{equation}
where $\ell_{f,0}$, $\ell_{f,1}, \ell_{f,2}$, $\mu_f$, and $\sigma_f$ are given in Assumptions~\ref{assu:f:minmax} and \ref{assu:bound:var:minmax}.
\end{lemma}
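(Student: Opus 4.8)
The plan is to reduce \Cref{lem:lips:minmax} to the general bilevel \Cref{lem:lips} by substituting the minimax data $g_i=-f_i$, while separately re-deriving the constants that attach to the \emph{gradient surrogate}, since in the minimax case the surrogate $\bar{\nabla}f_i(\m{x},\m{y})=\nabla_\m{x}f_i(\m{x},\m{y})$ is strictly simpler than its bilevel counterpart. I would split the six quantities into two groups: those governed purely by the implicit map $\m{y}^*(\cdot)$ (namely $L_\m{y}$ and $L_{\m{yx}}$), and those governed by the surrogate ($M_f$, $L_f$, $\tilde\sigma_f^2$, $\tilde D_f^2$).

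For the first group I would simply read off \eqref{eqn:lip:condi} under the identifications $\ell_{g,1}=\ell_{f,1}$, $\ell_{g,2}=\ell_{f,2}$, and $\mu_g=\mu_f$ (so $\kappa_g=\kappa_f$), which are forced by $\nabla g_i=-\nabla f_i$, $\nabla^2 g_i=-\nabla^2 f_i$, together with the $\mu_f$-strong concavity of $f_i$ in $\m{y}$ from \Cref{assu:f:minmax}. This yields $L_\m{y}=\ell_{f,1}/\mu_f=\mc{O}(\kappa_f)$ and the stated $L_{\m{yx}}=\mc{O}(\kappa_f^3)$ verbatim, because both are properties of $\m{y}^*$ alone and do not see the choice of outer surrogate.

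The second group is where the minimax structure pays off, so here I would derive the constants directly rather than specialize. The key first step is to establish $\bar{\m{h}}_i(\m{x},\m{y})=\bar{\nabla}f_i(\m{x},\m{y})$ and hence $b=0$: in \fedout for minimax the local direction is $\m{h}_i=\nabla_\m{x}f_i(\m{x},\m{y}^+;\xi_i)$ with no inverse-Hessian term, so $\mb{E}[\m{h}_i\mid\mc{F}]=\nabla_\m{x}f_i(\m{x},\m{y}^+)=\bar{\nabla}f_i(\m{x},\m{y}^+)$ exactly. Consistency with the true hypergradient is then an \emph{aggregate} statement: summing the per-client indirect terms and invoking the inner optimality $\tfrac1m\sum_i\nabla_\m{y}f_i(\m{x},\m{y}^*(\m{x}))=\nabla_\m{y}f(\m{x},\m{y}^*(\m{x}))=0$ shows $\tfrac1m\sum_i\bar{\nabla}f_i(\m{x},\m{y}^*(\m{x}))=\nabla f(\m{x})$, even though no individual indirect term vanishes. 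Given $\bar{\nabla}f_i=\nabla_\m{x}f_i$, the remaining bounds follow from $\ell_{f,1}$-Lipschitzness of $\nabla f_i$: conditions \eqref{eqn:newlips:a}--\eqref{eqn:newlips:aa} hold with $M_f=\ell_{f,1}$; the hypergradient $\nabla f(\m{x})=\tfrac1m\sum_i\nabla_\m{x}f_i(\m{x},\m{y}^*(\m{x}))$ is Lipschitz with $L_f=\ell_{f,1}(1+L_\m{y})=\ell_{f,1}+\ell_{f,1}^2/\mu_f$ via the chain rule and \eqref{eqn:newlips:c}; the variance bound $\tilde\sigma_f^2=\sigma_f^2$ is immediate from \Cref{assu:bound:var:minmax}; and the second-moment bound \eqref{eqn:newlips:f} uses $\|\nabla_\m{x}f_i\|\le\ell_{f,0}$ to give $\tilde D_f^2=\ell_{f,0}^2+\sigma_f^2$.

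The main obstacle is conceptual rather than computational: one must resist specializing the bilevel $M_f,L_f,\tilde\sigma_f^2,\tilde D_f^2$ from \eqref{eqn:lip:condi}, which would spuriously carry the $\mc{O}(\kappa_g^2)$--$\mc{O}(\kappa_g^3)$ inverse-Hessian factors. The care lies in recognizing that the minimax hypergradient has \emph{no} indirect component at the aggregate level --- so that the simple direct-gradient surrogate is simultaneously unbiased ($b=0$) and Lipschitz with the small constants claimed --- while the purely geometric constants $L_\m{y},L_{\m{yx}}$ are unaffected and inherited directly from the substitution $g=-f$.
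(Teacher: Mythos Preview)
Your proposal is correct and aligns with the paper's approach: the paper does not give an explicit proof of this lemma, merely introducing it as a re-derivation of \Cref{lem:lips} for the minimax case, so your plan supplies precisely the details the paper omits. In particular, your observation that the surrogate-dependent constants $M_f,L_f,\tilde\sigma_f^2,\tilde D_f^2$ must be derived directly from $\bar\nabla f_i=\nabla_\m{x}f_i$ (rather than by substituting $g=-f$ into \eqref{eqn:lip:condi}, which would give the wrong $\kappa_f$-orders) is exactly the point implicit in the paper's stated constants.
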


\subsection{Proof of Corollary~\ref{thm:fednest:minmax}}
\begin{proof}
Let $\eta=1$. From \eqref{eq.step-cond-3} and \eqref{eqn2:dec:lyap}, we have
\begin{subequations}\label{eqn:scons:steps:minimax}
\begin{equation}
    \alpha_k=\min \left\{\bar\alpha_1, \bar\alpha_2, \bar\alpha_3,\frac{\bar\alpha}{\sqrt{K}}\right\},~~~~~~~\beta_k=\frac{\bar{\beta}\alpha_k}{T},
\end{equation}
 where 
\begin{equation}
\begin{aligned}
\bar{\beta}&=\frac{1}{\mu_g} \left(11 \ell_{f,1} L_{\m{y}} + L_{yx}\tilde{D}_f^2\bar\alpha_1 + \frac{\ell_{f,1}L_{\m{y}} \bar\alpha_1}{2}\right),\\
\bar\alpha_1&=\frac{1}{2L_f+4 \ell_{f,1} L_{\m{y}}+\frac{2 \ell_{f,1}L_{yx}}{L_{\m{y}}}},
~~~\bar\alpha_2= \frac{T}{8\ell_{g,1} \bar{\beta}}, 
~~~\bar\alpha_3= \frac{1}{216\ell_{f,1}^2+5\ell_{f,1}}. \\
\end{aligned}
\end{equation}
\end{subequations}
Using the above choice of stepsizes, \eqref{eqn:dec:final:bilevel} reduces to
\begin{equation}\label{eqn4:dec:lyap:minimax}
\begin{aligned}
    \frac{1}{K}\sum_{k=0}^{K-1}\mb{E}[\|\nabla f(\m{x}^k)\|^2]
    &\leq \frac{4\Delta_{\mb{W}}}{K\min\{\bar\alpha_1, \bar\alpha_2, \bar\alpha_3\}} + \frac{4\Delta_{\mb{W}}}{\bar{\alpha}\sqrt{K}} + \frac{4(c_1+c_2)\bar{\alpha}}{\sqrt{K}}\sigma_{f}^2,
\end{aligned}
\end{equation}
 where $\Delta_{\mb{W}}= \mb{W}^0-\mb{E}[\mb{W}^K]$, 
\begin{equation}\label{eqn:cons:cs:minimax}
    \begin{split}
     & c_1=\frac{25 \ell_{f,1}}{L_{\m{y}}}\left( 1+ \frac{11 \ell_{f,1} L_{\m{y}}}{2} \bar{\alpha}_1  +  \left(\frac{\ell_{f,1} L_{\m{y}} + 2 L_{\m{yx}}\tilde{D}_f^2 }{4} \right)\bar{\alpha}_1^2 \right)\bar{\beta}^2\frac{1}{T},
    \\
    &c_2=\frac{L_f +\frac{1}{2}}{2} + \ell_{f,1} L_{\m{y}} +\frac{L_{\m{yx}}\ell_{f,1}}{4 L_{\m{y}}}.       
    \end{split}
\end{equation}
Let $\bar{\alpha}=\mc{O}(\kappa^{-1}_f)$. Since by our assumption, $T=\mc{O}(\kappa_f)$, it follows from \eqref{eqn:lip:condi:minimax} and \eqref{eqn:cons:steps:minimax} that 
\begin{equation}\label{eqn:o:conditions}
    \bar\alpha_1={\cal O}(\kappa^{-2}_f),~~\bar\alpha_2={\cal O}(\kappa^{-1}_f),~~ \bar\alpha_3=\mc{O}(1),~~c_1= {\cal O}(\kappa^2_f), ~~c_2={\cal O}(\kappa^2_f).
\end{equation}
Substituting \eqref{eqn:o:conditions} in \eqref{eqn4:dec:lyap:minimax} and \eqref{eqn:cons:cs:minimax} gives
\begin{align}
    \frac{1}{K}\sum_{k=0}^{K-1}\mb{E}[\|\nabla f(\m{x}^k)\|^2]={\cal O}\left(\frac{\kappa^2_f}{K} + \frac{\kappa_f}{\sqrt{K}}\right).
\end{align}
To achieve $\varepsilon$-accuracy, we need $K={\cal O}(\kappa^2_f\varepsilon^{-2})$.
\end{proof}

\section{Proof for Federated Compositional Optimization}\label{sec:app:compos}
Note that in the stochastic compositional problem \eqref{fedcompos:prob2}, the inner function $f_i(\m{x},\m{y};\xi)=f_i(\m{y};\xi)$ for all $i \in \mc{S}$, and the outer function is  $g_i(\m{x},\m{y};\zeta)=\frac{1}{2}\|\m{y}-\m{r}_i(\m{x};\zeta)\|^2$, for all $i \in \mc{S}$.  
In this case, we have 
\begin{align}
\nabla_{\m{y}} g_i\left(\m{x},\m{y}\right) = \m{y}_i-\m{r}_i(\m{x};\zeta),~~\nabla_{\m{yy}}g(\m{x},\m{y};\zeta)=\mathbf{I}_{d_2\times d_2},~~\textnormal{and}~~ \nabla_{\m{xy}}g(\m{x},\m{y};\zeta)=- \frac{1}{m}\sum_{i=1}^m\nabla \m{r}_i(\m{x};\zeta)^{\top}.    
\end{align}
Hence, the hypergradient of \eqref{fedcompos:prob2} has the following form
\begin{align}
\nonumber 
	\nabla f_i(\m{x})&= \nabla_{\m{x}}f_i\left(\m{y}^*(\m{x})\right)\\
\nonumber
&-\nabla^2_{\m{x}\m{y}} g(\m{x},\m{y}^*(\m{x}))[\nabla^2_{\m{y}}g(\m{x},\m{y}^*(\m{x}))]^{-1} \nabla_\m{y} f_i \left(\m{y}^*(\m{x})\right)\\
	&=(\frac{1}{m}\sum_{i=1}^m\nabla \m{r}_i(\m{x}))^{\top}\nabla_\m{y} f_i(\m{y}^*(\m{x})).
\end{align}
We can obtain an approximate gradient $\nabla f_i(\m{x})$ by replacing  $\m{y}^\star(\m{x})$ with $\m{y}$; that is  $\bar{\nabla} f_i(\m{x},\m{y}) = (\frac{1}{m}\sum_{i=1}^m\nabla \m{r}_i(\m{x}))^{\top}\nabla_\m{y} f_i(\m{y})$. It should be mentioned that in the compositional case  $b=0$. Thus, we can apply \fedblo and \lfedblo using the above gradient approximations.
%
\subsection{Supporting Lemmas}

Let $\m{z}=(\m{x},\m{y}) \in \mb{R}^{d_1+d_2}$. We make the following assumptions that are counterparts of Assumptions~\ref{assu:f} and  \ref{assu:bound:var}.
\vspace{0.1cm}
%
\begin{assumption}\label{assu:f:compos} 
For all $i \in [m]$, $f_i(\m{z}), \nabla f_i(\m{z}), \m{r}_i(\m{z}), \nabla \m{r}_i (\m{z})$ are respectively $\ell_{f,0}$, $\ell_{f,1}, \ell_{\m{r},0}, \ell_{\m{r},1} $-Lipschitz continuous.
\end{assumption}
\begin{assumption}\label{assu:bound:var:compos}
For all $i \in [m]$:
\begin{enumerate}[label={\textnormal{\textbf{(\ref{assu:bound:var:compos}\arabic*})}}]
\item $\nabla f_i(\m{z};\xi)$, $\m{r}_i(\m{x};\zeta)$,  $\nabla \m{r}_i (\m{x};\zeta)$ are unbiased estimators of $\nabla f_i(\m{z})$,  $ \m{r}_i(\m{x})$,  and $ \nabla \m{r}_i (\m{x})$.
\item Their variances are bounded, i.e., $\mb{E}_{\xi}[\|\nabla f_i(\m{z};\xi)-\nabla f_i(\m{z})\|^2] \leq \sigma_f^2$,   $\mb{E}_{\zeta}[\|\m{r}_i(\m{z};\zeta) -\m{r}_i(\m{z})\|^2] \leq \sigma_{\m{r},0}^2$, and $\mb{E}_{\zeta}[\|\nabla \m{r}_i(\m{z};\zeta)-\nabla \m{r}_i(\m{z})\|^2] \leq \sigma_{\m{r},1}^2$ for some $\sigma_f^2, \sigma_{\m{r},0}^2$, and $\sigma_{\m{r},1}^2$.
\end{enumerate}
\end{assumption}

The following lemma is the counterpart of Lemma~\ref{lem:lips}. The proof is similar to \citep[Lemma~7]{chen2021closing}.
\begin{lemma}\label{lem:lips:compos} 
Under Assumptions~\ref{assu:f:compos} and \ref{assu:bound:var:compos}, we have $\bar{\m{h}}_i (\m{x}, \m{y})=\bar{\nabla} f_i(\m{x},\m{y})$ for all $i \in \mc{S}$, and \eqref{eqn:newlips:b}--\eqref{eqn:newlips:f} hold with 
\begin{equation}
\begin{split}
&M_f = \ell_{\m{r},0}\ell_{f,1}, ~~~ L_{\m{y}} =\ell_{\m{r},0}, ~~~ L_{f} =\ell_{\m{r},0}^2\ell_{f,1} + \ell_{f,0}\ell_{\m{r},1}, ~~L_{\m{yx}} =\ell_{\m{r},1},\\
&\tilde\sigma_f^2=\ell_{\m{r},0}^2\sigma_f^2+(\ell_{f,0}^2 + \sigma_f^2)\sigma_{\m{r},1}^2, ~~~\tilde{D}_f^2=(\ell_{f,0}^2+\sigma_f^2)(\ell_{\m{r},0}^2+\sigma_{\m{r},1}^2).        
\end{split}
\end{equation}

\end{lemma}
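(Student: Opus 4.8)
The plan is to specialize the general estimates of Lemma~\ref{lem:lips} to the compositional instance, where every quantity admits a closed form. First I would record that minimizing $\frac{1}{m}\sum_i\frac{1}{2}\|\m{y}-\m{r}_i(\m{x})\|^2$ in $\m{y}$ yields $\m{y}^*(\m{x})=\frac{1}{m}\sum_{i=1}^m\m{r}_i(\m{x})$, hence $\nabla\m{y}^*(\m{x})=\frac{1}{m}\sum_{i=1}^m\nabla\m{r}_i(\m{x})$, while $\nabla^2_\m{y}g\equiv\m{I}$ and $\nabla^2_{\m{xy}}g=-\frac{1}{m}\sum_i\nabla\m{r}_i(\m{x})^\top$. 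Because the inner Hessian is exactly the identity, the Neumann/inverse-Hessian estimator underlying \fedhess is exact (the single retained term equals $\m{I}$), so Lemma~\ref{lem:neum:bias} forces bias $b=0$ and makes the stochastic direction conditionally unbiased for the surrogate $\bar{\nabla}f_i(\m{x},\m{y})=\nabla\m{y}^*(\m{x})^\top\nabla_\m{y}f_i(\m{y})$, i.e. $\bar{\m{h}}_i(\m{x},\m{y})=\bar{\nabla}f_i(\m{x},\m{y})$. This settles the first claim and lets me treat $\bar{\m{h}}_i$ and $\bar{\nabla}f_i$ interchangeably in the deterministic estimates.

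Next I would establish the deterministic Lipschitz bounds \eqref{eqn:newlips:b}--\eqref{eqn:newlips:aa} by direct differencing of these closed forms, using only that $\m{r}_i$ is $\ell_{\m{r},0}$-Lipschitz (so $\|\nabla\m{r}_i\|\le\ell_{\m{r},0}$), $\nabla\m{r}_i$ is $\ell_{\m{r},1}$-Lipschitz, $f_i$ is $\ell_{f,0}$-Lipschitz (so $\|\nabla f_i\|\le\ell_{f,0}$), and $\nabla f_i$ is $\ell_{f,1}$-Lipschitz. Averaging the per-client Lipschitz estimates gives $L_\m{y}=\ell_{\m{r},0}$ from \eqref{eqn:newlips:c} and $L_{\m{yx}}=\ell_{\m{r},1}$ from \eqref{eqn:newlips:d}. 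For $M_f$ and $L_f$ I would view $\bar{\nabla}f_i=\nabla\m{y}^*(\m{x})^\top\nabla_\m{y}f_i(\m{y})$ as a Jacobian–gradient product, add and subtract a middle term, and bound each factor: freezing $\m{x}$ and varying $\m{y}$ yields \eqref{eqn:newlips:a} with $M_f=\ell_{\m{r},0}\ell_{f,1}$, while composing the variation of $\nabla\m{y}^*$ (rate $\ell_{\m{r},1}$, weighted by $\|\nabla f_i\|\le\ell_{f,0}$) with the variation of $\nabla_\m{y}f_i$ (rate $\ell_{f,1}$, weighted by $\|\nabla\m{y}^*\|\le\ell_{\m{r},0}$) and propagating along $\m{y}^*(\m{x}_1)\to\m{y}^*(\m{x}_2)$ via $L_\m{y}=\ell_{\m{r},0}$ produces $L_f=\ell_{\m{r},0}^2\ell_{f,1}+\ell_{f,0}\ell_{\m{r},1}$ in \eqref{eqn:newlips:b}.

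The stochastic constants are the crux. Writing the per-client direction as the bilinear form $\m{h}_i(\m{x},\m{y})=\nabla\m{r}_i(\m{x};\zeta)^\top\nabla f_i(\m{y};\xi)$ with $\zeta\perp\xi$, I set $A=\nabla\m{r}_i(\m{x};\zeta)$, $b=\nabla f_i(\m{y};\xi)$, $\bar A=\mb{E}A=\nabla\m{r}_i(\m{x})$, $\bar b=\mb{E}b=\nabla f_i(\m{y})$, so that the conditional mean is $\bar{\m{h}}_i=\bar A^\top\bar b$ by independence. For \eqref{eqn:newlips:f} I factor the second moment, $\mb{E}\|\m{h}_i\|^2\le\mb{E}\|A\|^2\,\mb{E}\|b\|^2$, and apply the bias--variance split with Assumption~\ref{assu:bound:var:compos} to each factor, giving $\tilde D_f^2=(\ell_{f,0}^2+\sigma_f^2)(\ell_{\m{r},0}^2+\sigma_{\m{r},1}^2)$. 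For \eqref{eqn:newlips:e} I expand $\m{h}_i-\bar{\m{h}}_i=(A-\bar A)^\top(b-\bar b)+(A-\bar A)^\top\bar b+\bar A^\top(b-\bar b)$ into three mean-zero terms that are mutually uncorrelated, since independence and the zero-mean of the fluctuations annihilate every cross term; summing their second moments and bounding with $\mb{E}\|A-\bar A\|^2\le\sigma_{\m{r},1}^2$, $\mb{E}\|b-\bar b\|^2\le\sigma_f^2$, $\|\bar A\|\le\ell_{\m{r},0}$, $\|\bar b\|\le\ell_{f,0}$ gives exactly $\tilde\sigma_f^2=\ell_{\m{r},0}^2\sigma_f^2+(\ell_{f,0}^2+\sigma_f^2)\sigma_{\m{r},1}^2$. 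The main obstacle is precisely this variance step: a blunt application of Cauchy--Schwarz or Young's inequality to $\m{h}_i-\bar{\m{h}}_i$ injects spurious multiplicative constants, so the sharp identity hinges on using $\zeta\perp\xi$ to kill the cross terms; once that is in place, the remaining estimates are routine compositions of the Lipschitz and moment bounds.
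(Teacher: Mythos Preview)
Your approach is essentially the one the paper intends: the paper does not give a proof but defers to \citep[Lemma~7]{chen2021closing}, and your direct specialization---closed form for $\m{y}^*(\m{x})$, trivial inverse Hessian, then product-rule Lipschitz bounds and an independence-based variance decomposition---is exactly that computation. The Lipschitz constants $L_\m{y},L_{\m{yx}},M_f,L_f$ and the moment bounds $\tilde\sigma_f^2,\tilde D_f^2$ all fall out as you describe.

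One point to tighten: you define $\bar{\nabla}f_i(\m{x},\m{y})=\nabla\m{y}^*(\m{x})^\top\nabla_\m{y}f_i(\m{y})$ with the \emph{global} Jacobian $\nabla\m{y}^*(\m{x})=\tfrac{1}{m}\sum_j\nabla\m{r}_j(\m{x})$, but then take the stochastic direction to be $\m{h}_i=\nabla\m{r}_i(\m{x};\zeta)^\top\nabla f_i(\m{y};\xi)$ with $\bar A=\nabla\m{r}_i(\m{x})$, i.e.\ the \emph{local} Jacobian. These two choices are not the same object in a heterogeneous federation, so the identity $\bar{\m{h}}_i=\bar{\nabla}f_i$ does not follow from your argument as written. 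In the actual \fedout instantiation for the compositional case the Jacobian factor and the gradient factor are mixed across clients (via $\m{p}_0$), and the paper's surrogate $\bar{\nabla}f_i$ uses the averaged Jacobian; you need to match whichever estimator you analyze to the corresponding surrogate. The constants you derive are correct regardless (the per-client bounds $\|\nabla\m{r}_i\|\le\ell_{\m{r},0}$ etc.\ control the averaged quantities just as well), but the unbiasedness claim and the variance decomposition should be stated for the same pair $(\m{h}_i,\bar{\nabla}f_i)$ throughout.
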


\subsection{Proof of Corrollary~\ref{thm:fednest:compos}}
\begin{proof}
By our assumption $T=1$. Let $ \bar{\alpha}=1$ and  $\eta=1/L_{\m{yx}}$.  From \eqref{eq.step-cond-3} and \eqref{eqn2:dec:lyap}, we obtain
\begin{subequations}\label{eqn:cons:steps:minimax}
\begin{equation}
    \alpha_k=\min \left\{\bar\alpha_1, \bar\alpha_2, \bar\alpha_3,\frac{1}{\sqrt{K}}\right\},~~~~~~~\beta_k=\bar{\beta}\alpha_k,
\end{equation}
 where 
\begin{equation}
\begin{aligned}
\bar{\beta}&=\frac{1}{\mu_g} \left(11  \ell_{f,1} \ell_{\m{r},0}^2 +\tilde{D}_f^2\bar\alpha_1 + \frac{\ell_{f,1}  \ell_{\m{r},0}^2 \bar\alpha_1}{2}\right),\\
\bar\alpha_1&=\frac{1}{2 \ell_{f,0}\ell_{\m{r},1} +6  \ell_{f,1} \ell_{\m{r},0}^2+2\ell_{f,1}\ell_{\m{r},1}^2},
~~~\bar\alpha_2= \frac{1}{8\ell_{\m{r},0} \bar{\beta}}, 
~~~\bar\alpha_3= \frac{1}{216(\ell_{\m{r},0}\ell_{f,1})^2+5\ell_{\m{r},0}\ell_{f,1}}. \\
\end{aligned}
\end{equation}
\end{subequations}
Then, using \eqref{eqn:dec:final:bilevel}, we obtain
\begin{equation}\label{eqn:final:compos}
\begin{aligned}
\frac{1}{K}\sum_{k=0}^{K-1}\mb{E}\left[\|\nabla f(\m{x}^k)\|^2\right]
    &=
     {\cal O}\left(\frac{1}{\sqrt{K}}\right).
\end{aligned}
\end{equation}
This completes the proof.
\end{proof}


\section{Proof for Federated Single-Level Optimization}\label{sec:app:sing}
Next, we re-derive Lemmas~\ref{lem:dec} and \ref{thm4:lemma2} for single-level nonconvex FL under Assumptions~\ref{assu:s:f} and \ref{assu:s:bound:var}. 

\begin{lemma}[Counterpart of Lemma~\ref{lem:dec}]\label{lem:s:dec}
Suppose Assumptions~\ref{assu:s:f} and \ref{assu:s:bound:var} hold. Further, assume $\tau_i \geq 1$ and $\alpha_i=\alpha/\tau_i$, for all $ i \in \mathcal{S}$ and some positive constant $\alpha$. Then, \fedout guarantees:
\begin{equation}
\begin{aligned}
\mb{E}\left[f(\m{x}^{+})\right] - \mb{E}\left[f(\m{x})\right]& \leq - \frac{\alpha}{2} \left(1- \alpha L_f\right) \mb{E}\left[\left\|\frac{1}{m}\sum\limits_{i=1}^{m} \frac{1}{\tau_i}\sum\limits_{\nu=0}^{\tau_i-1}  \nabla f_i(\m{x}_{i,\nu})\right\|^2\right]\\
   & - \frac{\alpha}{2} \mb{E}\left[\| \nabla f(\m{x})\|^2\right] + \frac{ \alpha L_f^2 }{ 2 m} \sum\limits_{i=1}^{m}   \frac{1}{\tau_i} \sum\limits_{\nu=0}^{\tau_i-1}  \mb{E}\left[\|\m{x}_{i,\nu} -\m{x}\|^2\right] +  \frac{\alpha^2 L_f}{2} \sigma_f^2.
\end{aligned}
\label{eqn0:lem:s:dec}
\end{equation}
\end{lemma}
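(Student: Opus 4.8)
The plan is to specialize the proof of the bilevel descent lemma (Lemma~\ref{lem:dec}) to the single-level reduction \eqref{fedsingle:prob2}, where two simplifications occur: the hypergradient bias satisfies $b=0$ and there is no inner variable, so the surrogate gradient coincides with the true gradient. Concretely, in the single-level setting \fedout runs with $\funco_i=\nabla f_i$ and no indirect component, so each local step uses the \textsc{FedSVRG} direction $\m{h}_{i,\nu}=\nabla f_i(\m{x}_{i,\nu};\xi_{i,\nu})-\nabla f_i(\m{x};\xi_{i,\nu})+\m{h}$ with $\m{h}=|\mc{S}|^{-1}\sum_{i}\nabla f_i(\m{x};\xi_{i})$. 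Exactly as in the derivation of \eqref{eqn1:lem:dec}, the choice $\alpha_i=\alpha/\tau_i$ forces the control-variate terms $\m{h}$ and $-\nabla f_i(\m{x};\cdot)$ to cancel upon server aggregation, leaving $\m{x}^{+}=\m{x}-\tfrac{\alpha}{m}\sum_{i}\tfrac{1}{\tau_i}\sum_{\nu}\nabla f_i(\m{x}_{i,\nu};\xi_{i,\nu})$. By Assumption~\ref{assu:s:bound:var}, $\mb{E}[\nabla f_i(\m{x}_{i,\nu};\xi_{i,\nu})\mid\mc{F}_{i,\nu-1}]=\nabla f_i(\m{x}_{i,\nu})$, so the single-level analogue of $\bar{\m{h}}_i$ is precisely $\nabla f_i(\m{x}_{i,\nu})$, which is why the averaged local gradient $\m{G}:=\tfrac{1}{m}\sum_i\tfrac{1}{\tau_i}\sum_\nu\nabla f_i(\m{x}_{i,\nu})$ appears in the statement.

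With this in hand I would follow the two-term decomposition of Lemma~\ref{lem:dec}. Applying $L_f$-smoothness of $f$ (Assumption~\ref{assu:s:f} makes $\nabla f$ $L_f$-Lipschitz) gives
\begin{equation*}
\mb{E}[f(\m{x}^{+})]-\mb{E}[f(\m{x})]\leq \mb{E}[\langle \m{x}^{+}-\m{x},\nabla f(\m{x})\rangle]+\tfrac{L_f}{2}\mb{E}[\|\m{x}^{+}-\m{x}\|^2].
\end{equation*}
For the first-order term I would use the tower property to replace the stochastic step by its conditional mean $-\alpha\m{G}$, and then the polarization identity $-\langle a,b\rangle=\tfrac12(\|a-b\|^2-\|a\|^2-\|b\|^2)$ with $a=\m{G}$, $b=\nabla f(\m{x})$, exactly mirroring \eqref{eqn5:lem:dec}. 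Since $\nabla f(\m{x})=\tfrac{1}{m}\sum_i\tfrac{1}{\tau_i}\sum_\nu\nabla f_i(\m{x})$, the cross term becomes $\m{G}-\nabla f(\m{x})=\tfrac{1}{m}\sum_i\tfrac{1}{\tau_i}\sum_\nu(\nabla f_i(\m{x}_{i,\nu})-\nabla f_i(\m{x}))$, and Jensen's inequality (Lemma~\ref{lem:Jens}) together with $L_f$-Lipschitzness yields $\|\m{G}-\nabla f(\m{x})\|^2\leq \tfrac{L_f^2}{m}\sum_i\tfrac{1}{\tau_i}\sum_\nu\|\m{x}_{i,\nu}-\m{x}\|^2$. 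This is the step where the simplifications bite: the terms $3b^2$ and $M_f^2\mb{E}\|\m{y}^{+}-\m{y}^\star(\m{x})\|^2$ from \eqref{eqn6+:lem:dec} drop out (because $b=0$ and there is no inner variable), and $M_f$ is replaced by $L_f$ since the surrogate gradient is the genuine $L_f$-Lipschitz gradient. Altogether the first-order term contributes $-\tfrac{\alpha}{2}\mb{E}\|\m{G}\|^2-\tfrac{\alpha}{2}\mb{E}\|\nabla f(\m{x})\|^2+\tfrac{\alpha L_f^2}{2m}\sum_i\tfrac{1}{\tau_i}\sum_\nu\mb{E}\|\m{x}_{i,\nu}-\m{x}\|^2$.

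For the second-order term I would split $\nabla f_i(\m{x}_{i,\nu};\xi_{i,\nu})$ into its conditional mean $\nabla f_i(\m{x}_{i,\nu})$ plus a conditionally zero-mean noise, and invoke Lemma~\ref{lem:rand:zer} as in \eqref{eqn6:lem:dec}; the per-increment variance is bounded by $\sigma_f^2$ directly from Assumption~\ref{assu:s:bound:var} (the extra variance contributions hidden in $\tilde\sigma_f^2$ come from the Hessian/IHGP estimation and vanish in the single-level case), so $\mb{E}\|\m{x}^{+}-\m{x}\|^2\leq \alpha^2(\mb{E}\|\m{G}\|^2+\sigma_f^2)$. Adding the two contributions collapses the $\|\m{G}\|^2$ coefficients into $-\tfrac{\alpha}{2}(1-\alpha L_f)$ and produces the stated inequality \eqref{eqn0:lem:s:dec}. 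The only genuinely delicate point is justifying the control-variate cancellation at aggregation and the associated martingale-difference bookkeeping that makes the aggregated variance collapse to $\sigma_f^2$ rather than a sum of per-step variances; but this is already established in the bilevel argument (Lemma~\ref{lem:rand:zer}), and the present proof is strictly easier since $b=0$ and the coupling to the inner problem is absent.
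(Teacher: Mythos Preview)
Your proposal is correct and follows essentially the same route as the paper's own proof: specialize the bilevel descent lemma to the single-level setting (so $b=0$, no inner variable, and $M_f$ becomes $L_f$), use the control-variate cancellation to rewrite $\m{x}^{+}$, apply $L_f$-smoothness, decompose the first-order term via the polarization identity and bound the cross term $\|\m{G}-\nabla f(\m{x})\|^2$ by Jensen plus Lipschitzness, and handle the second-order term by splitting off the martingale-difference noise and invoking Lemma~\ref{lem:rand:zer}. This is exactly the structure of the paper's argument (equations \eqref{eqn2:s:lem:dec}--\eqref{eqn6:s:lem:dec}).
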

\begin{proof}
By applying  Algorithm~\ref{alg:fedout} to the single-level optimization problem \eqref{fedsingle:prob2}, we get 
\begin{align*}
\m{x}_{i,0}=\m{x}~~\forall i\in\mathcal{S},~~~~\m{x}^{+}&=\m{x}- \frac{1}{m}\sum\limits_{i=1}^{m} \alpha_i \sum\limits_{\nu=0}^{\tau_i-1} \m{h}_i(\m{x}_{i,\nu}) - \m{h}_i(\m{x})  +\m{h}(\m{x})\\  
&=\m{x}- \frac{1}{m}\sum\limits_{i=1}^{m} \alpha_i \sum\limits_{\nu=0}^{\tau_i-1} \m{h}_i(\m{x}_{i,\nu}),
\end{align*}    
where 
\begin{align*}
\m{h}_i(\m{x}_{i,\nu})=\nabla f_i(\m{x}_{i,\nu};\xi_{i,\nu}),~~~~\m{h}_i(\m{x})=\nabla f_i(\m{x};\xi_{i,\nu}), ~~~~\textnormal{and}~~~~\m{h}(\m{x})= 1/m\sum_{i=1}^m \nabla f_i(\m{x};\xi_{i}).
\end{align*}
This together with  Assumption~\ref{assu:s:f} implies that 
\begin{equation}\label{eqn2:s:lem:dec}
    \begin{aligned}
\mb{E}\left[f(\m{x}^{+})\right] - \mb{E} \left[f(\m{x}) \right]
    \leq &   \mb{E} \left[\langle \m{x}^{+} - \m{x}, \nabla f(\m{x}) \rangle\right] + \frac{L_f}{2} \mb{E} \left[\Vert \m{x}^{+}-\m{x} \Vert^2 \right]\\
    = &- \mb{E}\left[ \left\langle \frac{1}{m}\sum\limits_{i=1}^{m} \alpha_i \sum\limits_{\nu=0}^{\tau_i-1} \m{h}_i(\m{x}_{i,\nu}) , \nabla f(\m{x}) \right\rangle \right]\\
    & +\frac{L_f}{2}  \mb{E}\left[ \left\| \frac{1}{m}\sum\limits_{i=1}^{m} \alpha_i \sum\limits_{\nu=0}^{\tau_i-1} \m{h}_i(\m{x}_{i,\nu})\right\|^2\right]. 
    \end{aligned}
\end{equation}
For the first term on the RHS of~\eqref{eqn2:s:lem:dec}, we obtain
\begin{equation}
\begin{aligned}
 - \mb{E}\left[ \left\langle \frac{1}{m}\sum\limits_{i=1}^{m} \alpha_i \sum\limits_{\nu=0}^{\tau_i-1}  \m{h}_i(\m{x}_{i,\nu}), \nabla f(\m{x}) \right\rangle \right] =& - \mb{E}\left[\frac{1}{m}\sum\limits_{i=1}^{m} \frac{\alpha}{\tau_i} \sum\limits_{\nu=0}^{\tau_i-1} \mb{E}\left[ \left\langle \m{h}_i(\m{x}_{i,\nu}), \nabla f(\m{x}) \right\rangle \mid {\cal F}_{i,\nu-1}  \right]\right]\\
 =& - \frac{\alpha}{2} \mb{E}\left[\left\|  \frac{1}{m}\sum\limits_{i=1}^{m} \frac{1}{\tau_i} \sum\limits_{\nu=0}^{\tau_i-1} \nabla f_i(\m{x}_{i,\nu}) \right\|^2\right]- \frac{\alpha}{2} \mb{E}\left[\left\| \nabla f(\m{x})\right\|^2\right]\\
&+ \frac{\alpha}{2} \mb{E}\left[ \left\| \frac{1}{m}\sum\limits_{i=1}^{m} \frac{1}{\tau_i} \sum\limits_{\nu=0}^{\tau_i-1} \nabla f_i(\m{x}_{i,\nu})-\nabla f(\m{x})\right\|^2\right], \\
\leq& - \frac{\alpha}{2} \mb{E}\left[\left\|  \frac{1}{m}\sum\limits_{i=1}^{m} \frac{1}{\tau_i} \sum\limits_{\nu=0}^{\tau_i-1} \nabla f_i(\m{x}_{i,\nu}) \right\|^2\right]- \frac{\alpha}{2} \mb{E}\left[  \left\| \nabla f(\m{x})\right\|^2\right]\\
&+ \frac{\alpha L_f^2 }{2m} \sum\limits_{i=1}^{m}   \frac{1}{\tau_i} \sum\limits_{\nu=0}^{\tau_i-1}  \mb{E}\left[\left\|\m{x}_{i,\nu} -\m{x}\right\|^2\right].
\end{aligned}
\label{eqn5:s:lem:dec}
\end{equation} 
Here, the first equality follows from the law of total expectation; the second equality uses $\mb{E}\left[\nabla f_i(\m{x}_{i,\nu})\right]= \mb{E}\left[\mb{E}\left[\m{h}_i(\m{x}_{i,\nu})|{\cal F}_{i,\nu-1}\right]\right]$ and the fact that 2 $\m{a}^\top \m{b} = \|\m{a}\|^2 + \|\m{b}\|^2 - \|\m{a}-\m{b}\|^2$; and the last inequality is obtained from Assumption~\ref{assu:s:f}.

For the second term on the RHS of~\eqref{eqn2:s:lem:dec}, Assumption~\ref{assu:s:bound:var} together with Lemma~\ref{lem:Jens} gives 
\begin{equation}
 \begin{aligned}
\mb{E}\left[ \left\| \frac{1}{m}\sum\limits_{i=1}^{m} \alpha_i \sum\limits_{\nu=0}^{\tau_i-1} \m{h}_i(\m{x}_{i,\nu})\right\|^2 \right]&= \alpha^2 \mb{E}\left[  \left\|\frac{1}{m}\sum\limits_{i=1}^{m} \frac{1}{\tau_i} \sum\limits_{\nu=0}^{\tau_i-1} \left( \m{h}_i(\m{x}_{i,\nu})- \nabla f_i(\m{x}_{i,\nu}) + \nabla f_i(\m{x}_{i,\nu}) \right) \right\|^2 \right]\\
 &\leq \alpha^2 \mb{E}\left[\left\|\frac{1}{m}\sum\limits_{i=1}^{m} \frac{1}{\tau_i}\sum\limits_{\nu=0}^{\tau_i-1}  \nabla f_i(\m{x}_{i,\nu})\right\|^2\right]+  \alpha^2 \sigma_f^2.
    \end{aligned}
\label{eqn6:s:lem:dec}
\end{equation}
Plugging \eqref{eqn6:s:lem:dec} and \eqref{eqn5:s:lem:dec} into \eqref{eqn2:s:lem:dec} gives the desired result.
\end{proof}

The following lemma provides a bound on the \textit{drift} of each $\m{x}_{i,\nu}$ from $\m{x}$ for stochastic nonconvex singl-level problems. It should be mentioned that similar drifting bounds are provided in the literature under either strong convexity~\citep{mitra2021linear} and/or bounded dissimilarity assumptions~\citep{wang2020tackling,reddi2020adaptive,li2020federated}. 

\begin{lemma}[Counterpart of Lemma~\ref{thm4:lemma2}]\label{thm4:s:lemma2}
Suppose Assumptions~\ref{assu:s:f} and \ref{assu:s:bound:var} hold. Further, assume $\tau_i \geq 1$ and $\alpha_i=\alpha/\tau_i, \forall i \in \mathcal{S}$, where $\alpha \leq 1/(3 L_f)$. Then, for all $\nu \in \{0,\ldots, \tau_i-1\}$, \fedout gives
\begin{align}\label{eqn1:lemm:s:drift}
 \mb{E}\left[\left\|\m{x}_{i,\nu}-\m{x}\right\|^2\right]\leq 12 \tau_i^2 \alpha_i^2  \mathbb{E}\left[\left\|\nabla  f(\m{x})\right\|^2\right] +  27 \tau_i \alpha_i^2 \sigma_f^2.
\end{align}
\end{lemma}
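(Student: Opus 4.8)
The plan is to mirror the proof of Lemma~\ref{thm4:lemma2} (the bilevel drifting lemma) but with the considerable simplifications afforded by the single-level setting: here $\mathbf{h}_i(\mathbf{x}_{i,\nu})=\nabla f_i(\mathbf{x}_{i,\nu};\xi_{i,\nu})$ is an \emph{unbiased} estimate of $\nabla f_i(\mathbf{x}_{i,\nu})$ (so there is no Neumann-series bias term $b$), and the relevant Lipschitz constant is simply $L_f$ rather than the composite $M_f$. First I would observe that the claim is trivial for $\tau_i=1$ since $\mathbf{x}_{i,0}=\mathbf{x}$, so I take $\tau_i>1$. Following the decomposition in \eqref{eqn1+:lemm:drift}, I split the local search direction $\mathbf{h}_i(\mathbf{x}_{i,\nu})-\mathbf{h}_i(\mathbf{x})+\mathbf{h}(\mathbf{x})$ into a zero-mean stochastic part and a deterministic gradient part. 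Concretely, define $\mathbf{w}_{i,\nu}:=\nabla f_i(\mathbf{x}_{i,\nu};\xi_{i,\nu})-\nabla f_i(\mathbf{x}_{i,\nu})-\big(\nabla f_i(\mathbf{x};\xi_{i,\nu})-\nabla f_i(\mathbf{x})\big)+\big(\mathbf{h}(\mathbf{x})-\nabla f(\mathbf{x})\big)$ and $\mathbf{z}_{i,\nu}:=\nabla f_i(\mathbf{x}_{i,\nu})-\nabla f_i(\mathbf{x})+\nabla f(\mathbf{x})$, so that their sum equals the search direction and $\mathbb{E}[\mathbf{w}_{i,\nu}\mid\mathcal{F}_{i,\nu-1}]=0$.

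The second step is the per-step recursion. Writing $\mathbf{x}_{i,\nu+1}-\mathbf{x}=(\mathbf{x}_{i,\nu}-\mathbf{x})-\alpha_i(\mathbf{z}_{i,\nu}+\mathbf{w}_{i,\nu})$ and taking expectations, the cross term $\langle\,\cdot\,,\mathbf{w}_{i,\nu}\rangle$ vanishes by the conditional-mean-zero property (Lemma~\ref{lem:rand:zer}), exactly as in \eqref{eqn2:lemm:drift}, leaving
\begin{equation*}
\mathbb{E}\!\left[\|\mathbf{x}_{i,\nu+1}-\mathbf{x}\|^2\right]=\mathbb{E}\!\left[\|\mathbf{x}_{i,\nu}-\mathbf{x}-\alpha_i\mathbf{z}_{i,\nu}\|^2\right]+\alpha_i^2\,\mathbb{E}\!\left[\|\mathbf{w}_{i,\nu}\|^2\right].
\end{equation*}
I then bound $\mathbb{E}[\|\mathbf{w}_{i,\nu}\|^2]\le 9\sigma_f^2$ using Lemma~\ref{lem:Jens} (the three-term Jensen bound) together with Assumption~\ref{assu:s:bound:var}, and bound $\mathbb{E}[\|\mathbf{z}_{i,\nu}\|^2]\le 3\big(L_f^2\|\mathbf{x}_{i,\nu}-\mathbf{x}\|^2+\|\nabla f(\mathbf{x})\|^2\big)$ using the three-term split and $L_f$-Lipschitzness of $\nabla f_i$ and $\nabla f$ (Assumption~\ref{assu:s:f}); note the absence of any $\|\mathbf{y}^+-\mathbf{y}^\star\|^2$ or $b^2$ terms. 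Applying the relaxed triangle inequality (Lemma~\ref{lem:trig}) with parameter $2\tau_i-1$ to the first summand, as in \eqref{eqn4:lemm:drift}, yields
\begin{equation*}
\mathbb{E}\!\left[\|\mathbf{x}_{i,\nu+1}-\mathbf{x}\|^2\right]\le\Big(1+\tfrac{1}{2\tau_i-1}+6\tau_i\alpha_i^2 L_f^2\Big)\mathbb{E}\!\left[\|\mathbf{x}_{i,\nu}-\mathbf{x}\|^2\right]+6\tau_i\alpha_i^2\,\mathbb{E}\!\left[\|\nabla f(\mathbf{x})\|^2\right]+9\alpha_i^2\sigma_f^2.
\end{equation*}

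The third step uses the stepsize restriction $\alpha\le 1/(3L_f)$, i.e.\ $\alpha_i\le 1/(3L_f\tau_i)$, to absorb the $6\tau_i\alpha_i^2 L_f^2$ term into the geometric factor, giving a contraction coefficient at most $1+\tfrac{1}{\tau_i-1}$ (matching the constant tracking of \eqref{eqn5:lemm:drift}). Finally I unroll the recursion from $\mathbf{x}_{i,0}=\mathbf{x}$ and apply the geometric-sum bound \eqref{eqn:sum:geom}, which shows $\sum_{j=0}^{\nu-1}(1+\tfrac{1}{\tau_i-1})^j<3\tau_i$; multiplying the accumulated per-step contributions $6\tau_i\alpha_i^2\|\nabla f(\mathbf{x})\|^2+9\alpha_i^2\sigma_f^2$ by this factor delivers the claimed bound $12\tau_i^2\alpha_i^2\,\mathbb{E}[\|\nabla f(\mathbf{x})\|^2]+27\tau_i\alpha_i^2\sigma_f^2$. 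I do not expect any genuine obstacle here: the argument is a strict specialization of the already-proven Lemma~\ref{thm4:lemma2}, and the main point of care is purely bookkeeping—verifying that the constants $6$, $9$, and the geometric factor $3$ multiply out to exactly $12$ and $27$, and confirming that $\alpha\le 1/(3L_f)$ is precisely what is needed to keep the per-round amplification below $\exp(1)$.
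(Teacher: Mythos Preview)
Your plan follows the paper's proof almost exactly, and the decomposition into a zero-mean noise piece $\mathbf{w}_{i,\nu}$ and a deterministic drift piece $\mathbf{z}_{i,\nu}$ (the paper calls it $\mathbf{v}_{i,\nu}$) is precisely what the paper does. The one place where you diverge is the bound on $\mathbf{z}_{i,\nu}$, and unfortunately the looser constant you chose breaks both the arithmetic and the absorption step.

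Specifically, $\mathbf{z}_{i,\nu}=\bigl(\nabla f_i(\mathbf{x}_{i,\nu})-\nabla f_i(\mathbf{x})\bigr)+\nabla f(\mathbf{x})$ is a sum of \emph{two} terms, so the sharp inequality is $\|\mathbf{z}_{i,\nu}\|^2\le 2\bigl(L_f^2\|\mathbf{x}_{i,\nu}-\mathbf{x}\|^2+\|\nabla f(\mathbf{x})\|^2\bigr)$, giving a coefficient $4\tau_i\alpha_i^2$ (not $6\tau_i\alpha_i^2$) in the recursion; this is what the paper uses. With your factor $6$, two things go wrong. First, $6\times 3\tau_i=18\tau_i$, so the final bound would read $18\tau_i^2\alpha_i^2\,\mathbb{E}[\|\nabla f(\mathbf{x})\|^2]$, not $12\tau_i^2\alpha_i^2$ as claimed. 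Second, and more seriously, the absorption into the geometric factor actually fails: under $\alpha\le 1/(3L_f)$ you have $6\tau_i\alpha_i^2 L_f^2\le \tfrac{2}{3\tau_i}$, but $\tfrac{1}{\tau_i-1}-\tfrac{1}{2\tau_i-1}=\tfrac{\tau_i}{(\tau_i-1)(2\tau_i-1)}\sim \tfrac{1}{2\tau_i}$, so for $\tau_i\ge 6$ the inequality $1+\tfrac{1}{2\tau_i-1}+6\tau_i\alpha_i^2 L_f^2\le 1+\tfrac{1}{\tau_i-1}$ is false. With the paper's coefficient $4$ you instead need $\tfrac{4}{9\tau_i}\le \tfrac{\tau_i}{(\tau_i-1)(2\tau_i-1)}$, which does hold for all $\tau_i\ge 2$, and then $4\times 3\tau_i=12\tau_i$ and $9\times 3\tau_i=27\tau_i$ give exactly the stated constants. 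So the fix is simply to replace your three-term Jensen bound on $\mathbf{z}_{i,\nu}$ by the two-term one.
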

\begin{proof}
The result trivially holds for $\tau_i=1$. Similar to what is done in
the proof of Lemma~\ref{thm4:lemma2}, let $\tau_i>1$ and define
\begin{equation}\label{eqn1+:s:lemm:drift}
\begin{aligned}
\m{v}_{i,\nu}&:=\nabla f_i(\m{x}_{i,\nu})- \nabla f_i(\m{x}) + \nabla f (\m{x}),\\
\m{w}_{i,\nu}&:= \m{h}_i(\m{x}_{i,\nu})-\nabla f_i(\m{x}_{i,\nu})+  \nabla f_i(\m{x})-\m{h}_i(\m{x})+\m{h}(\m{x})- \nabla f(\m{x}).
\end{aligned}
\end{equation}

From Algorithm~\ref{alg:fedout}, for each $i\in\mathcal{S}$, and $\forall  \nu \in\{0,\ldots,\tau_i-1\}$, we obtain
\begin{equation*}
\begin{aligned}
        \m{x}_{i,\nu+1}-\m{x}&=\m{x}_{i,\nu}-\m{x}-\alpha_i \left(\m{h}_i(\m{x}_{i,\nu})-\m{h}_i(\m{x})+\m{h}(\m{x})\right)\\
          &= \m{x}_{i,\nu}-\m{x}-\alpha_i \left(\m{v}_{i,\nu}+\m{w}_{i,\nu}\right),
    \end{aligned}
\end{equation*}
which implies that 
\begin{equation}\label{eqn2:s:lemm:drift}
\begin{aligned}
\mb{E}\left[\|\m{x}_{i,\nu+1}-\m{x}\|^2\right]
&=\mb{E}\left[\|\m{x}_{i,\nu}-\m{x}-\alpha_i \m{v}_{i,\nu}\|^2\right]+\alpha_i^2\mb{E}\left[\|{\m{w}_{i,\nu}}\|^2\right]\\&-2\mb{E}\left[\mb{E}\left[\langle \m{x}_{i,\nu}-\m{x}-\alpha_i\m{v}_{i,\nu},\alpha_i \m{w}_{i,\nu} \rangle\mid\mathcal{F}_{i,\nu-1}\right]\right] \\ 
&= \mb{E}\left[\|\m{x}_{i,\nu}-\m{x}-\alpha_i\m{v}_{i,\nu}\|^2\right]+\alpha_i^2\mb{E}\left[\|\m{w}_{i,\nu}\|^2\right].
\end{aligned}
\end{equation}
Here, the last equality uses Lemma~\ref{lem:rand:zer} since  $\mb{E}[\m{w}_{i,\nu}|\mathcal{F}_{i,\nu-1}]=0$.

From  Assumption~\ref{assu:s:bound:var} and Lemma~\ref{lem:Jens}, for $\m{w}_{i,\nu}$ defined in \eqref{eqn1+:s:lemm:drift}, we have
\begin{equation}
\label{eqn3b:s:lemm:drift}
\begin{aligned}
\mb{E}\left[\|\m{w}_{i,\nu}\|^2\right] &\leq 3   \mb{E}\left[\|\m{h}_i(\m{x}_{i,\nu})-\nabla f_i(\m{x}_{i,\nu})\|^2+\|\nabla f_i(\m{x})-\m{h}_i(\m{x})\|^2+\|\m{h}(\m{x})- \nabla  f  (\m{x})\|^2\right]\\
        & \leq 9\sigma_f^2.
    \end{aligned}
\end{equation}
Substituting  \eqref{eqn3b:s:lemm:drift} into \eqref{eqn2:s:lemm:drift}, we get
\begin{equation}
\label{eqn4:s:lemm:drift}
\begin{aligned}
\mb{E}\left[\|\m{x}_{i,\nu}-\m{x}-\alpha_i\m{v}_{i,\nu}\|^2\right] &\leq  \left(1+\frac{1}{2\tau_i-1}\right)\mb{E}\left[\|\m{x}_{i,\nu}-\m{x}\|^2\right]+ 2\tau_i \alpha_i^2 \mb{E}\left[\|\m{v}_{i,\nu}\|^2\right] +9 \alpha_i^2\sigma_f^2 \\
       &\leq  \left(1+\frac{1}{2\tau_i-1}+4 \tau_i \alpha_i^2 L_f^2\right)\mb{E}\left[\|\m{x}_{i,\nu}-\m{x}\|^2\right]
 + 4 \tau_i \alpha_i^2  \mb{E}\left[\|\nabla f(\m{x})\|^2\right] +9 \alpha_i^2\sigma_f^2\\
        &\leq  \left(1+\frac{1}{\tau_i-1}\right)\mb{E}\left[\|\m{x}_{i,\nu}-\m{x}\|^2\right]+ 4 \tau_i \alpha_i^2  \mb{E}\left[\|\nabla f(\m{x})\|^2\right] +9 \alpha_i^2\sigma_f^2.
    \end{aligned}
\end{equation}
Here, the first inequality follows from Lemma~\ref{lem:trig}; the second inequality uses Assumption~\ref{assu:s:f} and Lemma~\ref{lem:Jens};
and the last inequality follows by noting $\alpha_i=\alpha/\tau_i, \forall i \in \mathcal{S}$ and $\alpha \leq 1/(3 L_f)$.

Now, iterating equation \eqref{eqn4:s:lemm:drift} and using $\m{x}_{i,0}=\m{x}, \forall i\in\mathcal{S}$, we obtain
\begin{equation}
\label{eqn6:lemm:drift}
    \begin{aligned}
        \mb{E}\left[\|\m{x}_{i,\nu}-\m{x}\|^2\right]&\leq  \left(   4 \tau_i \alpha_i^2  \mb{E}\left[\|\nabla f(\m{x})\|^2\right] +9 \alpha_i^2\sigma_f^2 \right) \sum\limits_{j=0}^{\nu -1} \left(1+\frac{1}{\tau_i-1}\right)^j\\
        &\leq 12 \tau_i^2 \alpha_i^2  \mb{E}\left[\|\nabla f(\m{x})\|^2\right] +27\tau_i \alpha_i^2 \sigma_f^2,
    \end{aligned}
\end{equation}
where the second inequality uses \eqref{eqn:sum:geom}. This completes the proof.
\end{proof}
\subsection{Proof of Theorem~\ref{thm:fednest:s:level}}
\begin{proof}
Let $\bar{\alpha}_1:= 1/(3L_f(1 + 8 L_f))$. Note that by our assumption $\alpha_k \leq \bar{\alpha}_1$. Hence, the stepsize $\alpha_k$ satisfies the condition of Lemma~\ref{thm4:s:lemma2}, and we also have $ 6 L_f^2 \alpha_k^3 \leq \alpha_k^2/4 \leq  \alpha_k/4$. This together with Lemmas \ref{lem:s:dec} and \ref{thm4:s:lemma2} gives
\begin{equation}\label{eqn:dec:s:lyap}
\begin{aligned}
    \mb{E}\left[f(\m{x}^{k+1})\right] - \mb{E} \left[f(\m{x}^k) \right] & \leq  -\frac{\alpha_k}{2}\mb{E}[\|\nabla f(\m{x}^k)\|^2] + \frac{L_f^2 \alpha_k }{2m}\sum \limits_{i=1}^m \frac{1}{\tau_i}\sum \limits_{\nu=0}^{\tau_i-1}\mb{E}\Big[\|\m{x}_{i,\nu}^k-\m{x}^k\|^2\Big]
\\
   & - \frac{\alpha_k}{2} (1- \alpha_k L_f) \mathbb{E}\left[ \left\| \frac{1}{m}\sum\limits_{i=1}^{m} \frac{1}{\tau_i} \sum\limits_{\nu=0}^{\tau_i-1} \nabla f_i(\m{x}_{i,\nu}^k)\right\|^2 \right] + \frac{\alpha^2_k L_f}{2} \sigma_f^2 
   \\
   & \leq   -\frac{\alpha_k}{2}\mb{E}[\|\nabla f(\m{x}^k)\|^2] + \frac{L_f^2 \alpha_k }{2m}\sum \limits_{i=1}^m \frac{1}{\tau_i}\sum \limits_{\nu=0}^{\tau_i-1}\mb{E}\Big[\|\m{x}_{i,\nu}^k-\m{x}^k\|^2\Big]+  \frac{\alpha^2_k L_f}{2}  \sigma_f^2
\\
&\leq  - \frac{\alpha_k}{2}\mb{E}[\|\nabla f(\m{x}^k)\|^2] + 6 L_f^2 \alpha_k^3 \mb{E}[\|\nabla f(\m{x}^k)\|^2] + \left(\frac{27}{2}\alpha_k^3  L_f^2 +  \frac{\alpha^2_k L_f}{2} \right)\sigma_f^2  \\
& \leq   - \frac{\alpha_k}{4}\mb{E}[\|\nabla f(\m{x}^k)\|^2] +  (1+L_f)  \alpha^2_k  \sigma_f^2, 
\end{aligned}
\end{equation}
where the second and last inequalities follow from  \eqref{eqn:param:s:choice}. 

Summing \eqref{eqn:dec:s:lyap} over $k$ and using our choice of stepsize in \eqref{eqn:param:s:choice}, we obtain
\begin{equation}\label{eqn:dec:s:final:bilevel}
\begin{aligned}
    \frac{1}{K}\sum_{k=0}^{K-1}\mb{E}[\|\nabla f(\m{x}^k)\|^2] 
     &\leq \frac{4 \Delta_{f}}{K} \cdot  \min\left\{ \frac{1}{\bar{\alpha}_1},\frac{\sqrt{K}}{\bar{\alpha}}\right\} + 4 (1+L_f)  
      \sigma_{f}^2  \cdot \frac{\bar{\alpha}}{\sqrt{K}}  \\
 &\leq \left(\frac{4 \Delta_{f}}{ \bar{\alpha}_1}\right)\frac{1}{K} + \left( \frac{4\Delta_{f} }{\bar{\alpha}} + 4  (1+L_f) \bar{\alpha} \sigma_f^2 \right) \frac{1}{\sqrt{K}}, \\
 & \leq \mc{O}\left( \frac{ \Delta_f}{ \bar{\alpha}_1 K} + \frac{  \frac{\Delta_f}{\bar{\alpha}} + \bar{\alpha}\sigma_f^2 }{\sqrt{K}}\right),
\end{aligned}
\end{equation}
where $\Delta_{f}= f(\m{x}^0)-\mb{E}[f(\m{x}^K)]$.
\end{proof}

\section{Other Technical Lemmas}\label{sec:techn}
 We collect additional technical lemmas in this section.

\begin{lemma}\label{lem:Jens}
For any set of vectors $\{\m{x}_i\}_{i=1}^m$ with $\m{x}_i\in\mathbb{R}^d$, we have
 \begin{equation}
     \norm[\bigg]{ \sum\limits_{i=1}^{m} \m{x}_i}^2 \leq m \sum\limits_{i=1}^{m} {\Vert \m{x}_i \Vert}^2.
\label{eqn:Jens}
 \end{equation} 
 \end{lemma}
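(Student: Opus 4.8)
The plan is to recognize Lemma~\ref{lem:Jens} as a direct consequence of the Cauchy--Schwarz inequality applied to the all-ones weight vector. Concretely, I would first bound the norm of the sum by the sum of norms via the triangle inequality, and then convert the resulting $\ell_1$-type bound into the desired $\ell_2$-type bound by pairing the individual norms against a constant weight vector.

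First I would write, by the triangle inequality,
\[
\norm{\sum_{i=1}^m \m{x}_i} \le \sum_{i=1}^m \norm{\m{x}_i} = \sum_{i=1}^m 1\cdot\norm{\m{x}_i}.
\]
Next I would apply Cauchy--Schwarz to the two $m$-dimensional vectors $(1,\dots,1)$ and $(\norm{\m{x}_1},\dots,\norm{\m{x}_m})$, which gives
\[
\sum_{i=1}^m 1\cdot\norm{\m{x}_i} \le \Big(\sum_{i=1}^m 1^2\Big)^{1/2}\Big(\sum_{i=1}^m \norm{\m{x}_i}^2\Big)^{1/2} = \sqrt{m}\,\Big(\sum_{i=1}^m \norm{\m{x}_i}^2\Big)^{1/2}.
\]
Squaring the resulting chain of inequalities then yields $\norm{\sum_{i=1}^m \m{x}_i}^2 \le m\sum_{i=1}^m \norm{\m{x}_i}^2$, as claimed.

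An equivalent and equally short route is to invoke the convexity of the squared-norm map $\m{v}\mapsto\norm{\m{v}}^2$: Jensen's inequality yields $\norm{\frac{1}{m}\sum_{i=1}^m \m{x}_i}^2 \le \frac{1}{m}\sum_{i=1}^m \norm{\m{x}_i}^2$, and multiplying through by $m^2$ recovers the statement. Either derivation is entirely elementary, so there is no real obstacle here; the only point requiring minor care is to carry out the Cauchy--Schwarz step with the correct (all-ones) weighting, so that the factor $m$ emerges rather than a spurious $\sqrt{m}$ or $1$.
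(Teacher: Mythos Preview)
Your proof is correct. The paper actually states this lemma without proof, treating it as a standard technical fact; the label \texttt{lem:Jens} suggests the authors have the Jensen-inequality derivation in mind, which is exactly your second route, and your Cauchy--Schwarz argument is an equally valid alternative.
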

\begin{lemma}\label{lem:trig}
For any $\m{x},\m{y}\in\mathbb{R}^d$, the following holds for any $c >0$:
\begin{equation}
    {\Vert \m{x}+\m{y} \Vert}^2 \leq (1+c){\Vert \m{x} \Vert}^2 + \left(1+\frac{1}{c}\right){\Vert \m{y} \Vert}^2.
\label{eqn:triang}
\end{equation}
\end{lemma}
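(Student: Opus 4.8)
The plan is to reduce \eqref{eqn:triang} to the elementary scalar Young inequality after expanding the squared norm. First I would write out, using bilinearity of the inner product and $\|\m{v}\|^2=\langle\m{v},\m{v}\rangle$,
\[
\|\m{x}+\m{y}\|^2 = \|\m{x}\|^2 + 2\langle \m{x},\m{y}\rangle + \|\m{y}\|^2 .
\]
The two squared-norm terms already appear (with unit weight) on the right-hand side of the target bound, so the whole argument comes down to controlling the single cross term $2\langle \m{x},\m{y}\rangle$ and redistributing it into the $\|\m{x}\|^2$ and $\|\m{y}\|^2$ buckets.

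Next I would bound the cross term by Cauchy--Schwarz, $\langle \m{x},\m{y}\rangle \le \|\m{x}\|\,\|\m{y}\|$, and then apply the weighted arithmetic--geometric mean inequality $2ab \le c\,a^2 + c^{-1}b^2$ with $a=\|\m{x}\|$ and $b=\|\m{y}\|$. This weighted AM--GM step is itself immediate from expanding the nonnegative square $\bigl(\sqrt{c}\,\|\m{x}\| - c^{-1/2}\|\m{y}\|\bigr)^2 \ge 0$, and it is exactly here that the free parameter $c>0$ enters, making the split between the two terms tunable.

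Combining, I would substitute $2\langle \m{x},\m{y}\rangle \le c\|\m{x}\|^2 + c^{-1}\|\m{y}\|^2$ into the expansion and collect like terms to obtain $\|\m{x}+\m{y}\|^2 \le (1+c)\|\m{x}\|^2 + (1+c^{-1})\|\m{y}\|^2$, which is precisely \eqref{eqn:triang}. There is no genuine obstacle: the statement is the standard Young-type inequality used throughout the drift and descent estimates, and the only point requiring care is bookkeeping the asymmetry, keeping the weight $c$ attached to $\|\m{x}\|^2$ and $c^{-1}$ to $\|\m{y}\|^2$ so that the final constants match the stated form.
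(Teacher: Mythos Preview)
Your proof is correct and is the standard argument for this elementary inequality. The paper itself states Lemma~\ref{lem:trig} as an auxiliary technical result without proof, so there is nothing to compare against; your expansion-plus-Young approach is exactly what one would expect.
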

\begin{lemma}\label{lem:rand:zer}
For any set of independent, mean zero random variables $\{\m{x}_i\}_{i=1}^m$ with $\m{x}_i\in\mathbb{R}^d$, we have
 \begin{equation}
     \mb{E}\left[\norm[\bigg]{ \sum\limits_{i=1}^{m} \m{x}_i}^2\right] = \sum\limits_{i=1}^{m} \mb{E}\left[{\Vert \m{x}_i \Vert}^2\right].
\label{eqn:randJens}
 \end{equation} 
\end{lemma}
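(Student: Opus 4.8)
The plan is to prove this by directly expanding the squared Euclidean norm of the sum into a double sum of inner products, applying linearity of expectation, and then showing that all off-diagonal cross terms vanish. First I would write $\norm{\sum_{i=1}^m \m{x}_i}^2 = \sum_{i=1}^m \sum_{j=1}^m \inner{\m{x}_i}{\m{x}_j}$ using the definition of the Euclidean norm and bilinearity of the inner product, then take expectations of both sides and pull the expectation inside the finite double sum by linearity to obtain $\mb{E}\!\left[\norm{\sum_{i=1}^m \m{x}_i}^2\right] = \sum_{i=1}^m \sum_{j=1}^m \mb{E}\left[\inner{\m{x}_i}{\m{x}_j}\right]$.

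Next I would separate the diagonal terms ($i=j$) from the off-diagonal terms ($i\neq j$). The diagonal contributes exactly $\sum_{i=1}^m \mb{E}\left[\norm{\m{x}_i}^2\right]$, which is the desired right-hand side. The remaining task is to show that every off-diagonal term satisfies $\mb{E}\left[\inner{\m{x}_i}{\m{x}_j}\right]=0$ for $i\neq j$. Here I would invoke independence: for $i\neq j$, the random vectors $\m{x}_i$ and $\m{x}_j$ are independent, so the expectation of their inner product factors as $\mb{E}\left[\inner{\m{x}_i}{\m{x}_j}\right] = \inner{\mb{E}[\m{x}_i]}{\mb{E}[\m{x}_j]}$ (componentwise, $\mb{E}[\m{x}_{i,\ell}\,\m{x}_{j,\ell}] = \mb{E}[\m{x}_{i,\ell}]\,\mb{E}[\m{x}_{j,\ell}]$ and summing over coordinates $\ell$). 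Since each $\m{x}_i$ has mean zero, $\mb{E}[\m{x}_i]=\m{0}$ and $\mb{E}[\m{x}_j]=\m{0}$, so this inner product is $\inner{\m{0}}{\m{0}}=0$. Substituting back leaves only the diagonal sum, which completes the proof.

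This argument is elementary and I anticipate no genuine obstacle; the only point requiring mild care is being explicit that independence (not merely pairwise uncorrelatedness is assumed, so I can safely factor the expectation of the inner product through the tensor/product structure and, combined with the mean-zero hypothesis, kill the cross terms. I would also note that all sums are finite, so the interchange of expectation and summation is justified without any integrability subtlety beyond each $\m{x}_i$ being square-integrable, which is implicit in the statement since $\mb{E}\left[\norm{\m{x}_i}^2\right]$ appears on the right-hand side.
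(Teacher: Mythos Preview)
Your proof is correct. The paper states this lemma without proof as an elementary auxiliary fact, so there is no argument to compare against; the expansion into a double sum of inner products and elimination of cross terms via independence and the mean-zero hypothesis is exactly the standard derivation.
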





\section{Additional Experimental Results}\label{sec:app:experim}
In this section, we first provide the detailed parameters in Section~\ref{sec:numerics} and then discuss more experiments. In Section~\ref{sec:numerics}, our federated algorithm implementation is based on \cite{shaoxiong}, both hyper-representation and loss function tuning use batch size $64$ and Neumann series parameter $N=5$. We conduct 5 SGD/SVRG epoch of local updates in \fedinn and $\tau=1$ in \fedout. In \fednest, we use $T=1$, have 100 clients in total, and 10 clients are selected in each \fednest epoch.  
\begin{figure}[h]
    \begin{subfigure}{0.5\textwidth}
    \centering
        \begin{tikzpicture}
        \node at (0,0) [scale=0.5]{\includegraphics[]{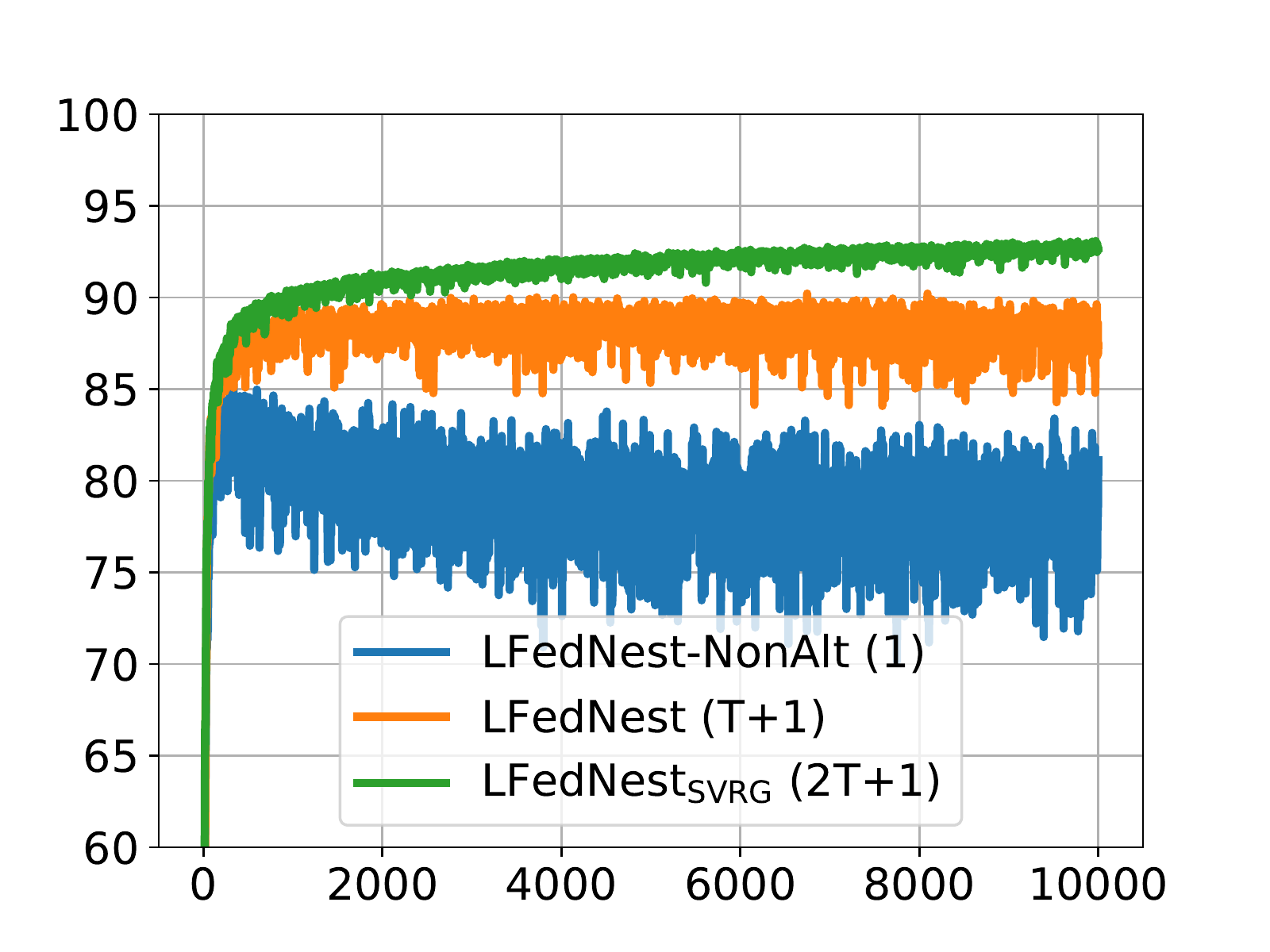}};
        \node at (-3.8,0) [rotate=90] {Test accuracy};
        \node at (0,-3.2) [] {Epoch};
        \end{tikzpicture}\caption{The test accuracy w.r.t to the algorithm epochs.}\label{fig:app:noniid_comm_a}
    \end{subfigure}
    \begin{subfigure}{0.5\textwidth}
    \centering
        \begin{tikzpicture}
        \node at (0,0) [scale=0.5]{\includegraphics[]{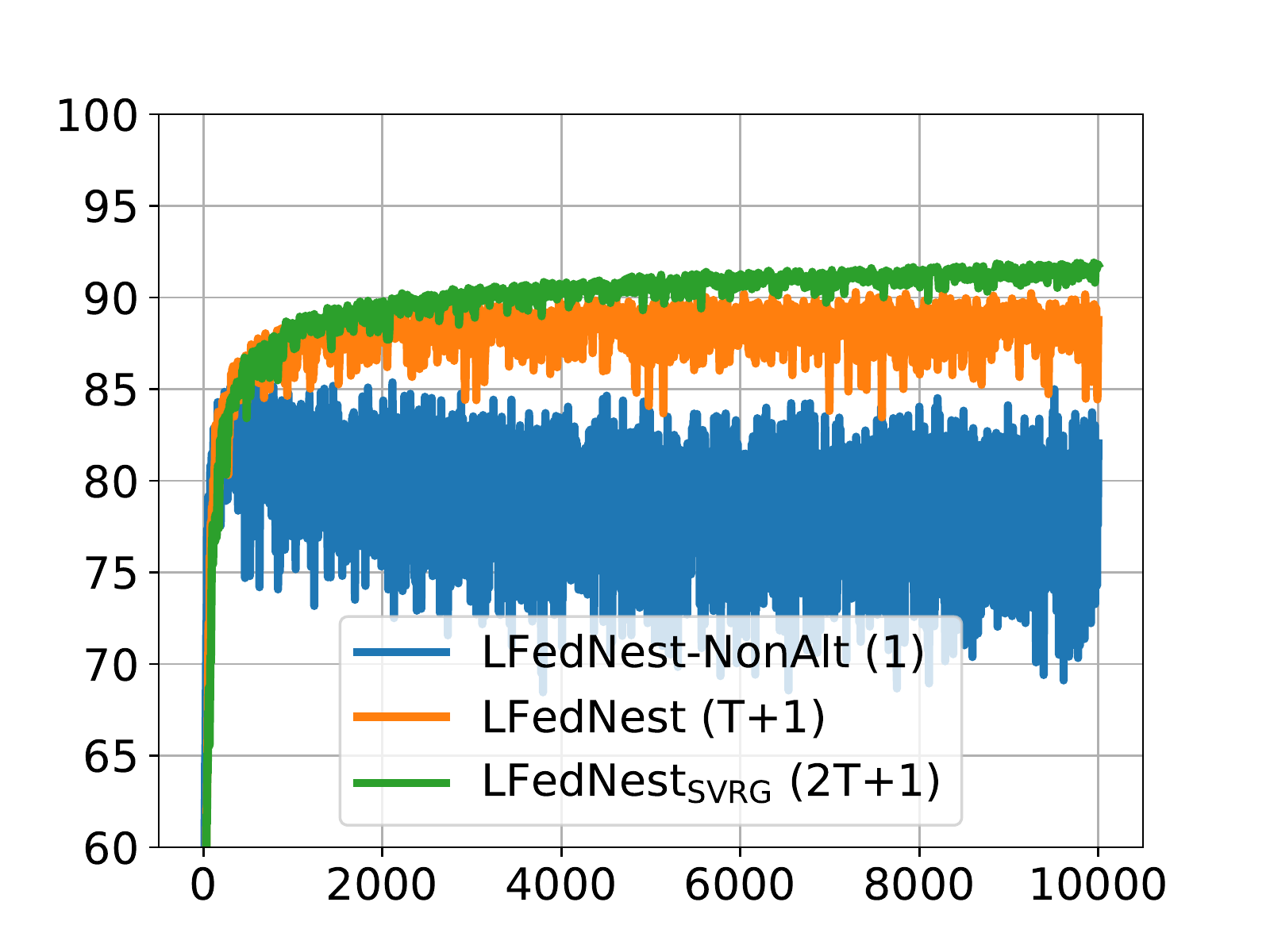}};
        \node at (0,-3.2) [] {\# of communications};
        \end{tikzpicture}\caption{The test accuracy w.r.t to the number of communications.}\label{fig:app:noniid_comm_b}
    \end{subfigure}\caption{Hyper representation experiment comparing \lfedblo, $\lfedblo_{\textnormal{\tiny{SVRG}}}$ and \textsc{LFedNest-NonAlt} on \textbf{non-i.i.d} dataset. The number in parentheses corresponds to communication rounds shown in Table~\ref{tabl:supp:methods}.}
    \label{fig:app:noniid_comm}
\end{figure}
\begin{figure}[h]
    \begin{subfigure}{0.50\textwidth}
    \centering
        \begin{tikzpicture}
        \node at (0,0) {\includegraphics[scale=0.5]{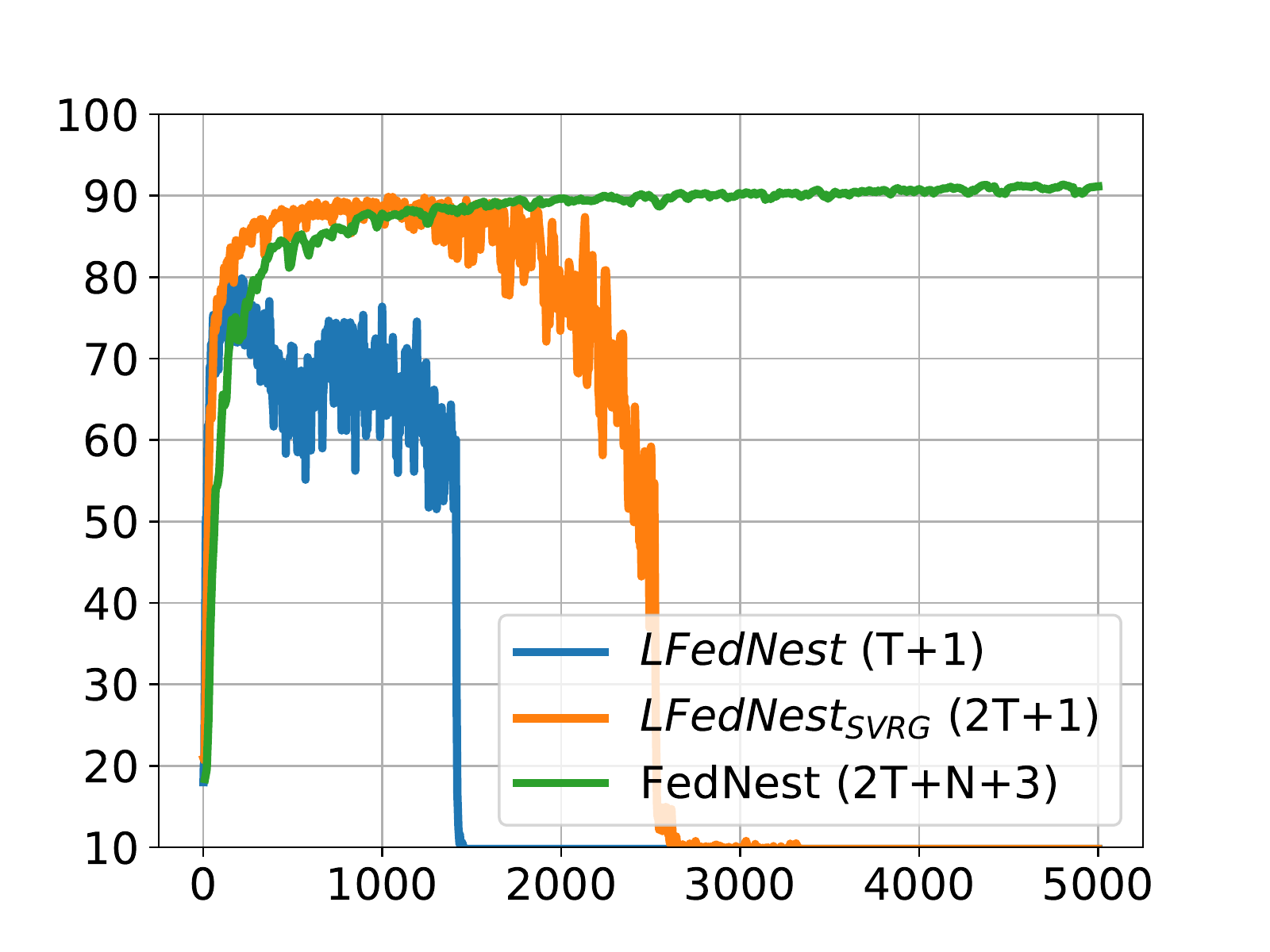}};
        \node at (-4.3,0) [rotate=90, scale=1.2] {Test accuracy};
        \node at (0,-3.3) [scale=1.2] {\# of communications};
        \end{tikzpicture}\caption{$\alpha=0.0075$}
    \end{subfigure}
    \begin{subfigure}{0.4\textwidth}
    \centering
        \begin{tikzpicture}
        \node at (0,0) [scale=0.5]{\includegraphics[]{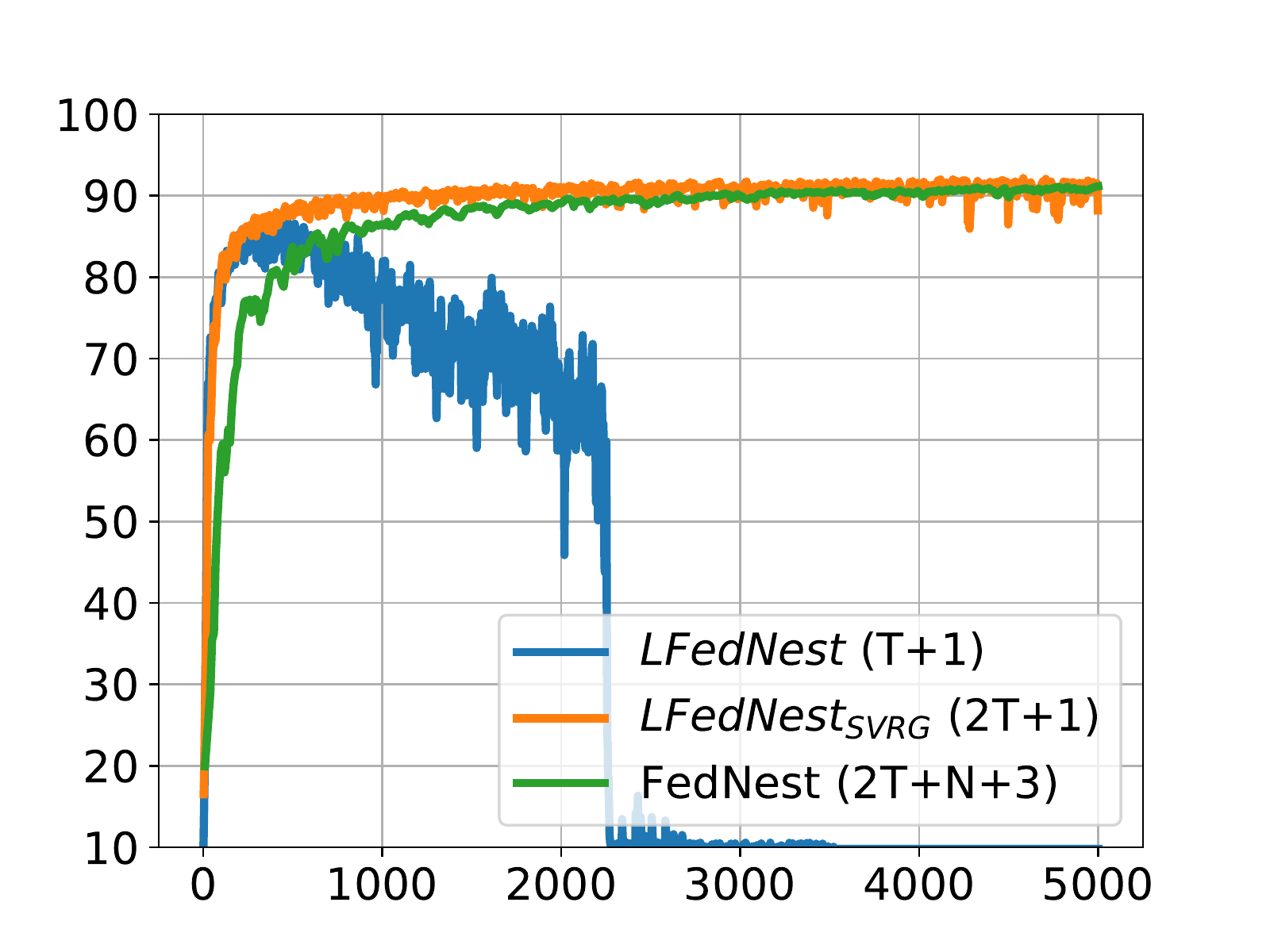}};
        \node at (0,-3.3) [scale=1.2] {\# of communications};
        \end{tikzpicture}\caption{$\alpha=0.005$}
    \end{subfigure}
    
    \begin{subfigure}{0.5\textwidth}
    \centering
        \begin{tikzpicture}
        \node at (0,0) [scale=0.5]{\includegraphics[]{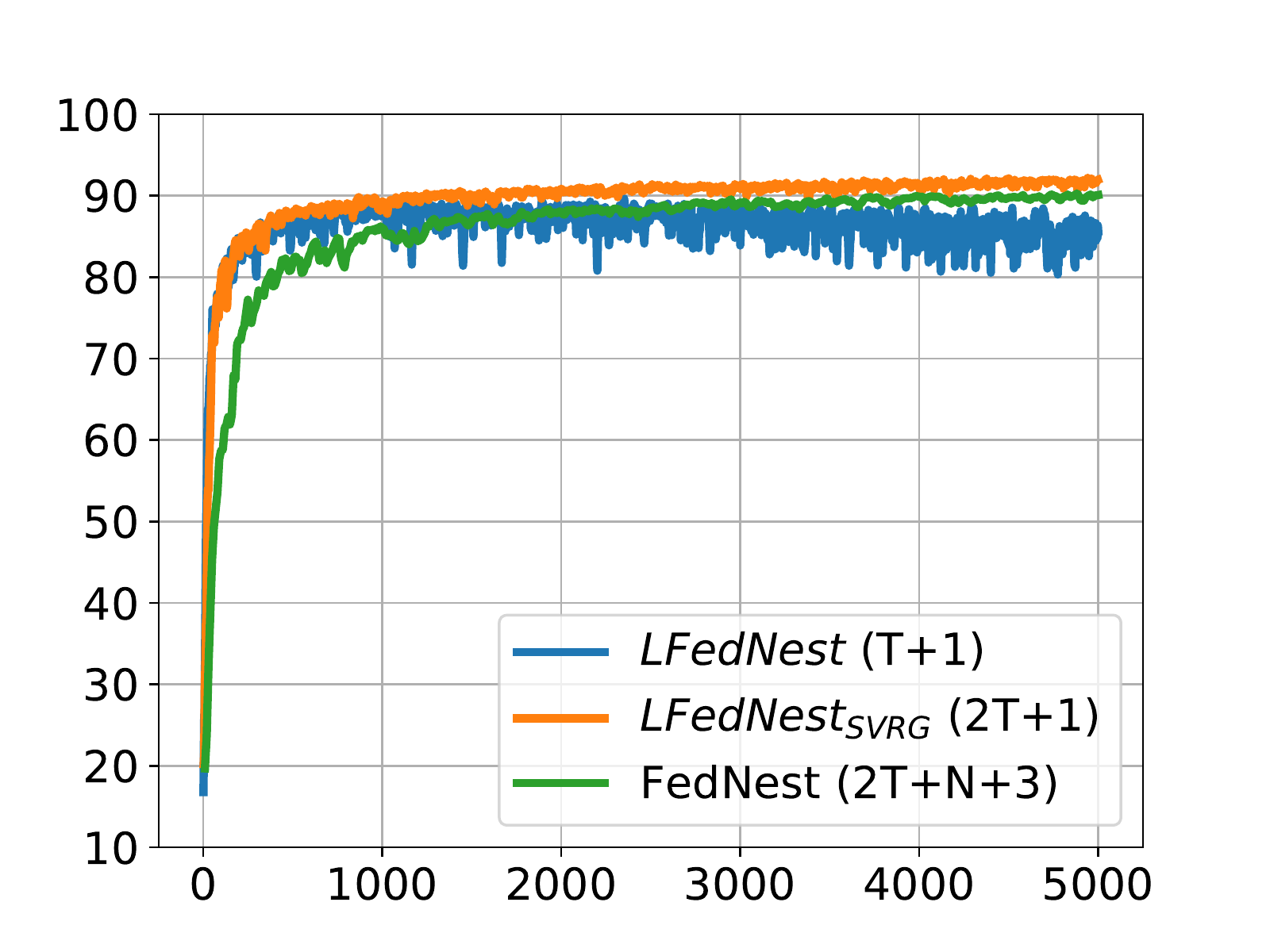}};
        \node at (-4.3,0) [rotate=90, scale=1.2] {Test accuracy};
        \node at (0,-3.3) [scale=1.2] {\# of communications};
        \end{tikzpicture}\caption{$\alpha=0.0025$}
    \end{subfigure}
    \begin{subfigure}{0.45\textwidth}
    \centering
        \begin{tikzpicture}
        \node at (0,0) [scale=0.5]{\includegraphics[]{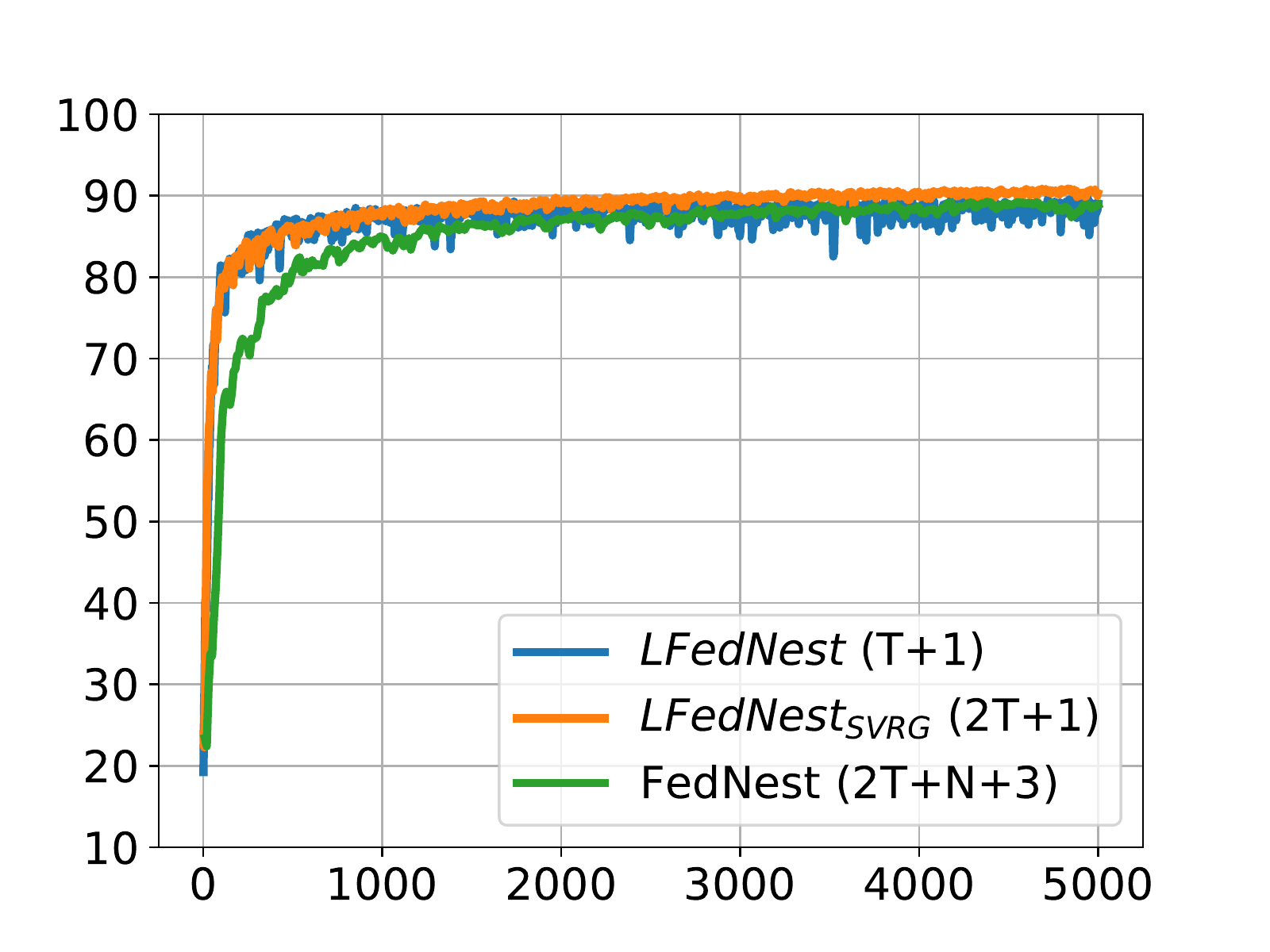}};
        \node at (0,-3.3) [scale=1.2] {\# of communications};
        \end{tikzpicture}\caption{$\alpha=0.001$}
    \end{subfigure}\caption{Learning rate analysis on \textbf{non-i.i.d.} data with respect to \textbf{\# of communications}.}\label{fig:app:lr_noniid}
\end{figure}
\subsection{The effect of the alternating between inner and outer global variables}\label{sec:joinvsalt} 
In our addition experiments, we investigate the effect of the alternating between inner and outer {global} variables $\m{x}$ and $\m{y}$. We use \textsc{LFedNest-NonAlt} to denote the training where each client updates their local $\y_{i}$ and then update local $\x_{i}$ w.r.t. local $\y_{i}$ for all $i \in\mc{S}$. Hence, the nested optimization is performed locally (within the clients) and the joint variable $[\x_{i},\y_{i}]$ is communicated with the server. One can notice that only one communication is conducted when server update global $\x$ and $\y$ by aggregating all $\x_{i}$ and $\y_{i}$. 

As illustrated in Figure~\ref{fig:app:noniid_comm}, the test accuracy of \textsc{LFedNest-NonAlt} remains around $80\%$, but both standard $\lfedblo$ and $\lfedblo_{\textnormal{\tiny{SVRG}}}$  achieves better performance. Here, the number of inner iterations is set to $T=1$. The performance boost reveals the necessity of both averaging and SVRG in \fedinn, where the extra communication makes clients more consistent.

\subsection{The effect of the learning rate and the global inverse Hessian} 
Figure~\ref{fig:app:lr_noniid} shows that on non-i.i.d. dataset, both SVRG and \fedout have the effect of stabilizing the training. Here, we set  $T=1$ and $N=5$. As we observe in (a)-(d), where the learning rate decreases, the algorithms with more communications are easier to achieve convergence. We note that \lfedblo successfully converges in (d) with a very small learning rate. In contrast, in (a), \fednest (using the global inverse Hessian) achieves better test accuracy in the same communication round with a larger learning rate.



\end{document}